\theoremstyle{plain}
\newtheorem{theorem}{Theorem}[section]
\newtheorem{proposition}[theorem]{Proposition}
\newtheorem{lemma}[theorem]{Lemma}
\newtheorem{corollary}[theorem]{Corollary}
\theoremstyle{definition}
\newtheorem{definition}[theorem]{Definition}
\theoremstyle{remark}
\def\bbN{\mathbb{N}}
\def\calA{\mathcal{A}}
\def\calB{\mathcal{B}}
\def\calC{\mathcal{C}}
\def\calD{\mathcal{D}}
\def\calE{\mathcal{E}}
\def\calI{\mathcal{I}}
\def\calK{\mathcal{K}}
\def\calL{\mathcal{L}}
\def\calM{\mathcal{M}}
\def\calN{\mathcal{N}}
\def\calO{\mathcal{O}}
\def\calS{\mathcal{S}}
\def\calU{\mathcal{U}}
\def\calV{\mathcal{V}}
\def\calW{\mathcal{W}}
\def\calX{\mathcal{X}}
\def\calY{\mathcal{Y}}
\def\calZ{\mathcal{Z}}
\newcommand{\abs}[1]{\left\vert #1\right\vert}
\newcommand{\rbr}[1]{\left(#1\right)}
\newcommand{\sbr}[1]{\left[#1\right]}
\newcommand{\cbr}[1]{\left\{#1\right\}}
\newcommand{\abr}[1]{\left\langle#1\right\rangle}
\def\norm#1{\|#1\|}
\def\argmax{\mathop{\rm arg\,max}}
\def\argmin{\mathop{\rm arg\,min}}
\def\half{\frac 1 2}
\newcommand{\R}{\mathbb{R}}
\newcommand{\into}{\rightarrow}
\newcommand{\w}{w}
\newcommand{\xk}{x_k}
\newcommand{\xbar}{\xbar{w}}
\newcommand{\bi}{{b_i}}
\newcommand{\wi}{w_\bi}
\newcommand{\bwi}{\bar{w}_\bi}
\newcommand{\vi}{v_\bi}
\newcommand{\ci}{c_\bi}
\newcommand{\zi}{z_\bi}
\newcommand{\Xbi}{X_\bi}
\newcommand{\act}{{\calA_{\lambda}}}
\newcommand{\equi}{{\calE_{\lambda}}}
\newcommand{\acts}{{\calA_{\lambda}^{*}}}
\newcommand{\wa}{w_\act}
\newcommand{\we}{w_\equi}
\newcommand{\Xa}{X_\act}
\newcommand{\Xas}{X_\acts}
\newcommand{\Xe}{X_\equi}
\newcommand{\wmin}{w^*}
\newcommand{\solfn}{\calW^*}
\newcommand{\Null}{\text{Null}}
\newcommand{\Row}{\text{Row}}
\newcommand{\Span}{\text{Span}}
\newcommand{\Ki}{K_{\bi}}
\newcommand{\Ka}{K_\act}
\newcommand{\ri}{\rho_{\bi}}
\newcommand{\ra}{\rho_{\act}}
\newcommand{\rmin}{\rho^*}
\newcommand{\diag}{\text{diag}}
\newcommand{\nnz}{\text{nnz}}
\tikzset{
    font={\fontsize{12pt}{12}\selectfont},
}
\pgfplotsset{
    compat=1.5.1,
    primary/.style={color=black, style=solid, line width=1.5pt}, 
    secondary/.style={color=red, style=solid, line width=1.5pt}, 
}
\icmltitlerunning{Optimal Sets and Solution Paths of ReLU Networks}
\begin{document}

\twocolumn[
	\icmltitle{Optimal Sets and Solution Paths of ReLU Networks}

	\icmlsetsymbol{equal}{*}

	\begin{icmlauthorlist}
		\icmlauthor{Aaron Mishkin}{cs_stanford}
		\icmlauthor{Mert Pilanci}{ee_stanford}
	\end{icmlauthorlist}

	\icmlaffiliation{cs_stanford}{Department of Computer Science, Stanford University}
	\icmlaffiliation{ee_stanford}{Department of Electrical Engineering, Stanford University}

	\icmlcorrespondingauthor{Aaron Mishkin}{amishkin@cs.stanford.edu}

	\icmlkeywords{Machine Learning, ICML}

	\vskip 0.3in
]



\printAffiliationsAndNotice{} 

\begin{abstract}
	We develop an analytical framework to characterize the set of optimal ReLU
	neural networks by reformulating the non-convex training problem as a convex
	program.
	We show that the global optima of the convex parameterization are given by
	a polyhedral set and then extend this characterization to the optimal set of the
	non-convex training objective.
	Since all stationary points of the ReLU training problem can be represented
	as optima of sub-sampled convex programs, our work provides a general expression
	for all critical points of the non-convex objective.
	We then leverage our results to provide an optimal pruning
	algorithm for computing minimal networks,
	establish conditions for the regularization path of ReLU networks
	to be continuous, and develop sensitivity results for
	minimal ReLU networks.
\end{abstract}

\section{Introduction}\label{sec:intro}


Neural networks have transformed machine learning.
Despite their success, little is known about
the global optima for typical non-convex training problems,
the solution path of regularized networks, or how to prune networks
without degrading the model fit.
This is in stark contrast to generalized linear models with \( \ell_2 \)
or \( \ell_1 \) penalties;
for example, it is well-known that the lasso \citep{tibshirani1996regression}
has a piece-wise linear path~\citep{osborne2000new, efron2004least},
a polyhedral solution set~\citep{tibshirani2013unique}, and
admits efficient algorithms for computing
minimal solutions~\citep{tibshirani2013unique}.
In this paper, we close the gap by studying neural networks
through the lens of convex reformulations.

\begin{figure}[t]
	\centering
	\includegraphics[width=0.5\textwidth]{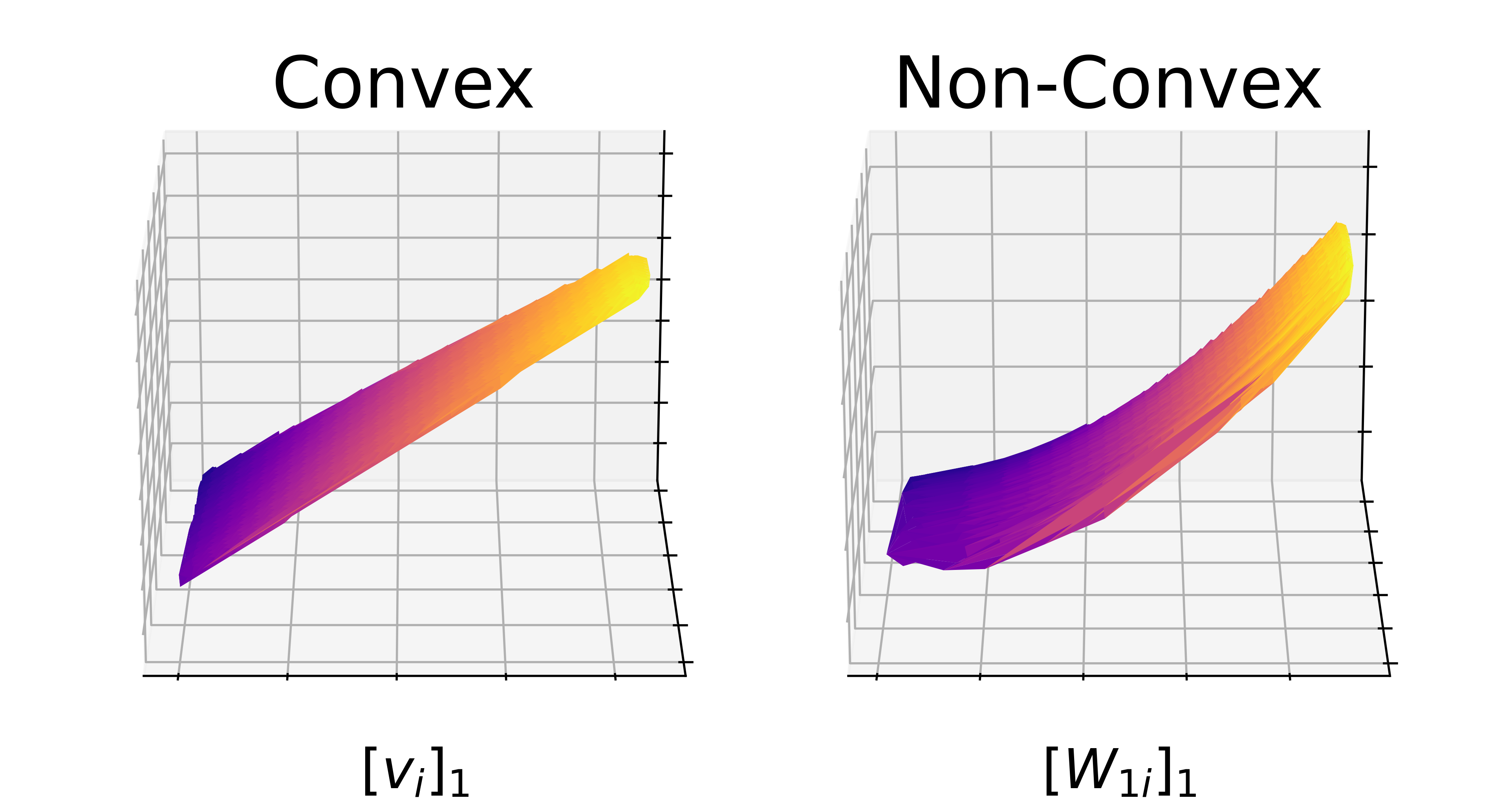}
	\vspace{-3ex}
	\caption{Convex vs non-convex solution spaces for two-layer ReLU
		networks. We plot the first feature
		of three different neurons;
		the non-convex parameterization maps the compact polytope of
		solutions for the convex parameterization into a curved manifold.
	}
	\label{fig:solution-set}
\end{figure}

One of the main challenges of neural
networks is non-convexity.
For non-convex problems, stationarity of the training objective does not
imply optimality of the network weights and so, to the best of our knowledge,
no work has been able to derive an analytical expression for the optimal set.
Convex reformulations provide a solution by rewriting the non-convex
optimization problem as a convex program in a lifted parameter space
\citep{pilanci2020convexnn}.
We focus on the convex reformulation for two-layer networks with ReLU activation
functions and weight decay regularization.
The resulting problem is related to the group lasso \citep{yuan2006model}
and induces \emph{neuron sparsity} in the network.

Let \( Z \!\in\! \R^{n \times d} \) be a data matrix and \( y \!\in\! \R^n \) associated targets.
The prediction function for two-layer ReLU networks is
\begin{equation*}
	f_{W_1, w_2}(Z) = \sum_{i=1}^m \rbr{Z W_{1i}}_+ w_{2i},
\end{equation*}
where \( W_1 \in \R^{m \times d} \), \( w_2 \in \R^m \) are the weights of
the first and second layers, \( m \) is the number of hidden units,
and \( \rbr{\cdot}_+ = \max\cbr{\cdot, 0} \) is the ReLU activation.
Fitting \( f_{W_1, w_2} \) with convex loss \( L \)
and weight decay (\( \ell_2 \)) regularization leads to the
standard non-convex optimization problem:
\begin{equation}\label{eq:non-convex-relu-mlp}
	\min_{W_1, w_2} \! L\Big(f_{W_1, w_2}(Z), y \Big) + \frac{\lambda}{2} \underbrace{(\norm{W_{1}}_F^2 + \norm{w_{2}}_2^2)}_{R(W_1, w_2)}.\!
\end{equation}
The regularization path or \emph{solution function} of this training problem
is the mapping between the regularization parameter \( \lambda \) and the set of
optimal model weights,
\begin{equation}\label{eq:solution-function}
	\calO^*(\lambda) = \argmin_{W_1, w_2}  L\Big(f_{W_1, w_2}(Z), y \Big)
	+ \frac{\lambda}{2} R(W_1, w_2).\!
\end{equation}
In general, the optimal neural network is not unique and \( \calO^*(\lambda) \)
will be set valued.
Indeed, there are always at least \( m! \) solutions since any permutation of
the hidden units yields an identical model.
We call the solution to a ReLU training \emph{p-unique} when it is unique
up to splits and permutations.

We study \( \calO^* \) by re-writing
\cref{eq:non-convex-relu-mlp} as an instance of the
\emph{constrained group lasso} (CGL).
We make the following contributions by analyzing CGL:\

\begin{itemize}
	\item We derive an analytical expression for the solution set of two-layer
	      ReLU networks and criteria for solutions to be p-unique (unique up to
	      permutations/splits).
	      \vspace{-0.5ex}

	\item We extend this characterization to show that the set of
	      stationary points of a two-layer ReLU model is exactly
	      \vspace{-0.5ex}
	      \begin{equation}\label{eq:stationary-points}
		      \begin{aligned}
			      \calC_\lambda =
			       & \big\{
			      (W_1, \! w_2) : \tilde \calD \! \subseteq \! \calD_Z,
			      \, f_{W_1, w_2}(Z) \! = \! \hat y_{\tilde \calD},                              \\
			       & W_{1i} = (\sfrac{\alpha_{i}}{\lambda})^{\sfrac{1}{2}} v_i(\tilde \calD), \,
			      w_{2i} = \xi_i(\alpha_i \lambda)^{\sfrac{1}{2}},                               \\
			       & \alpha_i \geq 0, \, i \in [m] \setminus \calS_\lambda \implies \alpha_i = 0
			      \big\},
		      \end{aligned}
	      \end{equation}
	      where \( \tilde \calD \) is a set of sub-sampled activation patterns,
	      \( \hat y_{\tilde \calD} \) is the unique optimal model fit using
	      those patterns, and \( v_i(\tilde \calD) \) are uniquely given by optimal
	      parameters for the dual of the convex reformulation.
	      See \cref{fig:solution-set}.
	      \vspace{-0.5ex}

	\item We provide an optimal pruning algorithm that can be used to compute
	      minimal models --- the smallest-width neural networks which are optimal
	      for a given dataset and regularization parameter --- and an intuitive
	      extension for pruning beyond minimal models.
	      \vspace{-0.5ex}

	\item We prove that the regularization path of ReLU networks is discontinuous
	      in general and establish sufficient conditions for path to be
	      closed/continuous.
	      \vspace{-0.5ex}

	\item We give a simple algorithm for computing the unique ReLU network
	      corresponding to the min-norm model in the convex lifting and,
	      under additional constraint qualifications, develop differential
	      sensitivity
	      results for minimal ReLU networks.
	      \vspace{-0.5ex}
\end{itemize}
In many cases, we obtain strictly stronger results for gated ReLU networks
\citep{fiat2019decoupling},
which correspond directly to an unconstrained group lasso problem \citep{mishkin2022convex}.
In particular, we give new sufficient conditions for (i) the group lasso
to be unique, (ii) global continuity of the group lasso model fit, and (iii)
weak differentiability of the solution function for gated ReLU networks.

The paper is structured as follows: we cover related work in
\cref{sec:related-work}
and introduce notation in \cref{sec:notation}.
Then we provide details for convex reformulations of neural network
in \cref{sec:convex-reformulations}.
\cref{sec:cgl} analyzes CGL and
\cref{sec:specialization} interprets these results in the specific context of
two-layer ReLU networks.
\cref{sec:experiments} concludes with experiments.

\subsection{Related Work}\label{sec:related-work}


\textbf{The Lasso and Group Lasso}:
Our work is most similar to \citet{hastie2007forward},
who consider homotopy methods, and
\citet{tibshirani2013unique}, who characterize the lasso solution set.
Limited results exist beyond the lasso.
\citet{tibshirani2011solution} analyze the generalized lasso,
while \citet{yuan2006model} show the group lasso is piece-wise linear
when \( X \) is orthogonal.
\citet{roth2008group} partially characterize the group lasso solution set,
while \citet{vaiter2012dof} derive stability results and the degrees-of-freedom.

\textbf{Convex Reformulations}:
Convex reformulations for neural networks have rapidly advanced
since \citet{pilanci2020convexnn};
convolutions \citep{ergen2021implicit, gupta2021exact},
vector-outputs \citep{sahiner2021vector}, batch-norm
\citep{ergen2021demystifying}, and deeper networks \citep{ergen2021beyond}
have all been explored.

\textbf{Neural Network Solution Sets}:
Characterizations of solution sets are largely empirical.
Mode connectivity has been studied extensively,
\citet{garipov2018mode, draxler2018barriers}.
\citet{nguyen2019connected, kuditipudi2019connectivity}
attempt to theoretically explain mode connectivity.
Sensitivity is connected to differentiable optimization
layers \citep{agrawal2019differentiable} and hypergradient descent
\citep{baydin2017online}.
We refer to \citet{bialock2020pruning} for an overview on pruning.

\subsection{Notation}\label{sec:notation}


We use lower-case \( a \) to denote vectors and upper-case \( A \) for matrices.
For \( d \in \bbN \), \( [d] = \cbr{1, \ldots, d} \).
Calligraphic letters \( \calC \) denote sets.
For a block of indices \( \bi \subseteq [d] \),
we write \( A_{\bi} \) for the sub-matrix of columns indexed by \( \bi \).
Similarly, \( a_{\bi} \) is the sub-vector indexed by \( \bi \).
If \( \calM \) is a collection of blocks, then \( A_\calM \) is the submatrix
and \( a_\calM \) the sub-vector with columns/elements indexed by blocks in
the collection.
Finally, \( \abs{\calM} \) is cardinality of the union of blocks in \( \calM \).

\section{Convex Reformulations}\label{sec:convex-reformulations}

Now we introduce background on convex reformulations.
Convex reformulations re-write \cref{eq:non-convex-relu-mlp} as a convex
program by enumerating the activations a single neuron in the
hidden layer can take on for fixed \( Z \) as follows:
\[ \calD_Z = \cbr{D = \diag(\mathbbm{1}(Z u \geq 0)) : u \in \R^d}. \]
This set grows as \(|\calD_Z| \le O(r(n/r)^r)\), where \(r := \text{rank}(Z)\)
\citep{pilanci2020convexnn}.
Each ``activation pattern'' \( D_i \in \calD_{Z} \) is associated with
a convex cone,
\[
	\calK_i = \cbr{u \in \R^d : (2D_i - I) Z u \succeq 0}.
\]
If \( u \in \calK_i \), then \( u \) matches \( D_i \), meaning \( D_i Z u = \rbr{Z u}_+ \).
For any subset \( \tilde \calD \subseteq \calD_Z \),
the convex reformulation is,
\begin{equation}\label{eq:convex-relu-mlp}
	\begin{aligned}
		\!\!\! \min_{v, w} \, & L\Big(\!\!\sum_{D_i \in \tilde \calD}\!\! D_i Z (v_i \!-\! u_i), y\Big)\!+\!\lambda\!\sum_{D_i \in \tilde \calD}\!\norm{v_i}_2\!+\!\norm{u_i}_2 \\
		                      & \text{s.t.} \quad v_i, u_i \in \calK_i.
	\end{aligned}
\end{equation}
\citet{pilanci2020convexnn} prove that this program and \cref{eq:non-convex-relu-mlp}
are equivalent in the following sense:
if \( \tilde \calD = \calD_Z \) and \( m \geq m^* \) for some \( m^* \leq n + 1 \),
then the two programs have the same optimal value and every solution to the convex program
can be mapped to a solution of the non-convex training problem and vice-versa.
Given a solution \( \rbr{v^*, u^*} \), optimal weights
for the ReLU problem are given by
\begin{equation}\label{eq:convex-to-relu}
	\begin{aligned}
		W_{1i} & = v_i^*/ \sqrt{\norm{v_i^*}}, \quad w_{2i} = \sqrt{\norm{v_i^*}}
		\\
		W_{1j} & = u_i^*/ \sqrt{\norm{u_i^*}}, \quad w_{2j} = -\sqrt{\norm{u_i^*}},
	\end{aligned}
\end{equation}
where we use the convention that \( 0/0 = 0 \).

In practice, learning with \( \calD_{Z} \) is intractable except when the data
are low rank.
\citet{mishkin2022convex} provide refined conditions on \( \tilde \calD \)
which are sufficient for \cref{eq:convex-relu-mlp} to be equivalent to the
non-convex problem, while \citet{wang2021hidden} show that
the minimum of every sub-sampled convex program is a stationary point of the
ReLU training problem.

\subsection{Gated ReLU Networks}

An alternative is the gated ReLU activation function,
\[
	\phi_{g}(Z, u) = \text{diag}(\mathbbm{1}(Z g \geq 0)) Z u,
\]
where \( g \in \R^d \) is a ``gate'' vector, which is also optimized.
The gated ReLU activation modifies the ReLU activation to
decouple the thresholding operator from the neuron weights.
Two-layer gated ReLU networks predict as follows:
\begin{equation}\label{eq:non-convex-gated-relu-mlp}
	h_{W_1, w_2}(Z) = \sum_{i=1}^m \phi_{g_i}(Z, W_{1i}) w_{2i}.
\end{equation}
\citet{mishkin2022convex} show that this gated ReLU neural network has
the convex reformulation,
\begin{equation}\label{eq:convex-gated-relu-mlp}
	\min_{u} \; L\Big(\sum_{D_i \in \tilde \calD} D_i Z w_i, y\Big)
	+ \lambda \sum_{D_i \in \tilde \calD} \norm{w_i}_2,
\end{equation}
where decoupling the activations from the neuron weights
allows \( u_i, v_i \in \calK_i \) to be merged.
The solution mapping for \( w^* \) and conditions for
for the convex program to be equivalent to
\cref{eq:non-convex-gated-relu-mlp} are similar to the ReLU case.


\section{The Constrained Group Lasso}\label{sec:cgl}


In this section, we develop properties of CGL,
a generalized linear model which captures both the convex ReLU and convex gated
ReLU programs.
Let \( \calB = \cbr{b_1, \ldots, b_m} \) be a disjoint partition of the
feature indices \( [d] \).
Given regularization parameter \( \lambda \geq 0 \),
CGL solves the program:
\begin{equation}\label{eq:constrained-gl}
	\begin{aligned}
		p^*(\lambda) \!=\! \min_{w} \,
		 & F_\lambda(w) \!:=\! \half \norm{X w \!-\! y}_2^2 + \lambda \sum_{\bi \in \calB} \norm{\wi}_2 \\
		 & \quad \text{s.t.} \quad \Ki^\top \wi \leq 0  \text{ for all } \bi \in \calB,
	\end{aligned}
\end{equation}
where \( \Ki \in \R^{|\bi| \times a_\bi} \).
Solutions to \cref{eq:constrained-gl} are block sparse when \( \lambda \)
is sufficiently large, meaning \( \wi = 0 \) for a subset of blocks.
This is similar to the feature sparsity given by the lasso,
to which CGL naturally reduces when \( \bi = \cbr{i} \)
and \( \Ki = 0 \) for each \( \bi \in \calB \).
Although we consider squared-error, our results generalize to
strictly convex losses --- see \cref{app:general-losses} for comments.

The convex reformulations introduced in the previous section are instances
of CGL using the basis function \( X = [D_1 Z \ldots D_p Z] \),
where \( p = |\tilde \calD_Z| \).
For gated ReLU models, \( \Ki = 0 \) while ReLU models set
\( \Ki = - Z^\top (2 D_i - I) \).
For both problems, block sparsity from the group \( \ell_1 \) penalty
induces neuron sparsity in the final solution.

Our goal is to characterize the solution function of CGL,
\[
	\solfn(\lambda) := \argmin_{w \, : \, K_\bi^\top \wi \leq 0} F_\lambda(w)
\]
For a general data matrix, \( F_\lambda \) is not strictly convex and
CGL may admit multiple solutions --- these correspond to networks which
are not related by permutation.
As such, \( \solfn \) is a point-to-set map and
we must use a criterion to define a function;
for instance, the min-norm solution mapping
\[
	\wmin(\lambda) = \argmin \cbr{ \norm{w}_2 : w \in \solfn(\lambda) },
\]
defines a function for all \( \lambda \geq 0 \).

Now we introduce notation that will be used throughout this section.
Let \( \hat y(\lambda) = X w \) for \( w \in \solfn(\lambda) \) denote
the optimal model fit, which is the same for any choice of optimal \( w \)
(\cref{lemma:unique-fit-cgl}).
Similarly, define the optimal residual \( r(\lambda) := y - \hat y(\lambda) \)
and \( \ci(\lambda) := \Xbi^\top r(\lambda) \) as the
correlation vector for block \( \bi \).
We write \( c \in \R^d \) for the concatenation
of these block-vectors.
Finally, let \( \ri \) be the dual parameters for the constraint
\( \Ki^\top \wi \leq 0 \), \( \rho \) their concatenation,
and \( K \) the block-diagonal matrix with blocks given by \( \Ki \).

The Lagrangian associated with \cref{eq:constrained-gl} is
\begin{equation}\label{eq:lagrangian-cgl}
	\calL(w, \rho) = \half \norm{X w - y}_2^2 + \lambda \sum_{\bi \in \calB} \norm{\wi}_2 + \abr{K \rho, w}.
\end{equation}
The constraints are linear, strong duality attains if feasibility holds,
and the necessary and sufficient conditions for primal-dual pair \( (w, \rho) \)
to be optimal are the KKT conditions:
\begin{equation}\label{eq:kkt-cgl}
	\begin{aligned}
		\Xbi^\top (X w - y) + \Ki \ri + s_\bi & = 0                                \\
		\Ki^\top \wi                          & \leq 0                             \\
		[\ri]_j \cdot [\Ki]_j^\top \wi        & = 0 \quad \forall \, j \in [a_\bi] \\
		\ri                                   & \geq 0,
	\end{aligned}
\end{equation}
where \( s_\bi \in \partial \lambda \norm{\wi}_2 \).
Since the KKT conditions hold for every combination of optimal primal-dual
pair \citep{bv2014convex}, we always use the min-norm dual
optimal parameter \( \rmin \) with no loss of generality.
To simplify our notations, we define \( \vi := \ci - \Ki \rmin_\bi \).
In what follows, all proofs are deferred to \cref{app:cgl}.

\subsection{Describing the Optimal Set}

Stationary of the Lagrangian implies the equicorrelation set
\[
	\equi = \cbr{ \bi \in \calB : \norm{\vi}_2 = \lambda }.
\]
\vspace{-0.5ex}%
contains all blocks which may be active for fixed \( \lambda \).
That is, the active set \( \act(w) = \cbr{\bi : \wi \neq 0} \) satisfies
\( \act(w) \subseteq \equi \) for every \( w \in \solfn(\lambda) \).
However, not all blocks in \( \equi \) may be non-zero for some solution.
Thus, we define
\[
	\calS_\lambda = \cbr{\bi \in \calB : \exists w \in \solfn(\lambda), \wi \neq 0},
\]
which is the set of blocks supported by some solution.

Our first result combines KKT conditions with uniqueness of \( \hat y(\lambda) \)
to characterize the solution set for fixed \( \lambda > 0 \).
\begin{restatable}{proposition}{solFnCGL}\label{prop:sol-fn-cgl}
	Fix \( \lambda > 0 \).
	The optimal set for the CGL problem is
	given by
	\begin{equation}\label{eq:sol-fn-cgl}
		\begin{aligned}
			\solfn(\lambda) \!=\!
			\big\{ & w \! \in \! \R^d \!:\!
			\forall \, \bi \! \in \! \calS_\lambda,
			\wi \!= \! \alpha_\bi \vi, \alpha_\bi \!\geq\! 0, \,             \\
			       & \quad \forall \, b_j \in \calB \setminus \calS_\lambda,
			\w_{b_j} = 0, \, X w = \hat y
			\big\}
		\end{aligned}
	\end{equation}
\end{restatable}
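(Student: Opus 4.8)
The plan is to establish the two set inclusions separately, in each case testing membership against the KKT system \cref{eq:kkt-cgl} paired with the fixed min-norm dual $\rmin$. The fact I will lean on throughout is that $\vi = \ci - \Ki\rmin_\bi$ is a single vector attached to each block $\bi$, the same for every optimal point: the residual $r(\lambda) = y - \hat y$, hence $\ci = \Xbi^\top r(\lambda)$, is common to all solutions by \cref{lemma:unique-fit-cgl}, and $\rmin$ is one fixed dual that may be paired with any optimal primal point. Reading the stationarity equation at an optimal $w$, where $Xw = \hat y$, gives the pointwise identity $s_\bi = \Xbi^\top(y - Xw) - \Ki\rmin_\bi = \ci - \Ki\rmin_\bi = \vi$, so the subdifferential membership $\vi \in \partial \lambda \norm{\wi}_2$ is the engine driving both directions.

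For the forward inclusion (every solution has the stated form), take $w \in \solfn(\lambda)$ so that $Xw = \hat y$ and $s_\bi = \vi$ as above. If $\wi \neq 0$, then $\partial\lambda\norm{\wi}_2 = \{\lambda\wi/\norm{\wi}_2\}$ is a singleton, forcing $\vi = \lambda\wi/\norm{\wi}_2$; hence $\wi = (\norm{\wi}_2/\lambda)\vi = \alpha_\bi\vi$ with $\alpha_\bi > 0$, and incidentally $\norm{\vi}_2 = \lambda$. If $\wi = 0$, then trivially $\wi = 0\cdot\vi$, i.e. $\alpha_\bi = 0$. Blocks outside $\calS_\lambda$ vanish for every solution by definition of $\calS_\lambda$, and $Xw = \hat y$ holds throughout, so $w$ lies in the claimed set.

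The reverse inclusion is the substantive one: given $w$ of the stated form I verify all four KKT conditions for the pair $(w,\rmin)$. Stationarity again reduces to $\vi \in \partial\lambda\norm{\wi}_2$; when $\alpha_\bi > 0$ the unit vector $\wi/\norm{\wi}_2$ equals $\vi/\lambda$ (using $\norm{\vi}_2 = \lambda$, valid since $\calS_\lambda \subseteq \equi$), while when $\wi = 0$ I need only $\norm{\vi}_2 \le \lambda$, which holds for every block because reading stationarity at a solution that zeroes that block yields exactly this bound. The two constraint conditions are transferred from a witness: each $\bi \in \calS_\lambda$ is supported by some $w^{(\bi)} \in \solfn(\lambda)$ with $w^{(\bi)}_\bi = \alpha^{(\bi)}_\bi\vi$ and $\alpha^{(\bi)}_\bi > 0$ (by the forward inclusion), so dividing $\Ki^\top w^{(\bi)}_\bi \le 0$ by $\alpha^{(\bi)}_\bi$ gives $\Ki^\top\vi \le 0$, and dividing $[\rmin_\bi]_j[\Ki]_j^\top w^{(\bi)}_\bi = 0$ gives $[\rmin_\bi]_j[\Ki]_j^\top\vi = 0$; rescaling by $\alpha_\bi \ge 0$ recovers primal feasibility and complementary slackness for $w$. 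Dual feasibility $\rmin \ge 0$ is immediate, so $(w,\rmin)$ is a KKT pair and $w \in \solfn(\lambda)$.

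The main obstacle is precisely this transfer of the inequality constraint and its complementary slackness across different solutions. It succeeds only because $\vi$ is a fixed direction shared by all optima and the witnessing block $w^{(\bi)}_\bi$ is a strictly positive multiple of it, which lets me cancel and then reintroduce the nonnegative scalar using linearity of the constraint. I will be careful to invoke that KKT holds for any optimal primal paired with the single dual $\rmin$, so that one common $\rmin$ can be used for every block rather than a solution-dependent dual.
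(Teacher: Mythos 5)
Your proposal is correct and follows essentially the same route as the paper's proof: the forward direction reads off stationarity of the Lagrangian to get $\wi = \alpha_\bi \vi$, and the reverse direction verifies the full KKT system for $(w,\rmin)$, transferring primal feasibility and complementary slackness from a witness solution supporting each block in $\calS_\lambda$ by exploiting that $\wi$ is a nonnegative multiple of the fixed direction $\vi$. The one point you handle slightly more explicitly than the paper --- that $\norm{\vi}_2 \leq \lambda$ for blocks zeroed by the candidate, split between $\calS_\lambda \subseteq \equi$ and its complement --- is a welcome clarification rather than a deviation.
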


\vspace{-1.5ex}%
Since this characterization is implicit due to the dependence
on \( \calS_\lambda \), we also give an alternative and explicit
construction in \cref{prop:cgl-alternate}, which shows that when \( K = 0 \)
we may replace \( \calS_\lambda \)
with \( \equi \);
We prefer \cref{prop:sol-fn-cgl} to \cref{prop:cgl-alternate} since it
better mirrors this simpler setting.
However, \cref{prop:cgl-alternate} can be substituted wherever desired.

Now that we know ``shape'' of the solution set, it is
possible to obtain simple conditions for existence of a unique solution.
As an immediate consequence of \cref{prop:sol-fn-cgl},
the solution map is a subset of directions in \( \Null(\Xe) \).
\begin{corollary}\label{cor:uniqueness-cond-cgl}
	If \( w, w' \in \solfn(\lambda) \) and \( z' = w - w' \), then
	\[
		z_\equi' \in \calN_\lambda \! := \! \Null(\Xe) \cap \cbr{z_\equi :  \forall \bi \in \equi, \, \zi \! = \! \alpha_\bi \vi}.
	\]
	As a result, the group lasso solution is unique if \( \calN_\lambda = \cbr{0} \).
\end{corollary}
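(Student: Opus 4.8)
The plan is to read off the shape of both solutions from \cref{prop:sol-fn-cgl}, form the difference $z' = w - w'$, and then verify directly that its restriction $z'_\equi$ satisfies the two defining conditions of $\calN_\lambda$: membership in $\Null(\Xe)$, and the block-wise collinearity $z'_\bi = \alpha_\bi \vi$ for every $\bi \in \equi$. Throughout I would use the partition of $\calB$ into three regimes, $\calS_\lambda$, $\equi \setminus \calS_\lambda$, and $\calB \setminus \equi$ (valid since $\act(w) \subseteq \equi$ for every solution gives $\calS_\lambda \subseteq \equi$), and handle each regime separately.

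First I would establish the null-space condition. Because the optimal fit is common to all solutions (\cref{lemma:unique-fit-cgl}), we have $Xw = Xw' = \hat y$ and hence $Xz' = 0$. For any block $\bi \in \calB \setminus \equi$, the inclusion $\act(w) \subseteq \equi$ forces $\wi = 0$ and $w'_\bi = 0$, so $z'_\bi = 0$ there; consequently $Xz' = \Xe z'_\equi$, and the vanishing of $Xz'$ yields $z'_\equi \in \Null(\Xe)$.

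Next I would verify the collinearity condition block by block over $\bi \in \equi$. For $\bi \in \calS_\lambda$, \cref{prop:sol-fn-cgl} gives $\wi = \alpha_\bi \vi$ and $w'_\bi = \alpha'_\bi \vi$, both nonnegative multiples of the same vector $\vi$, so their difference is the scalar multiple $(\alpha_\bi - \alpha'_\bi)\vi$; note the sign of this scalar is unconstrained, which matches $\calN_\lambda$ asking only for collinearity. For $\bi \in \equi \setminus \calS_\lambda$, the definition of $\calS_\lambda$ forces $\wi = w'_\bi = 0$, so $z'_\bi = 0 = 0\cdot\vi$. Collecting both cases shows $z'_\bi$ is collinear with $\vi$ for every $\bi \in \equi$, and combined with the null-space condition this gives $z'_\equi \in \calN_\lambda$. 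The uniqueness consequence then follows immediately: if $\calN_\lambda = \cbr{0}$ then $z'_\equi = 0$, so $\wi = w'_\bi$ for all $\bi \in \equi$, and together with $z'_\bi = 0$ for $\bi \notin \equi$ this forces $w = w'$.

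The argument is essentially a direct consequence of \cref{prop:sol-fn-cgl}, so the only genuine care is the bookkeeping across the three regimes: since $\calN_\lambda$ is defined over the full equicorrelation set $\equi$ rather than over the support $\calS_\lambda$, I expect the main (if modest) obstacle to be correctly treating the \emph{equicorrelated but unsupported} blocks $\equi \setminus \calS_\lambda$, whose vanishing in both solutions is exactly what lets the collinear structure extend from $\calS_\lambda$ to all of $\equi$.
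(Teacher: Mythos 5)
Your proof is correct and matches the paper's intent: the paper offers no separate argument for this corollary, presenting it as an immediate consequence of \cref{prop:sol-fn-cgl} combined with uniqueness of the fit (\cref{lemma:unique-fit-cgl}), which is precisely the direct verification you carry out. Your explicit handling of the three regimes, in particular the blocks in \( \equi \setminus \calS_\lambda \) where both solutions vanish, correctly fills in the bookkeeping the paper leaves implicit.
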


\cref{cor:uniqueness-cond-cgl} extends a similar result for the lasso
to CGL \citep[Eq. 9]{tibshirani2013unique} and
implies the solution is unique for all \( \lambda \geq 0 \) if
the columns of \( X \) are linearly independent.
The corollary also provides a simple check
for primal uniqueness given a primal-dual solution pair.

\begin{restatable}{lemma}{uniqueCGL}\label{lemma:unique-cgl}
	Fix \( \lambda > 0 \).
	The solution to CGL problem is unique
	if and only if \( \cbr{\Xbi \vi}_{\calS_\lambda} \)
	are linearly independent.
\end{restatable}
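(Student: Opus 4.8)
The plan is to reduce uniqueness of the primal variable $w$ to uniqueness of a nonnegative coefficient vector, after which linear independence can be read off directly. By \cref{prop:sol-fn-cgl}, every $w \in \solfn(\lambda)$ is determined by scalars $\alpha_\bi \geq 0$ via $\wi = \alpha_\bi \vi$ on $\calS_\lambda$ and $\wi = 0$ off $\calS_\lambda$, subject to the fitting constraint $X w = \hat y$. Since $\calS_\lambda \subseteq \equi$ forces $\norm{\vi}_2 = \lambda > 0$, the map $(\alpha_\bi)_{\bi \in \calS_\lambda} \mapsto w$ is injective, so I would identify $\solfn(\lambda)$ with the set of nonnegative solutions $\alpha$ of the linear system $\sum_{\bi \in \calS_\lambda} \alpha_\bi \Xbi \vi = \hat y$. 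Uniqueness of $w$ is then equivalent to this system having a unique nonnegative solution.

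For the \emph{if} direction, suppose $\cbr{\Xbi \vi}_{\calS_\lambda}$ are linearly independent. Then the affine system above has at most one solution over all of $\R^{\abs{\calS_\lambda}}$ — in particular at most one nonnegative one — since any two solutions differ by an element of the (trivial) kernel. As $\solfn(\lambda)$ is nonempty, exactly one such $\alpha$ exists, so $w$ is unique.

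For the \emph{only if} direction I would argue the contrapositive: if the vectors are linearly dependent, I construct two distinct optima. The key step is to produce a single optimal $\bar w$ whose coefficients $\bar\alpha_\bi$ are strictly positive for every $\bi \in \calS_\lambda$. By the definition of $\calS_\lambda$, for each $\bi$ there is a solution $w^{(\bi)}$ with $w^{(\bi)}_\bi \neq 0$, i.e.\ $\alpha^{(\bi)}_\bi > 0$; averaging these finitely many solutions and using convexity of $\solfn(\lambda)$ (it is the minimizer set of a convex program) yields $\bar w$ with $\bar\alpha_\bi > 0$ for all $\bi \in \calS_\lambda$. Now choose a nonzero $\delta$ with $\sum_{\bi} \delta_\bi \Xbi \vi = 0$. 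For all sufficiently small $\abs{t}$ the perturbed coefficients $\bar\alpha_\bi + t \delta_\bi$ stay nonnegative (each $\bar\alpha_\bi > 0$) and still satisfy the fitting constraint, so they define an optimal $w(t)$; because $\delta \neq 0$ and each $\vi \neq 0$, $w(t) \neq \bar w$ for $t \neq 0$, contradicting uniqueness.

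The main obstacle is precisely the construction of the strictly interior solution $\bar w$: uniqueness can only fail through perturbations that keep every coefficient nonnegative, so I must exhibit one optimum that simultaneously activates all of $\calS_\lambda$. This is where the definition of $\calS_\lambda$ (each block supported by some solution) together with convexity of the optimal set does the work; the remaining feasibility and distinctness checks are routine.
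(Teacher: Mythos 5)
Your proof is correct and follows essentially the same route as the paper: the sufficiency direction is the same kernel computation (two distinct optima would give a nontrivial dependence \( \sum_{\bi}(\alpha_\bi - \alpha_\bi')\Xbi\vi = 0 \) via \cref{prop:sol-fn-cgl}), and your necessity argument --- averaging witnesses to obtain a solution with strictly positive coefficients on all of \( \calS_\lambda \), then perturbing along a dependence direction while staying nonnegative --- is exactly the paper's \cref{lemma:maximal-sol} combined with the perturbation underlying \cref{alg:pruning-solutions}, which the paper cites rather than spells out. If anything, your version is slightly more self-contained, since the paper's one-line appeal to the pruning algorithm leaves implicit the step converting linear dependence of \( \cbr{\Xbi\vi}_{\calS_\lambda} \) into dependence on the active set of some particular solution.
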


Note that a dual solution \( \rho \) is necessary to compute \( v \) in general;
By uniformizing over \( \vi \), we obtain a stronger condition that can be checked
whenever \( \equi \) is known, yet is still
weaker than linear independence of the columns
of \( X \).

\begin{corollary}
	If the columns of \( X_\equi \) are linearly independent,
	then the CGL problem has a unique solution.
\end{corollary}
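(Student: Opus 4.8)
The plan is to deduce this corollary directly from \cref{lemma:unique-cgl} by showing that linear independence of the columns of \( \Xe \) forces the family \( \cbr{\Xbi \vi}_{\calS_\lambda} \) to be linearly independent; uniqueness then follows immediately from that lemma. The starting observation is that every supported block lies in the equicorrelation set, i.e.\ \( \calS_\lambda \subseteq \equi \) (recall \( \act(w) \subseteq \equi \) for every optimal \( w \)), so each \( \Xbi \) appearing in the family is a column block of \( \Xe \). Moreover, for \( \bi \in \equi \) the defining condition \( \norm{\vi}_2 = \lambda > 0 \) guarantees \( \vi \neq 0 \), a fact I will need to convert nontriviality of coefficient vectors into nontriviality of scalars.

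Next I would exploit the disjointness of the partition \( \calB \). Because the blocks index disjoint sets of columns, a candidate relation \( \sum_{\bi \in \calS_\lambda} \gamma_\bi \Xbi \vi = 0 \) can be rewritten as \( \Xe z = 0 \), where \( z \) is the concatenation whose block \( \bi \) equals \( \gamma_\bi \vi \) for \( \bi \in \calS_\lambda \) and is zero on \( \equi \setminus \calS_\lambda \). Linear independence of the columns of \( \Xe \) means \( \Null(\Xe) = \cbr{0} \), so \( z = 0 \) and hence \( \gamma_\bi \vi = 0 \) for every \( \bi \in \calS_\lambda \). Since \( \vi \neq 0 \) on this set, each \( \gamma_\bi = 0 \), which proves the family is linearly independent; \cref{lemma:unique-cgl} then yields uniqueness.

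There is essentially no hard step here — the content is bookkeeping with the block structure — but the two points that must not be skipped are the inclusion \( \calS_\lambda \subseteq \equi \) (so that the relevant columns really do sit inside \( \Xe \)) and the nonvanishing of \( \vi \) on the equicorrelation set. I would also record a shorter route: \cref{cor:uniqueness-cond-cgl} already reduces uniqueness to the condition \( \calN_\lambda = \cbr{0} \), and since \( \calN_\lambda \subseteq \Null(\Xe) \) by definition, linear independence of the columns of \( \Xe \) makes \( \Null(\Xe) = \cbr{0} \) and hence \( \calN_\lambda = \cbr{0} \) at once. Either path closes the argument; I prefer the first because it makes transparent that this hypothesis is strictly stronger than the condition of \cref{lemma:unique-cgl} yet strictly weaker than full column independence of \( X \).
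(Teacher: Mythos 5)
Your proof is correct and follows exactly the route the paper intends (the paper states this corollary without a separate proof, describing it as "uniformizing over \( \vi \)" in \cref{lemma:unique-cgl}): since \( \calS_\lambda \subseteq \equi \) and \( \norm{\vi}_2 = \lambda > 0 \) forces \( \vi \neq 0 \), independence of the columns of \( \Xe \) transfers to independence of \( \cbr{\Xbi \vi}_{\calS_\lambda} \). The alternative route through \cref{cor:uniqueness-cond-cgl} that you note is equally valid and equally consistent with the paper.
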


Finally, we consider the special case when there are no constraints and
\( K = 0 \).
In this setting, \( \vi = \ci \) --- the dual parameters are trivially zero
--- and we can provide a global condition which is much stronger than linear
independence.

\begin{restatable}{proposition}{GGP}[Group General Position]\label{prop:unique-sol}
	Suppose for every \( \calE \subseteq \calB \), \( \abs{\calE} \leq n +1 \),
	there do not exist unit vectors \( \zi \in \R^{\abs{\bi}} \)
	such that for any \( j \in \calE \),
	\[
		X_{b_j} z_{b_j} \in
		\text{affine}(\cbr{\Xbi \zi : \bi \in \calE \setminus b_j}).
	\]
	Then the group lasso solution is unique for all \( \lambda > 0 \).
\end{restatable}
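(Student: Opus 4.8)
The plan is to reduce uniqueness to a linear-independence statement via \cref{lemma:unique-cgl} and then translate the affine condition of Group General Position into that linear independence using a single hyperplane argument. Since $K = 0$ we have $\vi = \ci = \Xbi^\top r$, where $r = r(\lambda)$ is the optimal residual, and \cref{lemma:unique-cgl} says the solution is unique precisely when $\cbr{\Xbi \vi}_{\bi \in \calS_\lambda}$ are linearly independent. Because $\calS_\lambda \subseteq \equi$, every block of interest is equicorrelated, so $\norm{\vi}_2 = \lambda$ and $z_\bi := \vi / \lambda$ is a unit vector. First I would record the key geometric fact: for each $\bi \in \equi$,
\[
	\abr{\Xbi z_\bi, r} = z_\bi^\top \Xbi^\top r = z_\bi^\top \vi = \lambda \norm{z_\bi}_2^2 = \lambda,
\]
so all the points $\cbr{\Xbi z_\bi : \bi \in \equi}$ lie on the affine hyperplane $H = \cbr{p \in \R^n : \abr{p, r} = \lambda}$. (If $\equi = \emptyset$ then $\calS_\lambda = \emptyset$ and $w = 0$ is the unique solution, so assume $\equi \neq \emptyset$; then $\lambda = \norm{\Xbi^\top r}_2 > 0$ forces $r \neq 0$, so $H$ is a genuine hyperplane not containing the origin.)

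Next I would bound $\abs{\equi}$. The hyperplane $H$ is an affine subspace of dimension $n - 1$, so it contains at most $n$ affinely independent points. If $\abs{\equi} \geq n + 1$, choose any $n+1$ blocks from $\equi$; with the directions $z_\bi = \vi/\lambda$ these give $n+1$ points on $H$, which must therefore be affinely dependent. This contradicts Group General Position, which asserts affine independence for any collection of at most $n+1$ blocks and any choice of unit directions. Hence $\abs{\equi} \leq n$, and in particular $\abs{\calS_\lambda} \leq n \leq n + 1$.

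With the size bounded, Group General Position applies directly to the collection $\calS_\lambda$ with the specific unit directions $z_\bi = \vi / \lambda$, yielding that $\cbr{\Xbi z_\bi}_{\bi \in \calS_\lambda}$ are affinely independent. It then remains to upgrade affine independence to linear independence, and here I would use that $H$ avoids the origin. Suppose $\sum_{\bi \in \calS_\lambda} \beta_\bi \Xbi z_\bi = 0$; taking the inner product with $r$ gives $\lambda \sum_\bi \beta_\bi = 0$, hence $\sum_\bi \beta_\bi = 0$ since $\lambda \neq 0$. Thus $\cbr{\beta_\bi}$ is an affine dependence, so affine independence forces $\beta_\bi = 0$ for all $\bi$. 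Rescaling the directions by $\lambda$ preserves this, so $\cbr{\Xbi \vi}_{\bi \in \calS_\lambda}$ are linearly independent, and \cref{lemma:unique-cgl} gives uniqueness. Since $\lambda > 0$ was arbitrary, the solution is unique for all $\lambda > 0$.

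The conceptual crux --- and the step I expect to be the main obstacle --- is the observation that all equicorrelated points $\Xbi \vi$ share the common affine hyperplane $\abr{\,\cdot\,, r} = \lambda$. This single fact does double duty: it caps $\abs{\equi}$ at $n$, so that the finite-collection hypothesis of Group General Position is legitimately applicable, and, because the hyperplane misses the origin, it is exactly what converts the affine-independence guarantee of the hypothesis into the linear independence demanded by \cref{lemma:unique-cgl}. Everything else is bookkeeping; the only care needed is the degenerate case $\equi = \emptyset$ and the verification that $r \neq 0$ whenever $\equi$ is nonempty.
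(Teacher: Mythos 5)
Your proposal is correct and rests on the same key idea as the paper's proof: pairing each candidate dependence with the residual $r$ to show that all equicorrelated points $\Xbi \vi$ lie on the affine hyperplane $\abr{\cdot, r} = \lambda$, which converts linear (in)dependence into affine (in)dependence and makes the GGP hypothesis bite. The only difference is cosmetic --- you argue directly via \cref{lemma:unique-cgl} and bound $\abs{\equi} \leq n$ by the dimension of that hyperplane, whereas the paper argues by contraposition from \cref{cor:uniqueness-cond-cgl} and handles $\abs{\equi} > n+1$ by discarding dependent vectors; your treatment of the cardinality step is, if anything, slightly cleaner.
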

We call this uniqueness condition \emph{group general position}
(GGP) because it naturally extends general position to groups of vectors.
General position is itself an extension of affine independence and is
sufficient for the lasso solution to be unique \citep{tibshirani2013unique}.
GGP is strictly weaker than linear independence of the columns of \( X \),
but neither implies nor is implied by general position (\cref{prop:general-position-relation}).

\subsection{Computing Dual Optimal Parameters}

The main difficulty of \cref{lemma:unique-cgl}
is that knowledge of a dual optimal parameter is required to check if a
unique solution exists.
A dual optimal parameter is also required to fully leverage our
characterization of the optimal set.
As such, now we turn to computing optimal dual parameters.

We give one Lagrange dual problem for CGL in \cref{lemma:lagrange-dual}.
A nice feature of this dual problem is \( \vi \) attains an
alternative interpretation as dual variable.
However, evaluating the dual requires computing \( (X^\top X)^+ \),
which may be difficult even if \( X \) is structured, as in the convex
ReLU program.
Instead, we focus on computing \( \rho \) given a primal solution.

Let \( \w \in \solfn(\lambda) \).
If \( \wi \neq 0 \), then KKT conditions imply
\begin{equation}\label{eq:dual-active}
	\Ki \ri = \ci - \lambda \frac{\wi}{\norm{\wi}_2},
\end{equation}
so that the ``dual fit'' \( \hat d_\bi = \Ki \ri \) is easily computed.
Recovering the dual parameter is a linear feasibility problem:
\begin{equation}\label{eq:dual-active-lp}
	\ri \in \cbr{\ri \geq 0 : \Ki \ri = \hat d_\bi}.
\end{equation}
If \( \wi = 0 \), then complementary slackness is trivially satisfied and we
compute the min-norm dual parameter by solving the following program:
\begin{equation}\label{eq:dual-inactive}
	\min \cbr{\norm{\ri}_2 : \norm{\ci - \Ki \ri}_2 \leq \lambda, \ri \geq 0 }.
\end{equation}
In general, however, we only need \emph{some} dual optimal parameter for
our results to hold; thus, is is typically easier to find \( \rho \)
by solving the following non-negative regression:
\begin{equation}\label{eq:non-negative-regression}
	\ri = \argmin \cbr{\norm{\Ki \ri - \ci}^2_2 : \ri \geq 0 }.
\end{equation}
See \cref{prop:non-negative-regression} for details.

\subsection{Minimal Solutions and Optimal Pruning}\label{sec:pruning}

Often we want the most parsimonious solution, i.e. the one
using the fewest feature groups.
We say a primal solution \( w \) is \emph{minimal} if there
does not exist \( w' \in \solfn(\lambda) \) such that
\( \act(w') \subsetneq \act(w) \).
Building on the previous section, we start with a sufficient
condition for \( w \) to be minimal.
\begin{restatable}{proposition}{minimalSols}\label{prop:minimal-solutions}
	For \( \lambda > 0 \), \( w \in \solfn(\lambda) \) is minimal if and only
	if the vectors \( \cbr{\Xbi \wi}_{\calA(w)} \) are linearly independent.
\end{restatable}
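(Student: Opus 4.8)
The plan is to use the explicit description of $\solfn(\lambda)$ in \cref{prop:sol-fn-cgl} to pass to a finite-dimensional coefficient space, where the claim becomes the classical fact that a feasible point of a polyhedron $\cbr{A\alpha = b, \alpha \ge 0}$ has inclusion-minimal support if and only if the columns of $A$ indexed by that support are linearly independent.

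First I would set up the reduction. By \cref{prop:sol-fn-cgl}, every $w \in \solfn(\lambda)$ satisfies $\wi = \alpha_\bi \vi$ with $\alpha_\bi \geq 0$ for $\bi \in \calS_\lambda$, satisfies $w_{b_j} = 0$ for $b_j \notin \calS_\lambda$, and obeys $X w = \hat y$. Since $\calS_\lambda \subseteq \equi$ and $\norm{\vi}_2 = \lambda > 0$ for every $\bi \in \equi$, each $\vi \neq 0$; hence $\alpha_\bi$ is uniquely determined by $\wi$, the assignment $\alpha \mapsto w$ is a bijection between $\solfn(\lambda)$ and the polyhedron $P = \cbr{\alpha \geq 0 : \sum_{\bi \in \calS_\lambda} \alpha_\bi \Xbi \vi = \hat y}$ (with $\alpha$ indexed by $\calS_\lambda$), and $\calA(w) = \cbr{\bi : \alpha_\bi > 0} = \text{supp}(\alpha)$. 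Because $\alpha_\bi > 0$ on this support, $\cbr{\Xbi \wi}_{\calA(w)} = \cbr{\alpha_\bi \Xbi \vi}_{\calA(w)}$ is linearly independent if and only if $\cbr{\Xbi \vi}_{\text{supp}(\alpha)}$ is. Thus both minimality and the independence condition transfer verbatim, and it suffices to prove the support-minimality characterization inside $P$.

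For the reverse implication (independence $\Rightarrow$ minimal) I would argue by contradiction: if some $w' \in \solfn(\lambda)$ had $\calA(w') \subsetneq \calA(w)$, then the corresponding $\alpha'$ and $\alpha$ both satisfy the fit equation, so extending $\alpha'$ by zero gives $\sum_{\bi \in \calA(w)} (\alpha_\bi - \alpha'_\bi) \Xbi \vi = 0$, where the coefficient on each $\bi \in \calA(w) \setminus \calA(w')$ equals $\alpha_\bi > 0$. This is a nontrivial linear dependence among $\cbr{\Xbi \vi}_{\calA(w)}$, contradicting independence. For the forward implication I use the contrapositive: linear dependence of $\cbr{\Xbi \vi}_{\text{supp}(\alpha)}$ yields a nonzero $\beta$ supported on $\calA(w)$ with $\sum_\bi \beta_\bi \Xbi \vi = 0$, so every perturbation $\alpha + t\beta$ satisfies the fit equation. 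Since $\alpha_\bi > 0$ strictly on its support and $\beta \neq 0$, the feasibility interval $\cbr{t : \alpha + t\beta \geq 0}$ contains $0$ in its interior and is bounded on at least one side; at a finite endpoint $t^\ast \neq 0$ at least one coordinate of $\alpha + t^\ast\beta$ vanishes, producing a point of $P$ with support strictly inside $\text{supp}(\alpha)$, i.e.\ a solution with strictly smaller active set, so $w$ is not minimal.

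The main obstacle I anticipate is making the forward direction's ratio-test argument fully rigorous: one must verify that the feasibility interval for $t$ is bounded on at least one side (using $\beta \neq 0$), that the boundary value $t^\ast$ is nonzero (using strict positivity of $\alpha$ on exactly the active blocks), and that the resulting point genuinely lies in $P$ rather than merely solving the linear system. Everything else is bookkeeping carried through the bijection of \cref{prop:sol-fn-cgl}.
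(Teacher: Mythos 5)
Your proposal is correct and follows essentially the same route as the paper: the forward direction is the identical linear-dependence contradiction via \cref{prop:sol-fn-cgl}, and your ratio-test argument for the reverse direction is exactly the perturbation $\wi^t = (1 - t\beta_\bi)\wi$ that the paper delegates to the correctness proof of \cref{alg:pruning-solutions} (\cref{prop:pruning-correctness}), just inlined in the coefficient space $\alpha$ rather than stated as an algorithm. The only cosmetic difference is that you make the bijection with the polyhedron $P$ explicit, which the paper leaves implicit.
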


Linear independence of \( \cbr{\Xbi \wi}_{\calA(w)} \)
also identifies \( w \) as a vertex of \( \solfn(\lambda) \)
\citep{bertsekas2009convex},
meaning minimal models are exactly the extreme points of the optimal set.
Combining this characterization with our condition
for uniqueness of a solution (\cref{lemma:unique-cgl}) shows that
minimal solutions are the only solution on their support.

\begin{corollary}\label{cor:minimal-unique}
	Suppose \( w \) is a minimal solution.
	Then \( w \) is the unique solution with support \( \act(w) \).
\end{corollary}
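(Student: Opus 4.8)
The plan is to reduce the claim to a uniqueness statement for a restricted CGL instance and then invoke \cref{lemma:unique-cgl}. Let $w$ be minimal and consider the CGL problem obtained by discarding every block outside $\act(w)$, i.e.\ restricting the partition to $\calB' = \act(w)$. First I would argue that the restriction of $w$ to these blocks is optimal for this smaller problem and, crucially, produces the same optimal fit $\hat y$: zeroing the inactive blocks leaves both the loss and the penalty unchanged, while the restricted problem is more constrained, so its optimal value cannot improve on $p^*(\lambda)$. Consequently the optimal residual $r(\lambda)$ and the correlations $\ci$ agree with those of the original problem on $\act(w)$.

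Next I would translate minimality into the linear-independence hypothesis of \cref{lemma:unique-cgl}. By \cref{prop:minimal-solutions}, minimality of $w$ is equivalent to linear independence of $\cbr{\Xbi \wi}_{\act(w)}$. For each $\bi \in \act(w)$ we have $\wi \neq 0$, so by \cref{prop:sol-fn-cgl} $\wi = \alpha_\bi \vi$ with $\alpha_\bi > 0$; note $\vi \neq 0$ because $\act(w) \subseteq \calS_\lambda \subseteq \equi$ forces $\norm{\vi}_2 = \lambda > 0$. Scaling by the positive constants $\alpha_\bi$ preserves linear independence, so $\cbr{\Xbi \vi}_{\act(w)}$ are linearly independent as well.

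Finally I would apply \cref{lemma:unique-cgl} to the restricted problem. Its support set $\calS'_\lambda$ satisfies $\calS'_\lambda \subseteq \act(w)$, and any subfamily of a linearly independent family is linearly independent, so $\cbr{\Xbi \vi}_{\calS'_\lambda}$ are linearly independent. \cref{lemma:unique-cgl} then yields that the restricted CGL has a unique solution, which must be $w$. Since any $w' \in \solfn(\lambda)$ with $\act(w') \subseteq \act(w)$ is exactly a feasible point of the restricted problem attaining the optimal fit, it coincides with $w$; in particular $w$ is the unique solution supported on $\act(w)$.

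I expect the only delicate point to be the first step — justifying that restricting the block collection does not alter $\vi$ on the active blocks, since $\vi = \ci - \Ki \rmin_\bi$ is defined through the min-norm dual $\rmin$, which could in principle differ between the two problems. The resolution is clean: on an active block the stationarity identity \cref{eq:dual-active} gives $\Ki \rmin_\bi = \ci - \lambda \wi/\norm{\wi}_2$, hence $\vi = \lambda \wi/\norm{\wi}_2$, which depends only on $\wi$ and therefore is identical in both problems. A self-contained alternative that avoids the restricted-problem formalism is to argue directly: for any $w' \in \solfn(\lambda)$ with $\act(w') \subseteq \act(w)$, \cref{prop:sol-fn-cgl} gives $w'_\bi = \alpha'_\bi \vi$, and subtracting the two representations of $\hat y = Xw = Xw'$ yields $\sum_{\bi \in \act(w)} (\alpha_\bi - \alpha'_\bi)\,\Xbi \vi = 0$, whence linear independence of $\cbr{\Xbi \vi}_{\act(w)}$ forces $\alpha'_\bi = \alpha_\bi$ and thus $w' = w$.
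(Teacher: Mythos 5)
Your proposal is correct and follows essentially the same route as the paper: the paper obtains this corollary by combining \cref{prop:minimal-solutions} with \cref{lemma:unique-cgl}, and the direct computation you give at the end --- writing $\wi = \alpha_\bi \vi$, $w'_\bi = \alpha'_\bi \vi$ via \cref{prop:sol-fn-cgl} and cancelling in $\sum_{\bi \in \act(w)} (\alpha_\bi - \alpha'_\bi)\,\Xbi \vi = 0$ --- is exactly the argument the paper uses for the analogous \cref{prop:constraint-reduced}. Your restricted-problem detour, including the observation that $\vi = \lambda \wi / \norm{\wi}_2$ on active blocks is unchanged by the restriction, likewise mirrors the paper's reduced-problem treatment and is sound.
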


If \( X \) satisfies \emph{group dependence}
(\cref{def:group-dependence}), then all minimal solutions have the same
number of active blocks.

\begin{restatable}{proposition}{minimalSolCharacterization}\label{prop:minimal-sol-characterization}
	Let \( \calV \!=\! \Span(\cbr{\Xbi \bar \wi}) \) for \( \bar w \in \solfn(\lambda) \).
	If \( X \) satisfies group dependence, then every minimal solution has
	\( c = \text{dim}(\calV) \) active blocks.
\end{restatable}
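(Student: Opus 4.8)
The plan is to exploit the fact, from \cref{prop:sol-fn-cgl}, that every optimal solution reuses the same set of block directions and differs only in nonnegative scalings. Concretely, for any \( w \in \solfn(\lambda) \) and any supported block \( \bi \in \calS_\lambda \) we have \( \wi = \alpha_\bi \vi \) with \( \alpha_\bi \geq 0 \), so that \( \Xbi \wi = \alpha_\bi g_\bi \), where \( g_\bi := \Xbi \vi \) is a fixed vector independent of the chosen solution. Writing the active set as \( \calA(w) = \cbr{\bi : \alpha_\bi > 0} \), the constraint \( X w = \hat y \) becomes \( \sum_{\bi \in \calA(w)} \alpha_\bi g_\bi = \hat y \), so the entire solution set is encoded by a nonnegative coefficient polytope over the fixed configuration \( \cbr{g_\bi : \bi \in \calS_\lambda} \). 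In particular \( \calV = \Span(\cbr{g_\bi : \bi \in \calA(\bar w)}) \), and I first record that \( \hat y \in \calV \subseteq W := \Span(\cbr{g_\bi : \bi \in \calS_\lambda}) \).

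Next I would invoke \cref{prop:minimal-solutions}: a solution \( w \) is minimal exactly when \( \cbr{\Xbi \wi}_{\calA(w)} = \cbr{\alpha_\bi g_\bi}_{\calA(w)} \) is linearly independent, which (since \( \alpha_\bi > 0 \) on the active set) is equivalent to linear independence of \( \cbr{g_\bi : \bi \in \calA(w)} \). For such a minimal \( w \), the active directions form a basis of their own span, giving the clean identity \( \abs{\calA(w)} = \dim \Span(\cbr{g_\bi : \bi \in \calA(w)}) \). Counting active blocks therefore reduces to computing the dimension of the span of the active directions, and the proposition will follow once I show this span is the same --- namely \( \calV \) --- for every minimal solution.

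The crux, and the step where group dependence enters, is to rule out ``degenerate'' minimal solutions whose active directions span a proper subspace of \( W \). I would argue that group dependence (\cref{def:group-dependence}) exactly forbids \( \hat y \) from lying in \( \Span(S) \) for any subcollection \( S \subseteq \cbr{g_\bi : \bi \in \calS_\lambda} \) with \( \Span(S) \subsetneq W \); equivalently, the active directions of \emph{any} optimal solution must span all of \( W \). Granting this, fix any \( \bar w \in \solfn(\lambda) \): since \( \hat y \in \Span(\cbr{g_\bi : \bi \in \calA(\bar w)}) \), we obtain \( \calV = \Span(\cbr{g_\bi : \bi \in \calA(\bar w)}) = W \), so \( \dim \calV = \dim W \) is unambiguous. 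Applying the same fact to a minimal solution \( w \), its linearly independent active directions span \( W \) and hence form a basis, yielding \( \abs{\calA(w)} = \dim W = \dim \calV =: c \).

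I expect the main obstacle to be the translation of the group dependence hypothesis into the subspace statement ``\( \hat y \notin \Span(S) \) whenever \( \Span(S) \subsetneq W \)''. The illustrative failure case is a target expressible both as a positive combination of a spanning set of directions and as a combination of a strictly smaller independent subset (e.g. three planar directions \( g_1, g_2, g_3 = g_1 + g_2 \) with \( \hat y = g_3 \)), which produces vertices of the coefficient polytope with different support sizes. Making precise that group dependence eliminates exactly this phenomenon, and verifying it applies uniformly across all optimal solutions so that \( \calV = W \) is well defined, is the delicate part; the remaining counting via \cref{prop:minimal-solutions} is then routine linear algebra.
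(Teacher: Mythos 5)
Your proposal is correct in outline but follows a genuinely different route from the paper. The paper proves this via its pruning machinery: it shows every minimal solution can be reached in one pruning step from the maximal solution supported on \( \calS_\lambda \) (\cref{lemma:maximal-sol}, \cref{lemma:one-step-pruning}), that under group dependence each pruning step preserves \( \Span(\cbr{\Xbi \wi^k}) \) (\cref{lemma:pruning-span}), and that any two independent subsets obtained by iteratively deleting dependent vectors have equal cardinality (\cref{lemma:pruning-paths-equiv}). You instead argue directly from the fixed-direction representation \( \Xbi \wi = \alpha_\bi g_\bi \) of \cref{prop:sol-fn-cgl} together with \cref{prop:minimal-solutions}, reducing everything to the claim that the active directions of every optimal solution span the same subspace \( W = \Span(\cbr{g_\bi}_{\calS_\lambda}) \). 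The step you flag as delicate does go through, and the missing ingredient is exactly the maximal solution of \cref{lemma:maximal-sol}: writing \( \hat y = \sum_{\bi \in \calS_\lambda} \bar\alpha_\bi g_\bi \) with all \( \bar\alpha_\bi > 0 \) and subtracting the representation \( \hat y = \sum_{\bi \in \calA(w)} \alpha_\bi g_\bi \) of any other solution gives \( \sum_{\bi \in \calS_\lambda \setminus \calA(w)} \bar\alpha_\bi g_\bi \in \Span(\cbr{g_\bi}_{\calA(w)}) \), and applying \cref{def:group-dependence} with \( \calD = \calS_\lambda \setminus \calA(w) \) (and \( z_\bi = 0 \) on unsupported blocks) forces \( W \subseteq \Span(\cbr{g_\bi}_{\calA(w)}) \). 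Your argument is arguably more self-contained and has the minor advantage of explicitly verifying that \( \calV \) does not depend on the choice of \( \bar w \) (which the paper's proof leaves implicit), while the paper's approach buys reuse of lemmas it needs anyway to analyze \cref{alg:pruning-solutions}. One small caution: your blanket claim that \( \hat y \notin \Span(S) \) for \emph{every} subcollection with \( \Span(S) \subsetneq W \) is stronger than what you need and should be stated only for subcollections indexed by subsets of \( \calS_\lambda \); the reformulation in terms of active sets of optimal solutions is the statement your proof actually uses.
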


\begin{algorithm}[tb]
	\caption{Optimal Solution Pruning}
	\label{alg:pruning-solutions}
	\begin{algorithmic}
		\STATE {\bfseries Input:} data matrix \( X \), solution \( w \).
		\STATE \( k \gets 0 \).
		\STATE \( w^k \gets w \).
		\WHILE {\( \exists \beta \neq 0 \) s.t. \( \sum_{\bi \in \act(w^k)} \beta_\bi \Xbi \wi^k = 0 \)}
		\STATE \( \bi^k \gets \argmax_{\bi} \cbr{|\beta_\bi| : \bi \in \act(w^k)}  \)
		\STATE \( t^k \gets 1/|\beta_{\bi^k}| \)
		\STATE \( \w^{k+1} \gets \w^k (1 - t^k \beta_\bi) \)
		\STATE \( k \gets k + 1 \)
		\ENDWHILE
		\STATE {\bfseries Output:} final weights \( \w^k \)
	\end{algorithmic}
\end{algorithm}

\cref{alg:pruning-solutions} gives a procedure which, starting from
any optimal solution \( w \), computes a optimal model with minimal
support in \( O((n^3 l + nd) \) time, where \( l \)
is the number active blocks in \( w \)
(see \cref{prop:pruning-correctness}).
Moreover, if the blocks of \( X \) satisfy a regularity condition, then
starting model has no affect on the cardinality of the minimal solution found.
Our algorithm can also be used to verify a minimal solution, since
if \( w \) minimal then it is unique on its support and
\cref{alg:pruning-solutions} must return \( w \) immediately.
This procedure also implies the existence of at least one minimal
solution.

\begin{corollary}\label{cor:smallest-solution-cgl}
	There exists \( w \in \solfn(\lambda) \) for which the vectors
	\(
	\cbr{\Xbi w(\lambda) : \bi \in \calA(w) }
	\)
	are linearly independent.
\end{corollary}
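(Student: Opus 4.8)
The plan is to reduce the claim to the existence of a \emph{minimal} solution and then invoke \cref{prop:minimal-solutions}, which equates minimality of \( w \) with exactly the asserted linear independence of \( \cbr{\Xbi \wi : \bi \in \calA(w)} \). Thus all the substantive content is already carried by that proposition, and the only thing left to establish is that \( \solfn(\lambda) \) contains at least one minimal element.

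First I would confirm \( \solfn(\lambda) \neq \emptyset \). Since \( \lambda > 0 \), the group penalty \( \lambda \sum_{\bi \in \calB} \norm{\wi}_2 \) renders \( F_\lambda \) coercive, and the feasible region \( \cbr{w : \Ki^\top \wi \leq 0 \text{ for all } \bi} \) is a closed convex cone; hence \( F_\lambda \) attains its minimum and the solution set is nonempty. Next I would select a solution of smallest support: as \( w \) ranges over \( \solfn(\lambda) \), the integer \( \abs{\calA(w)} \) takes values in a nonempty subset of \( \cbr{0, 1, \ldots, m} \), so some \( w^* \in \solfn(\lambda) \) minimizes \( \abs{\calA(w^*)} \). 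This \( w^* \) is minimal in the required sense, since if some \( w' \in \solfn(\lambda) \) satisfied \( \calA(w') \subsetneq \calA(w^*) \), then \( \abs{\calA(w')} < \abs{\calA(w^*)} \), contradicting the choice of \( w^* \). Applying \cref{prop:minimal-solutions} to \( w^* \) then yields that \( \cbr{\Xbi w^*_\bi : \bi \in \calA(w^*)} \) are linearly independent, which is precisely the claim.

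A more constructive alternative — and the one foreshadowed in the text — is to feed any \( w \in \solfn(\lambda) \) into \cref{alg:pruning-solutions}: its correctness guarantee (\cref{prop:pruning-correctness}) ensures termination at an optimal point for which the loop's termination condition fails, i.e.\ there is no nonzero \( \beta \) with \( \sum_{\bi \in \calA(w^k)} \beta_\bi \Xbi \wi^k = 0 \), which is exactly linear independence of the active vectors. I expect no genuine obstacle here. The one point deserving care is the inference that a least-cardinality solution is minimal with respect to \emph{proper set inclusion} (not merely smallest count), which is the short argument given above; everything else is the routine existence/coercivity check needed to guarantee a solution to prune in the first place.
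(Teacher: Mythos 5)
Your proof is correct. The paper itself treats this corollary as an immediate by-product of \cref{alg:pruning-solutions}: starting from any optimal \( w \) (existence of which the paper takes for granted), \cref{prop:pruning-correctness} guarantees termination at an optimal point where the loop condition fails, i.e.\ where the active vectors are linearly independent. Your secondary argument is exactly this. Your primary argument is a different, non-constructive route: establish nonemptiness of \( \solfn(\lambda) \) by coercivity, pick a solution of minimum support cardinality, observe that minimum cardinality implies minimality with respect to proper inclusion (the blocks being disjoint and nonempty), and invoke the ``minimal \( \Rightarrow \) linearly independent'' direction of \cref{prop:minimal-solutions}. This is a clean extremal argument and it buys you an explicit existence proof that does not depend on running the algorithm; but note that the direction of \cref{prop:minimal-solutions} you invoke is itself proved in the paper via the correctness of \cref{alg:pruning-solutions}, so the two routes share the same underlying machinery and there is no real gain in logical independence --- only in presentation. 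One cosmetic point: in the paper's notation \( \abs{\calA(w)} \) denotes the cardinality of the \emph{union} of the active blocks rather than the number of blocks, but since the blocks are disjoint and nonempty, a proper inclusion of active sets still strictly decreases this quantity, so your extremal selection goes through unchanged.
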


\cref{cor:smallest-solution-cgl} will be useful tool later when we study
sensitivity of the model fit to perturbations in \( y \) and \( \lambda \).

A disadvantage of \cref{alg:pruning-solutions} is that it cannot
continue beyond a minimal solution.
However, minimal models may still be quite large.
We can perform approximate pruning in such cases
using the least squares fit to approximate \( \beta \),
\[ \tilde \beta = \argmin \norm{A \beta - X_{b_j} w_{b_j}}_2^2, \]
where \( A = [\Xbi \wi]_{\act \setminus b_j} \) and \( b_j \in \act \) is
chosen randomly.
Using \( \tilde \beta \) in
\cref{alg:pruning-solutions}
is optimal when \( \cbr{\Xbi \wi}_{\act} \) are dependent
and chooses the update parameters to minimize degradation
of the model fit otherwise.

\subsection{Continuity of the Solution Path}

A major concern when learning with regularizers is how to tune the
parameter \( \lambda \).
Typical strategies like grid-search on the (cross) validation
loss are effective only if the solution function satisfies basic continuity
properties.
For example, if \( \solfn \) is single-valued but discontinuous in \( \lambda \),
then the sample complexity of grid-search can be made arbitrarily poor
by ``hiding'' the optimal \( \lambda \) in a
discontinuity \citep[Sec. 1.1]{nesterov2018lectures}.
In this section, we justify grid-search for CGL by proving several
continuity properties of the solution function, particularly when the
solution is unique.
We start with basic definitions of continuity for point-to-set maps.

\begin{definition}[Closed]
	\( T : \calX \into 2^\calZ \) is closed if
	\( \cbr{\xk} \subset \calX \), \( \xk \into \bar x \) and \( z_k \in T(\xk) \),
	\( z_k \into \bar z \) implies \( \bar z \in T(\bar x) \).
\end{definition}

\begin{definition}[Open]
	\( T : \calX \into 2^\calZ \) is open if
	\( \cbr{\xk} \subset \calX \), \( \xk \into \bar x \) and \( \bar z \in T(\bar x) \),
	implies there exists \( k' \in \bbN \), \( z_k \in T(\xk) \) for \( k \geq k' \),
	such that \( z_k \into \bar z \).
\end{definition}

We say that \( T \) is \emph{continuous} if it is both closed and open.
If \( T(x) \) is a singleton for all \( x \in \calX \),
then openness/closedness are equivalent and imply continuity.
We start with (functional) continuity of the optimal objective.

\begin{restatable}{proposition}{valueContinuityCGL}\label{prop:value-continuity-cgl}
	\( \lambda \mapsto p^*(\lambda) \) is continuous for all
	\( \lambda \geq 0 \).
\end{restatable}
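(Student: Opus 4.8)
The plan is to exploit two structural features of \cref{eq:constrained-gl}: the feasible set is independent of \( \lambda \), and \( F_\lambda \) depends \emph{affinely} on \( \lambda \). First I would split the objective as \( F_\lambda(w) = a(w) + \lambda\, b(w) \), where \( a(w) = \half \norm{Xw - y}_2^2 \geq 0 \) and \( b(w) = \sum_{\bi \in \calB} \norm{\wi}_2 \geq 0 \) are both nonnegative and independent of \( \lambda \), and let \( C = \cbr{w : \Ki^\top \wi \leq 0 \text{ for all } \bi \in \calB} \) be the fixed feasible cone. Since \( 0 \in C \) with \( F_\lambda(0) = \half \norm{y}_2^2 \), and since \( a, b \geq 0 \), the value is finite with \( 0 \leq p^*(\lambda) \leq \half \norm{y}_2^2 \) for every \( \lambda \geq 0 \). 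Moreover, for each fixed \( w \in C \) the map \( \lambda \mapsto F_\lambda(w) \) is affine and nondecreasing, so taking the infimum over \( w \in C \) shows \( p^* \) is nondecreasing on \( [0, \infty) \).

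Next I would observe that \( p^*(\lambda) = \inf_{w \in C} F_\lambda(w) \) is a pointwise infimum of functions that are affine in \( \lambda \), hence \( p^* \) is \emph{concave} on \( [0, \infty) \). A finite concave function on an interval is continuous on its interior, which immediately yields continuity of \( p^* \) on \( (0, \infty) \). It therefore remains only to establish right-continuity at the boundary point \( \lambda = 0 \).

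For the endpoint, monotonicity already gives \( \liminf_{\lambda \downarrow 0} p^*(\lambda) \geq p^*(0) \). For the matching upper bound, fix \( \epsilon > 0 \) and choose a feasible \( w_\epsilon \in C \) with \( a(w_\epsilon) \leq p^*(0) + \epsilon \); then \( p^*(\lambda) \leq F_\lambda(w_\epsilon) = a(w_\epsilon) + \lambda\, b(w_\epsilon) \), and letting \( \lambda \downarrow 0 \) gives \( \limsup_{\lambda \downarrow 0} p^*(\lambda) \leq a(w_\epsilon) \leq p^*(0) + \epsilon \). Since \( \epsilon \) is arbitrary, \( \lim_{\lambda \downarrow 0} p^*(\lambda) = p^*(0) \), completing the proof.

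The only genuine obstacle is this endpoint: concavity alone does not forbid an upward jump of a concave function at a boundary point of its domain, so the limit at \( \lambda = 0 \) must be pinned down separately, using monotonicity together with the controllable affine dependence of \( F_\lambda \) on \( \lambda \). I would also note a fully self-contained alternative that avoids invoking concavity theory at interior points: for \( \lambda \) near any \( \lambda_0 > 0 \), a minimizer \( w_\lambda \) satisfies \( \lambda\, b(w_\lambda) \leq F_\lambda(w_\lambda) = p^*(\lambda) \leq p^*(\lambda_0) \) by monotonicity, so \( b(w_\lambda) \) stays bounded; feeding this into \( |p^*(\lambda) - p^*(\lambda_0)| \leq |\lambda - \lambda_0| \max\cbr{b(w_\lambda), b(w_{\lambda_0})} \) gives local Lipschitz continuity directly, and the endpoint is handled exactly as above.
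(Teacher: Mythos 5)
Your proof is correct but follows a genuinely different route from the paper's. The paper first shows (\cref{lemma:bounded-solutions-cgl}) that every solution has group norm bounded by a constant independent of \( \lambda \), rewrites the problem as minimization of a jointly continuous objective over a fixed compact constraint set, and then invokes the classical maximum theorem of \citet{berge1997topological} (see also \citet{hogan1973point}) to conclude continuity of \( p^* \). You instead exploit the affine dependence of \( F_\lambda \) on \( \lambda \): writing \( p^*(\lambda) = \inf_{w \in C}\rbr{a(w) + \lambda b(w)} \) as a pointwise infimum of nonnegative, nondecreasing affine functions shows it is finite, nondecreasing, and concave, hence continuous on \( (0,\infty) \), and the boundary point \( \lambda = 0 \) is correctly pinned down by the \( \epsilon \)-near-minimizer argument (which, usefully, does not even require the infimum defining \( p^*(0) \) to be attained). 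Your argument is more elementary and self-contained --- it needs neither the boundedness lemma nor any compactness or semicontinuity machinery, and it delivers monotonicity, concavity, and (via your alternative) local Lipschitz continuity of \( p^* \) on \( (0,\infty) \) as free byproducts. What the paper's heavier setup buys is reuse: the same compact-reformulation and point-to-set-map results drive the set-valued continuity statements for \( \solfn \) in \cref{prop:map-continuity-cgl}, so the machinery is not introduced solely for this proposition. One small slip in your optional Lipschitz remark: the bound \( p^*(\lambda) \leq p^*(\lambda_0) \) from monotonicity holds only for \( \lambda \leq \lambda_0 \); for a two-sided neighborhood you should instead bound \( b(w_\lambda) \) uniformly for \( \lambda \in [\lambda_0/2,\, 2\lambda_0] \), e.g.\ by \( 2p^*(2\lambda_0)/\lambda_0 \). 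This does not affect the validity of your main proof.
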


While standard sensitivity results imply that \( \solfn \) is closed,
unfortunately openness is not possible in the general setting.

\begin{restatable}{proposition}{mapContinuityCGL}\label{prop:map-continuity-cgl}
	While \( \solfn \) is closed on \( \R_+ \),
	it is open if only if \( X \) is full column rank.
	However, if the solution is unique on
	\( \Lambda \subset \R^+ \), then \( \solfn \)
	is open at every \( \lambda \in \Lambda \).
\end{restatable}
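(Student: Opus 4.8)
The plan is to handle the three claims separately, in each case reducing the point-to-set question to closedness of \( \solfn \), single-valuedness, and boundedness of optimal sequences. For closedness on \( \R_+ \) I would argue directly from the definition: suppose \( \lambda_k \to \bar\lambda \), \( w_k \in \solfn(\lambda_k) \), and \( w_k \to \bar w \). The feasible set \( \{ w : \Ki^\top \wi \le 0 \ \forall \, \bi \in \calB \} \) does not depend on \( \lambda \) and is closed, so \( \bar w \) is feasible. Since \( F_\lambda(w) \) is jointly continuous in \( (\lambda, w) \) and \( F_{\lambda_k}(w_k) = p^*(\lambda_k) \), continuity of \( \lambda \mapsto p^*(\lambda) \) (\cref{prop:value-continuity-cgl}) gives \( F_{\bar\lambda}(\bar w) = \lim_k p^*(\lambda_k) = p^*(\bar\lambda) \), hence \( \bar w \in \solfn(\bar\lambda) \). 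This is the standard upper-semicontinuity argument for parametric convex programs.

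For the direction \emph{full column rank} \( \Rightarrow \) \emph{open}, note that if \( X \) has full column rank then \( w \mapsto \half \norm{Xw - y}_2^2 \) is strongly convex, so \( F_\lambda \) has a unique minimizer for every \( \lambda \ge 0 \) and \( \solfn \) is single-valued on all of \( \R_+ \). Strong convexity also controls \( \norm{w} \) through the fit, so optimal sequences stay locally bounded even at \( \lambda = 0 \). A closed, single-valued, locally bounded map is continuous and hence open, as noted after the definitions.

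The crux is the direction \emph{not full column rank} \( \Rightarrow \) \emph{not open}, where I would exhibit a single failure of openness at \( \bar\lambda = 0 \). When \( \lambda = 0 \) the penalty vanishes and \( \solfn(0) \) is the entire cone-constrained least-squares set \( \{ w : Xw = \hat y(0), \ \Ki^\top \wi \le 0 \ \forall \, \bi \} \), which is positive dimensional once \( \Null(X) \ne \{0\} \). By \cref{prop:sol-fn-cgl}, however, every \( w \in \solfn(\lambda) \) with \( \lambda > 0 \) has each block \( \wi \) equal to a \emph{nonnegative} multiple of the single direction \( \vi(\lambda) \). The idea is to choose \( \bar z \in \solfn(0) \) whose block structure cannot appear as such a limiting ray configuration --- for gated models (\( \Ki = 0 \)) one travels far along a direction of \( \Null(X) \) until some block of \( \bar z \) has a sign or magnitude incompatible with \( \alpha_{b_i} \vi(0) \) --- and then show \( \mathrm{dist}(\bar z, \solfn(\lambda_k)) \ge \delta > 0 \) for all small \( \lambda_k > 0 \), contradicting openness. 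The main obstacle is making this robust in the constrained (ReLU) case: the cones \( \{ \Ki^\top \wi \le 0 \} \) can shrink the recession directions of \( \solfn(0) \), so I must verify that rank deficiency still leaves a feasible point of \( \solfn(0) \) lying outside the limit of the sets \( \solfn(\lambda) \) as \( \lambda \to 0^+ \), using continuity of \( \vi(\lambda) \) there to pin down the limiting configuration.

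Finally, for the ``however'' clause, \( \Lambda \subset \R^+ \), so for \( \bar\lambda \in \Lambda \) and \( \lambda_k \to \bar\lambda \) the penalty keeps any selection \( z_k \in \solfn(\lambda_k) \) bounded. Uniqueness makes \( \solfn(\bar\lambda) = \{\bar z\} \) a singleton, so by closedness every subsequential limit of \( \{z_k\} \) equals \( \bar z \); boundedness then forces \( z_k \to \bar z \), giving openness at \( \bar\lambda \). I expect the sign/magnitude bookkeeping at \( \lambda = 0 \) for the constrained case, rather than any of these limiting arguments, to be where the real work lies.
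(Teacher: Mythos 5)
Your closedness argument, the full-column-rank direction, and the uniqueness-on-\( \Lambda \) clause all match the paper's proof in substance: where the paper invokes Robinson and Hogan, you argue directly from joint continuity of \( F_\lambda \) and continuity of \( p^* \), and your observation that a closed, single-valued, bounded map is open is exactly how the paper handles both the full-rank case and the uniqueness case. These parts are fine.

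The gap is in the direction you yourself flag as the crux: rank deficiency implies failure of openness. Your plan --- pick \( \bar z \in \solfn(0) \) whose block structure is ``incompatible'' with the ray representation \( \wi = \alpha_\bi \vi(\lambda) \) of \cref{prop:sol-fn-cgl}, and pin down the limiting configuration of \( \vi(\lambda) \) as \( \lambda \to 0^+ \) --- is not carried out, and you concede it is unresolved in the constrained case. The paper needs none of this machinery. Its argument is: by \cref{lemma:bounded-solutions-cgl}, every \( w \in \solfn(\lambda) \) with \( \lambda > 0 \) satisfies \( \sum_{\bi \in \calB} \norm{\wi}_2 \leq \sum_{\bi \in \calB} \norm{\bar w_\bi}_2 \) for a fixed least-squares solution \( \bar w \), so \( \bigcup_{\lambda > 0} \solfn(\lambda) \) lies in one bounded set; meanwhile, when \( K = 0 \) and \( \Null(X) \neq \cbr{0} \), the set \( \solfn(0) = \cbr{\wmin(0) + z : z \in \Null(X)} \) is unbounded. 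Any \( \bar z \in \solfn(0) \) outside that bounded set cannot be the limit of any selection \( z_k \in \solfn(\lambda_k) \) with \( \lambda_k \downarrow 0 \), so openness fails at \( 0 \). No sign or magnitude bookkeeping and no analysis of \( \vi(\lambda) \) near zero is needed; indeed \cref{prop:sol-fn-cgl} is stated only for \( \lambda > 0 \), and the ray structure you want to contradict degenerates at \( \lambda = 0 \). You do correctly note that the cones \( \Ki^\top \wi \leq 0 \) could in principle cut off the recession directions of \( \solfn(0) \) --- the paper sidesteps this by exhibiting its counterexample in the \( K = 0 \) case --- but your proposed repair via limiting ray configurations would not close that issue either. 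Replacing your crux step with the unboundedness-versus-uniform-boundedness comparison is what makes the proof go through.
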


As a corollary of \cref{prop:map-continuity-cgl}, \( \solfn \) is open
on \( \R^+ \) if and only if \( X \) is full column rank.
Continuity of \( \solfn \) is impossible in general because,
as \citet{hogan1973point} shows, openness is a local stability property;
since \( \solfn(0) \) is unbounded, many ``unstable'' solutions
exist at \( \lambda = 0 \) which are not limit points of other solutions.
Continuity of the unique solution path is an immediate corollary of
\cref{prop:map-continuity-cgl}.

\begin{corollary}
	If the CGL solution is unique on an interval \( \Lambda \subset \R_+\),
	then it is also continuous on \( \Lambda \).
\end{corollary}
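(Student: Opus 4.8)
The plan is to read continuity off the two conclusions of \cref{prop:map-continuity-cgl} together with the definition of continuity for point-to-set maps. Recall that a map is \emph{continuous} precisely when it is both closed and open, so it suffices to verify these two properties for the restriction of \( \solfn \) to \( \Lambda \). Both ingredients are already supplied by the proposition; the entire content of the proof is checking that they survive passing to the sub-domain \( \Lambda \subseteq \R_+ \).

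First I would dispatch closedness. \cref{prop:map-continuity-cgl} asserts that \( \solfn \) is closed on all of \( \R_+ \). Given any sequence \( \{\lambda_k\} \subset \Lambda \) with \( \lambda_k \to \bar\lambda \in \Lambda \), together with \( z_k \in \solfn(\lambda_k) \) and \( z_k \to \bar z \), I note that this is in particular a sequence in \( \R_+ \) converging to a point of \( \R_+ \); applying closedness on \( \R_+ \) yields \( \bar z \in \solfn(\bar\lambda) \). Hence \( \solfn \) restricted to \( \Lambda \) is closed. Intuitively, shrinking the domain only removes sequences that closedness must account for, so no additional argument is needed beyond the citation.

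Next I would dispatch openness, which is where the uniqueness hypothesis does its work. The second half of \cref{prop:map-continuity-cgl} says that when the solution is unique on \( \Lambda \), the map \( \solfn \) is open at every \( \lambda \in \Lambda \). Openness at a point \( \bar\lambda \) is a statement quantified over all approaching sequences \( \lambda_k \to \bar\lambda \) in \( \R_+ \); since \( \Lambda \subseteq \R_+ \), specializing to sequences drawn from \( \Lambda \) only weakens the hypothesis, so openness of \( \solfn \) at each \( \bar\lambda \in \Lambda \) persists for the restricted map \( \solfn|_\Lambda \). With closedness and openness on \( \Lambda \) both established, continuity on \( \Lambda \) is immediate from the definition.

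The one place that warrants explicit care — and which I would flag rather than wave away — is exactly this transfer of closedness and openness from the full domain \( \R_+ \) to the restricted domain \( \Lambda \); it is the only non-mechanical point, though it is genuinely routine for the reasons above. As an even shorter alternative route, uniqueness renders \( \solfn \) single-valued on \( \Lambda \), and the remark preceding the proposition states that closedness and openness coincide with continuity for singleton-valued maps; one may then conclude continuity from the already-established closedness alone. I would nonetheless present the closed-plus-open argument, since it tracks the structure of \cref{prop:map-continuity-cgl} most transparently.
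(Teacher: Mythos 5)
Your proof is correct and matches the paper's approach: the paper gives no separate argument, stating only that the corollary is an "immediate" consequence of \cref{prop:map-continuity-cgl}, and your write-up simply spells out the closed-plus-open-equals-continuous reasoning (including the restriction to \( \Lambda \)) that the paper leaves implicit. The alternative single-valued route you mention is also consistent with the remark preceding the definitions in the paper.
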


In particular, if \( K = 0 \) and GGP holds, then the group lasso solution
is continuous for all \( \lambda > 0 \).
We can strengthen our continuity results when \( \Ki = 0 \) in another way:
by analyzing the dual of the group lasso problem,
we extend continuity from \( p^* \) to the optimal model fit.

\begin{restatable}{proposition}{modelFitContinuity}\label{prop:model-fit-continuity}
	If \( K = 0 \),
	then \( \hat y(\lambda)  \) is continuous on \( \R_+ \) and the penalty
	\( \sum_{\bi \in \calB} \norm{\wi(\lambda)}_2 \) is continuous
	for \( \lambda > 0 \).
\end{restatable}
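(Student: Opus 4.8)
The plan is to pass to the Lagrange dual of the group lasso, which for $K=0$ is the program $\min_w \half\norm{Xw-y}_2^2 + \lambda\sum_{\bi\in\calB}\norm{\wi}_2$. First I would complete the square in the Fenchel dual to show that the optimal residual $r(\lambda) = y - \hat y(\lambda)$ is exactly the Euclidean projection of $y$ onto the symmetric closed convex set
\[
	\Theta_\lambda = \cbr{\theta \in \R^n : \norm{\Xbi^\top \theta}_2 \le \lambda \text{ for all } \bi \in \calB}.
\]
By \cref{lemma:unique-fit-cgl} the fit $\hat y(\lambda)$, and hence $r(\lambda)$, is unique, so this identity makes $r$ a genuine single-valued map and reduces continuity of $\hat y$ to continuity of $\lambda \mapsto \Pi_{\Theta_\lambda}(y)$. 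For $\lambda>0$ I would exploit the homogeneity $\Theta_\lambda = \lambda\,\Theta_1$, which yields $\Pi_{\Theta_\lambda}(y) = \lambda\,\Pi_{\Theta_1}(y/\lambda)$; since projection onto a fixed closed convex set is nonexpansive and $\lambda\mapsto y/\lambda$ is continuous on $(0,\infty)$, this composition is continuous, so $\hat y$ is continuous on $(0,\infty)$.

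The delicate point — and the step I expect to be the main obstacle — is continuity at $\lambda = 0$, where the scaling formula degenerates because $y/\lambda \to \infty$. Here the sets are nested, $\Theta_0 = \Null(X^\top) \subseteq \Theta_\lambda$ for every $\lambda\ge 0$, and $\bigcap_{\lambda>0}\Theta_\lambda = \Theta_0$. I would argue directly: since $r(0) = \Pi_{\Theta_0}(y) \in \Theta_\lambda$, projecting onto the larger set gives $\norm{r(\lambda)-y}_2 \le \norm{r(0)-y}_2$, so $\cbr{r(\lambda)}$ stays bounded as $\lambda\downarrow 0$. Taking any sequence $\lambda_k\downarrow 0$ and a convergent subsequence $r(\lambda_{k_j})\to \bar r$, the constraint $\norm{\Xbi^\top r(\lambda_{k_j})}_2 \le \lambda_{k_j}$ forces $\bar r\in\Theta_0$, while the inequality above forces $\norm{\bar r - y}_2 \le \norm{r(0)-y}_2$; uniqueness of the projection onto $\Theta_0$ then gives $\bar r = r(0)$. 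As every convergent subsequence has the same limit and the family is bounded, $r(\lambda)\to r(0)$, so $\hat y$ is continuous on all of $\R_+$.

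Finally, continuity of the penalty $P(\lambda) := \sum_{\bi\in\calB}\norm{\wi(\lambda)}_2$ follows almost for free. The value $P(\lambda)$ is well defined independently of the optimal $w$ because $p^*(\lambda) = \half\norm{r(\lambda)}_2^2 + \lambda P(\lambda)$ and both $p^*(\lambda)$ and $r(\lambda)$ are unique; solving for $P$ gives
\[
	P(\lambda) = \frac{p^*(\lambda) - \half\norm{r(\lambda)}_2^2}{\lambda},
\]
a ratio of continuous functions — using \cref{prop:value-continuity-cgl} for $p^*$ and the first part for $r$ — whose denominator is nonzero for $\lambda > 0$, hence $P$ is continuous on $(0,\infty)$. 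Everything apart from the $\lambda=0$ boundary behavior of the fit is a composition of continuous maps, so the subsequence argument is where the real work lies.
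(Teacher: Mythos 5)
Your proof is correct, but it reaches continuity of \( \hat y \) by a genuinely different route than the paper. The paper works with the Lagrange dual in the variable \( \eta = X^\top \theta \) (its \cref{lemma:lagrange-dual}), verifies that the feasible-set map \( \lambda \mapsto \cbr{\eta \in \Row(X) : \norm{\eta_\bi}_2 \leq \lambda} \) is both closed and open using Hogan's theorems, and then invokes Robinson's stability result for parametric programs with unique solutions to conclude \( \eta^*(\lambda) = c(\lambda) \) is continuous, from which continuity of \( \hat y \) follows because \( \hat y \perp \Null(X^\top) \). You instead identify the residual directly as \( r(\lambda) = \Pi_{\Theta_\lambda}(y) \) with \( \Theta_\lambda = \cbr{\theta : \norm{\Xbi^\top\theta}_2 \le \lambda \ \forall \bi} \), and get continuity on \( (0,\infty) \) from the scaling identity \( \Pi_{\lambda\Theta_1}(y) = \lambda\,\Pi_{\Theta_1}(y/\lambda) \) plus nonexpansiveness of projection onto a fixed closed convex set, handling \( \lambda = 0 \) by a bounded-subsequence argument that exploits \( \Theta_0 = \Null(X^\top) \subseteq \Theta_\lambda \) and uniqueness of the projection. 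Your version is more elementary and self-contained: it avoids the point-to-set-map machinery entirely and yields continuity of \( r(\lambda) \) (hence \( \hat y \)) directly, rather than deducing it from continuity of \( X^\top r \). What the paper's approach buys is consistency with the rest of its toolkit --- the same Hogan/Robinson results are reused for \cref{prop:value-continuity-cgl} and \cref{prop:map-continuity-cgl} --- and a dual formulation that extends to the constrained case \( K \neq 0 \), where the clean projection characterization you rely on no longer holds. Your treatment of the penalty term is identical to the paper's: both write \( \sum_\bi \norm{\wi(\lambda)}_2 = \rbr{p^*(\lambda) - \half\norm{r(\lambda)}_2^2}/\lambda \) and cite \cref{prop:value-continuity-cgl}. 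One small presentational point: you should state explicitly that strong duality holds for the unconstrained group lasso and that the projection characterization follows from stationarity, since that identification is the load-bearing step; it is standard, but it is the one place where your argument leans on a fact not proved inline.
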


\subsection{The Min-Norm Path}\label{sec:min-norm}

Now we turn our attention to the min-norm solution path.
Min-norm solutions are typically used in under-determined problems since the
norm of the solution is connected to generalization
\citep{neyshabur2015bias, gunasekar2017implicit}.
Furthermore, the min-norm solution is
a function of \( \lambda \), unlike \( \solfn \).
Throughout this section, \( \acts = \act(\wmin) \) denotes
the active set of the min-norm solution.

Unfortunately, studying the min-norm path immediately encounters a
surprising difficulty:
as opposed to least-squares problems or the lasso (see \citet{tibshirani2013unique}),
the min-norm solution may not lie in the row space of the active set.

\begin{restatable}{proposition}{rowCE}\label{prop:row-ce}
	Suppose \( \Ki = 0 \).
	There exists \( (X, y) \) and
	\( \lambda > 0 \) such that
	\( \wmin_\acts(\lambda) \not \in \Row(\Xas) \).
\end{restatable}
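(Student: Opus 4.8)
The plan is to exhibit an explicit counterexample rather than argue abstractly, since the proposition is an existence claim. The structural insight driving the construction comes from \cref{prop:sol-fn-cgl}: when \( \Ki = 0 \) we have \( \vi = \ci = \Xbi^\top r(\lambda) \), so every optimal \( w \) satisfies \( \wi = \alpha_\bi \ci \) with a \emph{separate} nonnegative scalar \( \alpha_\bi \) per block. Although each \( \ci \) lies in \( \Range(\Xbi^\top) \), scaling distinct blocks by different factors can push the concatenation out of the joint row space \( \Row(\Xas) = \Null(\Xas)^\perp \). To make this precise I would test membership against a null vector: for two active blocks \( b_1, b_2 \) and any \( \eta \in \Null(X) \) with blocks \( \eta_{b_1}, \eta_{b_2} \), feasibility forces \( X_{b_1}\eta_{b_1} = -X_{b_2}\eta_{b_2} =: g \), where \( g \in \text{col}(X_{b_1}) \cap \text{col}(X_{b_2}) \), and using \( \langle \Xbi^\top r, \eta_{b_i}\rangle = \langle r, X_{b_i}\eta_{b_i}\rangle \) one gets
\[
\langle w, \eta \rangle = \alpha_{b_1}\langle c_{b_1}, \eta_{b_1}\rangle + \alpha_{b_2}\langle c_{b_2}, \eta_{b_2}\rangle = (\alpha_{b_1} - \alpha_{b_2})\langle r(\lambda), g\rangle .
\]
Hence \( w \notin \Row(\Xas) \) as soon as the column spaces overlap in a direction \( g \) with \( \langle r(\lambda), g\rangle \neq 0 \) and the block scalars differ, \( \alpha_{b_1} \neq \alpha_{b_2} \).

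These three design criteria guide a minimal example. I would take \( n = 3 \), \( d = 4 \), blocks \( b_1 = \{1,2\}, b_2 = \{3,4\} \), with \( X_{b_1} = [e_1 \; e_3] \) and \( X_{b_2} = [e_2 \; e_3] \), so \( \text{col}(X_{b_1}) \cap \text{col}(X_{b_2}) = \Span\{e_3\} = \Span\{g\} \) and \( \Null(X) = \Span\{(0,1,0,-1)\} \). Choosing residual \( r = (1,1,1) \) gives \( c_{b_1} = (1,1) = c_{b_2} \), both of norm \( \sqrt 2 \), so setting \( \lambda = \sqrt 2 \) places both blocks in \( \equi \); crucially \( g_1 := X_{b_1}c_{b_1} = (1,0,1) \) and \( g_2 := X_{b_2}c_{b_2} = (0,1,1) \) remain linearly independent even though \( c_{b_1} = c_{b_2} \). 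I would then pick \( w = (1,1,2,2) \), i.e. \( \alpha_{b_1} = 1 \neq 2 = \alpha_{b_2} \), and set \( \hat y = Xw = (1,2,3) \), \( y = \hat y + r = (2,3,4) \).

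Finally I would verify the three things that make this a valid witness. Optimality: since \( Xw - y = -r \), the stationarity line of \cref{eq:kkt-cgl} reduces to \( c_{b_i} = \lambda \wi/\norm{\wi}_2 \), which holds because \( \wi \) is a positive multiple of \( c_{b_i} \) with \( \norm{c_{b_i}}_2 = \lambda \), and there are no inactive blocks to check. Uniqueness: because \( g_1, g_2 \) are independent, the fit constraint \( \alpha_{b_1} g_1 + \alpha_{b_2} g_2 = \hat y \) in \cref{prop:sol-fn-cgl} has the single solution \( (1,2) \), so \( \solfn(\lambda) \) is a singleton and \( w = \wmin(\lambda) \) with \( \acts = \{b_1,b_2\} \), \( \Xas = X \). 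Non-membership: pairing with the null vector gives \( \langle w, (0,1,0,-1)\rangle = w_2 - w_4 = -1 \neq 0 \), so \( \wmin_\acts \notin \Null(\Xas)^\perp = \Row(\Xas) \).

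I expect the only real obstacle to be the \emph{simultaneous} satisfaction of the design criteria: the equicorrelation requirement \( \norm{c_{b_1}}_2 = \norm{c_{b_2}}_2 = \lambda \) is what makes naive attempts collapse. Introducing asymmetry in the blocks to force \( \alpha_{b_1} \neq \alpha_{b_2} \) tends to drag the residual into a symmetric position that re-equalizes the scalars, or makes \( g_1, g_2 \) collinear and thereby restores row-space membership. The resolution is to let the \emph{shared} coordinate \( e_3 \) carry the correlation responsible for \( \langle r(\lambda), g\rangle \neq 0 \) while the \emph{private} coordinates \( e_1, e_2 \) keep \( g_1, g_2 \) independent; this decouples equicorrelation from distinctness of the scalars and from row-space escape, after which the verification is routine.
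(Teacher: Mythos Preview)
Your construction is correct and complete: the example checks out on optimality (the KKT stationarity condition holds block-wise), uniqueness (the vectors \( g_1 = (1,0,1) \) and \( g_2 = (0,1,1) \) are independent, so \cref{lemma:unique-cgl} applies and \( \solfn(\lambda) \) is a singleton), and non-membership (the pairing with the null vector \( (0,1,0,-1) \) is nonzero).

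Your route is genuinely different from the paper's. The paper argues indirectly: it first builds a \( 2\times 3 \) example for the \emph{min-group-norm interpolation} problem (\cref{prop:min-norm-interp}) whose unique solution lies outside \( \Row(X) \), then invokes a limiting lemma (\cref{lemma:lambda-convergence}) stating that every limit point of \( \wmin(\lambda) \) as \( \lambda\downarrow 0 \) is such an interpolant, and finally concludes by continuity that for all sufficiently small \( \lambda \) the null-space component of \( \wmin(\lambda) \) must be nonzero. This yields only an existential ``some small enough \( \lambda \)'' rather than a concrete value. Your approach is more elementary and fully explicit: you engineer the residual \( r \) and the block geometry so that equicorrelation, uniqueness, and row-space escape hold \emph{simultaneously} at a named \( \lambda = \sqrt{2} \), with no limiting argument needed. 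The paper's detour through interpolation does buy a result of independent interest (the min-group-norm interpolant need not lie in the row space), but as a proof of \cref{prop:row-ce} your direct construction is cleaner and self-contained.
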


Since the min-norm solution is not given by projecting onto \( \Row(\Xas) \),
how can we compute and study it?
Again, our characterization for the solution set provides a way forward.

\begin{restatable}{proposition}{minNormProgram}\label{prop:min-norm-program}
	Let \( \lambda > 0 \)
	and consider the program:
	\begin{equation}\label{eq:min-norm-program}
		\begin{aligned}
			\alpha^* = \argmin_{\alpha \geq 0} \norm{\alpha}_2^2
			\, \, \text{ s.t.} \, \,
			\sum_{\bi \in \calS_\lambda} \alpha_{\bi} \Xbi \vi = \hat y.
		\end{aligned}
	\end{equation}
	Then the min-norm solution is given by
	\( \wmin_\bi = \alpha^*_\bi \vi \).
\end{restatable}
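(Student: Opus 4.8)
The plan is to use the explicit description of the optimal set from \cref{prop:sol-fn-cgl} to reduce the infinite-dimensional min-norm problem over $w \in \R^d$ to the finite-dimensional quadratic program \cref{eq:min-norm-program} over the nonnegative scalars $\alpha$. By \cref{prop:sol-fn-cgl}, every $w \in \solfn(\lambda)$ is determined by a vector $\alpha \geq 0$ through $\wi = \alpha_\bi \vi$ for $\bi \in \calS_\lambda$ and $\wi = 0$ otherwise, subject to the single linear constraint $X w = \hat y$. Substituting this parameterization into the fit constraint gives exactly $\sum_{\bi \in \calS_\lambda} \alpha_\bi \Xbi \vi = \hat y$, which is the feasibility constraint of \cref{eq:min-norm-program}. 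So the two feasible sets are in bijection, and it remains to check that minimizing $\norm{w}_2$ corresponds to minimizing $\norm{\alpha}_2^2$ under this bijection.

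The key computation is the relationship between $\norm{w}_2^2$ and $\norm{\alpha}_2^2$. First I would compute
\begin{equation*}
	\norm{w}_2^2 = \sum_{\bi \in \calS_\lambda} \norm{\wi}_2^2
	= \sum_{\bi \in \calS_\lambda} \alpha_\bi^2 \norm{\vi}_2^2.
\end{equation*}
The crucial observation is that every $\bi \in \calS_\lambda \subseteq \equi$ satisfies $\norm{\vi}_2 = \lambda$ by definition of the equicorrelation set, since $\vi = \ci - \Ki \rmin_\bi$ and $\equi = \cbr{\bi : \norm{\vi}_2 = \lambda}$. Therefore $\norm{w}_2^2 = \lambda^2 \sum_{\bi \in \calS_\lambda} \alpha_\bi^2 = \lambda^2 \norm{\alpha}_2^2$. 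Since $\lambda > 0$ is a fixed positive constant, minimizing $\norm{w}_2^2$ over the optimal set is equivalent to minimizing $\norm{\alpha}_2^2$ over the feasible $\alpha \geq 0$, and the minimizers coincide under the bijection. Thus $\wmin_\bi = \alpha^*_\bi \vi$, as claimed.

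The main subtlety to handle carefully is the role of $\calS_\lambda$ versus $\equi$. \cref{prop:sol-fn-cgl} parameterizes solutions using $\calS_\lambda$ (blocks supported by \emph{some} solution), and it is important that the program \cref{eq:min-norm-program} sums over the same index set, so that no spurious optimal direction is omitted and no infeasible block is included; I would note that any $\bi \in \equi \setminus \calS_\lambda$ must have $\alpha_\bi = 0$ in every solution, so including or excluding such blocks from the sum does not change the optimizer. A second point worth stating explicitly is that the min-norm solution is well defined: the feasible set of \cref{eq:min-norm-program} is a nonempty closed convex set (a polyhedron intersected with the nonnegative orthant), and $\norm{\alpha}_2^2$ is strictly convex and coercive, so the minimizer $\alpha^*$ exists and is unique, which in turn guarantees that $\wmin$ itself is the unique minimizer of $\norm{w}_2$ over $\solfn(\lambda)$. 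The remaining steps are routine and follow directly from \cref{prop:sol-fn-cgl} together with the equicorrelation identity $\norm{\vi}_2 = \lambda$.
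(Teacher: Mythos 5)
Your proposal is correct and follows essentially the same route as the paper's own proof: parameterize $\solfn(\lambda)$ via \cref{prop:sol-fn-cgl}, use $\calS_\lambda \subseteq \equi$ to get $\norm{\vi}_2 = \lambda$ and hence $\norm{w}_2^2 = \lambda^2 \norm{\alpha}_2^2$, and conclude that the two minimization problems have corresponding optima. (Your constant $\lambda^2$ is in fact the correct one; the paper's proof writes $\lambda$ there, a harmless typo.)
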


\cref{eq:min-norm-program} is a quadratic program (QP) that can be solved with
off-the-shelf software like \textsc{cvxpy} \citep{diamond2016cvxpy}.
If \( |\calB| \) and \( d \) are large, this QP may be too expensive to handle directly.
In such situations, we propose to the solve the following elastic-net-type problem
\begin{equation}\label{eq:l2-penalized-cgl}
	\begin{aligned}
		\min_{w} \,
		 & \half \norm{X w - y}_2^2 + \lambda \sum_{\bi \in \calB} \norm{\wi}_2
		+ \frac{\delta}{2} \norm{w}_2^2                                                 \\
		 & \quad \text{s.t.} \quad \Ki^\top \wi \leq 0  \text{ for all } \bi \in \calB.
	\end{aligned}
\end{equation}

This \( \ell_2 \)-penalized CGL problem is equivalent to
CGL with modified dataset \( (\tilde X, \tilde y) \) (\cref{lemma:l2-reformulation}).
Since the optimization problem is strongly convex (\( \tilde X \) is full column rank),
invoking \cref{prop:map-continuity-cgl} implies the solution \( w^\delta(\lambda) \) is continuous for all \( \lambda \geq 0 \).
Moreover, as \( \delta \into 0 \), the penalized solution converges
to the min-norm solution to CGL.
\begin{restatable}{proposition}{penConvergence}\label{prop:l2-convergence}
	The solution to the \( \ell_2 \)-penalized problem converges to
	the min-norm solution as \( \delta \rightarrow 0 \).
	That is,
	\[
		\lim_{\delta \into 0} w^\delta(\lambda) = \wmin(\lambda).
	\]
\end{restatable}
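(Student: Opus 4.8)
The plan is to treat this as an instance of the classical Tikhonov selection principle: among all CGL optima, the penalty \( \frac{\delta}{2}\norm{w}_2^2 \) acts as a tie-breaker that, in the limit \( \delta \into 0 \), singles out the element of smallest Euclidean norm. Write \( F_\lambda^\delta(w) = F_\lambda(w) + \frac{\delta}{2}\norm{w}_2^2 \) for the penalized objective and let \( w^\delta(\lambda) \) be its minimizer over the feasible polyhedron \( \calP = \cbr{w : \Ki^\top \wi \leq 0 \,\, \forall\, \bi \in \calB} \); this minimizer is unique because \( F_\lambda^\delta \) is strongly convex and \( \calP \) is closed and convex, as already noted in the text preceding the statement. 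Throughout, \( \lambda \) is fixed and I suppress it.

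First I would establish that the path \( \cbr{w^\delta}_{\delta > 0} \) is uniformly bounded. Since \( \wmin \in \solfn(\lambda) \subseteq \calP \) is feasible and \( w^\delta \) minimizes \( F_\lambda^\delta \) over \( \calP \), optimality gives \( F_\lambda^\delta(w^\delta) \leq F_\lambda^\delta(\wmin) \), i.e.
\[
	F_\lambda(w^\delta) + \frac{\delta}{2}\norm{w^\delta}_2^2 \leq F_\lambda(\wmin) + \frac{\delta}{2}\norm{\wmin}_2^2.
\]
Because \( \wmin \) is a CGL optimum and \( w^\delta \) is feasible, \( F_\lambda(w^\delta) \geq F_\lambda(\wmin) = p^*(\lambda) \); subtracting this yields \( \norm{w^\delta}_2 \leq \norm{\wmin}_2 \) for every \( \delta > 0 \), so the path lies in a fixed compact ball.

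Next I would extract the limit. Fix any sequence \( \delta_n \into 0 \); by boundedness there is a subsequence with \( w^{\delta_{n_k}} \into \bar w \), and \( \bar w \in \calP \) since \( \calP \) is closed. The displayed inequality rearranges to \( F_\lambda(w^\delta) \leq p^*(\lambda) + \frac{\delta}{2}(\norm{\wmin}_2^2 - \norm{w^\delta}_2^2) \leq p^*(\lambda) + \frac{\delta}{2}\norm{\wmin}_2^2 \), whose right-hand side tends to \( p^*(\lambda) \); continuity of \( F_\lambda \) then gives \( F_\lambda(\bar w) \leq p^*(\lambda) \), while feasibility forces \( F_\lambda(\bar w) \geq p^*(\lambda) \), so \( \bar w \in \solfn(\lambda) \). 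Passing to the limit in \( \norm{w^\delta}_2 \leq \norm{\wmin}_2 \) gives \( \norm{\bar w}_2 \leq \norm{\wmin}_2 \). Since \( \solfn(\lambda) \) is closed and convex, its minimum-norm element \( \wmin \) is the unique projection of the origin onto it, forcing \( \bar w = \wmin \). As every subsequential limit of every sequence \( \delta_n \into 0 \) equals \( \wmin \) and the path is precompact, the whole path converges, which is exactly \( \lim_{\delta \into 0} w^\delta(\lambda) = \wmin(\lambda) \).

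I expect the only delicate point to be the optimality of the limit \( \bar w \): boundedness and feasibility are immediate, but certifying \( \bar w \in \solfn(\lambda) \) requires the two-sided squeeze on \( F_\lambda(w^\delta) \) combined with continuity of \( F_\lambda \), and the final identification \( \bar w = \wmin \) relies on uniqueness of the min-norm element of the convex set \( \solfn(\lambda) \). The passage from subsequential to full convergence is then a routine precompactness argument and should not pose difficulty.
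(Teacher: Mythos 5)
Your proposal is correct, and it takes a genuinely different and noticeably simpler route than the paper. The paper's proof shares your first step (establishing \( \norm{w^\delta(\lambda)}_2 \leq \norm{\wmin(\lambda)}_2 \) from optimality of \( w^\delta \) against the feasible competitor \( \wmin \), hence precompactness of the path), but then certifies optimality of a subsequential limit \( \bar w \) by verifying the full KKT system: it passes to the limit in the stationarity condition, constructs limiting dual multipliers \( \bar \rho \) via a nested-retractive-sets argument (\`a la Bertsekas) to ensure the limit of \( \Ki \ri^{\delta_k} \) is realized by some feasible \( \bar \ri \geq 0 \), identifies the limiting subgradient on the active set in closed form, and checks feasibility and complementary slackness. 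You replace all of that with a purely primal value squeeze: \( p^*(\lambda) \leq F_\lambda(w^\delta) \leq p^*(\lambda) + \frac{\delta}{2}\norm{\wmin}_2^2 \), so continuity of \( F_\lambda \) and closedness of the feasible polyhedron force \( F_\lambda(\bar w) = p^*(\lambda) \) and hence \( \bar w \in \solfn(\lambda) \); uniqueness of the minimum-norm element of the closed convex set \( \solfn(\lambda) \) then pins down \( \bar w = \wmin \). Every step of yours checks out, and your argument avoids any discussion of dual variables entirely. What the paper's heavier KKT route buys is a limiting primal--dual pair \( (\bar w, \bar \rho) \), which is of independent use elsewhere in their framework (e.g., for the dual-parameter computations and sensitivity results), but for the convergence claim itself your argument is sufficient and cleaner. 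One cosmetic note: the paper also handles \( \lambda = 0 \) explicitly; your argument covers that case without modification since it never divides by \( \lambda \) or invokes the subgradient of the group norm.
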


\vspace{-2ex}%
Uniqueness and continuity of the solution path for the penalized CGL
problem mean we may prefer to solve
Problem~\eqref{eq:l2-penalized-cgl} with small \( \delta > 0 \) when
tuning \( \lambda \).
\cref{prop:l2-convergence} guarantees that the bias induced by \( \delta \)
will be small and a polishing step with \( \delta = 0 \) can always be used.
Finally, non-zero \( \delta \) ensures the objective is strongly convex,
meaning we can use linearly convergent methods to solve the problem.


\subsection{Sensitivity}

Now we move onto the problem of sensitivity of a solution
\( w \in \solfn(\lambda, y) \)
to perturbations, either in \( \lambda \) or the targets \( y \).
The main tool for measuring such perturbations are the gradients, for example
\( \nabla_\lambda w(\lambda, y) \).
However, since the solution path of the group lasso is non-smooth,
we must cope with the fact that gradients are not available everywhere.

We show that the gradients of minimal solutions exist
almost everywhere under additional constraint qualifications (CQs).
We do so by considering a \emph{reduced} problem and showing that the
solution to this reduced problem is exactly \( \wa \).
Define the reduced problem as follows:
\begin{equation}\label{eq:constraint-reduced}
	\begin{aligned}
		\min_{\wa} \,
		 & \half \norm{\Xa \wa - y}_2^2 + \lambda \sum_{\bi \in \act} \norm{\wi}_2 \\
		 & \quad \text{s.t.} \quad \Ka^\top \wa \leq 0
	\end{aligned}
\end{equation}
If \( \act(w) \) is the support of a minimal solution, then \( w \) is the only
solution with support \( \act \) and \cref{eq:constraint-reduced} can be used
to compute the unique active weights.
\begin{restatable}{proposition}{constraintReduced}\label{prop:constraint-reduced}
	Let \( w \in \solfn(\lambda, y) \) be minimal.
	The active blocks \( \wa \) are the unique solution to
	Problem~\eqref{eq:constraint-reduced}.
\end{restatable}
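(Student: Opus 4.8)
The plan is to establish three things in turn: that the active weights $\wa$ are feasible for the reduced problem~\eqref{eq:constraint-reduced}, that they are optimal, and that they are the unique optimum. The linchpin throughout is the identity $\Xa \wa = X w$, which holds because $w \in \solfn(\lambda, y)$ has $\w_{b_j} = 0$ for every inactive block $b_j \in \calB \setminus \act$. Consequently the reduced residual $\Xa \wa - y$ equals the full residual $X w - y$, and the reduced correlations agree with the full ones, $\Xbi^\top(\Xa \wa - y) = \ci$ for $\bi \in \act$. Feasibility is then immediate: since $w$ is feasible for the full CGL problem~\eqref{eq:constrained-gl} we have $\Ki^\top \wi \le 0$ for all $\bi$, in particular for $\bi \in \act$, which is exactly feasibility of $\wa$ for~\eqref{eq:constraint-reduced}.

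For optimality I would verify that $\wa$ satisfies the KKT conditions~\eqref{eq:kkt-cgl} of the reduced problem. Taking the active-block rows of the full KKT system at $(w, \rmin)$ together with the active subgradient entries $s_\bi \in \partial \lambda \norm{\wi}_2$ and using $\Xbi^\top(\Xa\wa - y) = \ci$, these rows read precisely as the reduced KKT conditions with dual variables $\rho_\act^*$. Thus $(\wa, \rho_\act^*)$ is a KKT pair for~\eqref{eq:constraint-reduced}, and since the reduced problem is convex with linear constraints, the KKT conditions are sufficient and $\wa$ is optimal.

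For uniqueness I would apply \cref{lemma:unique-cgl} to the reduced problem, which is itself an instance of CGL with data $\Xa$, blocks $\act$, and constraints $\Ka$. The lemma requires that the vectors $\{\Xbi v_\bi^{\mathrm{red}}\}$ indexed by the support set of the reduced problem be linearly independent. Because $\wa$ is optimal with every block nonzero, that support set equals $\act$. Moreover, for each active block the dual-fit relation~\eqref{eq:dual-active} gives $\Ki \ri = \ci - \lambda \wi/\norm{\wi}_2$ in both the full and reduced problems (the correlations and dual fits coincide by $\Xa\wa = Xw$), so the reduced $v_\bi^{\mathrm{red}} = \ci - \Ki \ri = \lambda \wi/\norm{\wi}_2$ is a positive multiple of $\wi$, whence $\Xbi v_\bi^{\mathrm{red}}$ is parallel to $\Xbi \wi$. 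Therefore $\{\Xbi v_\bi^{\mathrm{red}}\}_\act$ is linearly independent if and only if $\{\Xbi \wi\}_\act$ is, and the latter holds because $w$ is minimal (\cref{prop:minimal-solutions}). \cref{lemma:unique-cgl} then yields uniqueness.

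The step I expect to require the most care is the uniqueness argument, specifically justifying that the reduced problem's support set, dual parameters, and $\vi$ all agree with their full-problem counterparts on the active blocks; this hinges entirely on the residual identity $\Xa\wa = Xw$ and the min-norm dual KKT relation~\eqref{eq:dual-active}. An alternative route to uniqueness is to apply \cref{cor:minimal-unique}: one first checks that $\wa$ is a minimal solution of the reduced problem via \cref{prop:minimal-solutions} (again using linear independence of $\{\Xbi \wi\}_\act$), then rules out solutions of strictly smaller support using the same independence fact. Both routes reduce to the identity $\Xa \wa = Xw$ and the linear independence granted by minimality of $w$.
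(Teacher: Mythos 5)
Your proposal is correct and takes essentially the same route as the paper: both show that \( \wa \) attains the reduced optimum (the paper via a value-sandwich argument based on zero-padding, you via verifying the reduced KKT system, which also supplies the dual optimality of \( \rmin_\act \) that the paper leaves as a remark), and both derive uniqueness from linear independence of \( \cbr{\Xbi \wi}_{\act} \) granted by minimality. The only cosmetic difference is that you route uniqueness through \cref{lemma:unique-cgl} applied to the reduced CGL instance (after checking \( v_\bi^{\mathrm{red}} = \lambda \wi / \norm{\wi}_2 \)), whereas the paper invokes \cref{prop:sol-fn-cgl} directly to write any competing solution as \( \alpha_\bi' \vi \) and contradict minimality.
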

We use this fact to obtain a local solution function
for CGL using the implicit function theorem.
Given a solution \( w \), let
\[
	\calB(w) = \bigcup_{\bi \in \act} \cbr{j \in [a_\bi]: [\Ki]_j^\top \wi = 0 },
\]
be the active constraints.
We now need two classical CQs.

\begin{definition}[LICQ]
	\( w \! \in \! \solfn(\lambda, y) \) satisfies linear independence
	CQ if \( \cbr{[K]_j : j \! \in \! \calB(w)} \) are linearly independent.
\end{definition}

\begin{definition}[SCS]
	Primal solution \( w \in \solfn(\lambda) \) satisfies
	strict complementary slackness if there exists a dual optimal parameter
	\( \rho \) such that \( [\rho]_j > 0 \) for every \( j \in \calB \).
\end{definition}

Now we can state our main differential sensitivity result.

\begin{restatable}{proposition}{localSolFn}\label{prop:local-sol-fn}
	Let \( w \in \solfn(\bar \lambda, \bar y) \) be minimal and suppose
	\( w \) satisfies LICQ on the active set \( \act \)
	and SCS on the equicorrelation set \( \equi \).
	Then \( w \) has a locally continuous solution function
	\( (\lambda, y) \mapsto w(\lambda, y) \).
	Moreover, if
	\[
		D =
		\begin{bmatrix}
			\Xa^\top \Xa + M(\bar w) & \Ka                            \\
			\bar \ra \odot \Ka       & \text{diag}(\Ka^\top \bar \wa)
		\end{bmatrix},
	\]
	where \( \odot \) is the element-wise product,
	\( u_{b_i} = \frac{\wi}{\norm{\wi}_2} \),
	\( u \) is the concatenation of these vectors,
	and \( M \) is block-diagonal projection matrix in \cref{eq:projection-matrix-block},
	then the Jacobians of \( w(\bar \lambda, \bar y) \)
	with respect to \( \lambda \)
	and \( y \) are given as follows:
	\[
		\nabla_\lambda w(\bar \lambda, \bar y)
		= - [D^{-1}]_\act u_\act
		\hspace{0.5em}
		\nabla_y w(\bar \lambda, \bar y)
		= [D^{-1}]_\act \Xa^\top,
	\]
	where \( [D_\act^{-1}]_\act \) is the \( |\act| \times |\act| \)
	dimensional leading principle submatrix of \( D \).
\end{restatable}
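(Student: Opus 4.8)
The plan is to derive a local solution function via the implicit function theorem (IFT) applied to the KKT system of the reduced problem \eqref{eq:constraint-reduced}, then read off the Jacobians from the inverse of the Jacobian of that system. The crucial enabling fact is \cref{prop:constraint-reduced}, which tells us that for a minimal solution the active weights \( \wa \) are the \emph{unique} solution of the reduced problem; this lets me work on the support \( \act \) alone and avoid the non-uniqueness that obstructs sensitivity analysis of the full problem.

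First I would write the KKT conditions for \eqref{eq:constraint-reduced} at \( (\bar\lambda, \bar y) \). These consist of the stationarity equation
\[
	\Xa^\top(\Xa \wa - y) + \Ka \ra + \lambda u_\act = 0,
\]
where \( u_\bi = \wi / \norm{\wi}_2 \) is the (smooth, since \( \wi \neq 0 \) on the active set) subgradient of the group-norm term, together with the complementary-slackness equations \( [\ra]_j [\Ka]_j^\top \wa = 0 \). I would collect the primal-dual variable \( (\wa, \ra) \) and treat the stacked KKT map \( G(\wa, \ra; \lambda, y) \) as the function to which IFT is applied. The differential of \( G \) in \( (\wa, \ra) \) is exactly the matrix \( D \) in the statement: the top-left block is \( \Xa^\top \Xa + M(\bar w) \), where \( M \) is the block-diagonal matrix whose \( \bi \)-th block is \( \frac{\lambda}{\norm{\wi}_2}(I - u_\bi u_\bi^\top) \), the Jacobian of \( \lambda u_\bi \) (this is \eqref{eq:projection-matrix-block}, a scaled projection onto the orthogonal complement of \( u_\bi \)); the top-right block is \( \Ka \) from differentiating the Lagrange term; and the bottom row comes from differentiating complementary slackness, giving \( \bar\ra \odot \Ka \) and \( \diag(\Ka^\top \bar\wa) \). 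Differentiating \( G \) with respect to the parameters gives \( \partial_\lambda G = (u_\act, 0)^\top \) and \( \partial_y G = (-\Xa^\top, 0)^\top \).

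Next I would verify that \( D \) is invertible, which is what IFT requires and is where the two constraint qualifications enter. LICQ ensures the active constraint gradients \( \{[K]_j : j \in \calB(w)\} \) are linearly independent, so the constraint block has full rank; SCS guarantees strictly positive multipliers on active constraints, which keeps the complementary-slackness block non-degenerate and prevents the bottom-right \( \diag(\Ka^\top \bar\wa) \) structure from collapsing rank. Combined with the positive-definiteness contributed by \( M \) on the orthogonal directions, these give nonsingularity of \( D \). IFT then yields a \( C^1 \) map \( (\lambda, y) \mapsto (\wa(\lambda, y), \ra(\lambda, y)) \) locally, and the chain rule gives
\[
	\nabla_{(\lambda, y)} (\wa, \ra) = -D^{-1} \, \nabla_{(\lambda,y)} G.
\]
Restricting the output to the \( \wa \)-block (the leading \( |\act| \times |\act| \) principal submatrix of \( D^{-1} \), written \( [D^{-1}]_\act \)) and substituting the two parameter-derivatives recovers the claimed formulas \( \nabla_\lambda w = -[D^{-1}]_\act u_\act \) and \( \nabla_y w = [D^{-1}]_\act \Xa^\top \). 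Local continuity of the solution function follows from the \( C^1 \) conclusion of IFT, after checking that the active set does not change in a neighborhood (true by continuity of the strictly positive multipliers and the nonzero weights).

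\textbf{The main obstacle} I anticipate is establishing invertibility of \( D \) cleanly, since \( D \) is non-symmetric and does not factor through a single variational problem — the complementary-slackness rows break the symmetry of the standard KKT Jacobian. I would handle this by a block-elimination argument: use SCS to solve the bottom equations for the constraint-direction components of \( \dot\wa \), substitute into the top block, and then invoke LICQ together with positive-definiteness of \( \Xa^\top\Xa + M \) on the remaining subspace to conclude the reduced system is nonsingular. A secondary subtlety is confirming that the active set \( \act \) and active-constraint set \( \calB(w) \) are locally constant, so that the reduced problem genuinely tracks the original solution; this uses minimality (via \cref{cor:minimal-unique}) to pin down the support and SCS/continuity to keep the constraint activity fixed.
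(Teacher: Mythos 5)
Your overall route matches the paper's: restrict to the reduced problem \eqref{eq:constraint-reduced} via \cref{prop:constraint-reduced}, apply implicit-function-theorem sensitivity to its KKT system, and read the Jacobians off the inverse of the KKT Jacobian \( D \). (The paper invokes the classical sensitivity theorem of Fiacco et al.\ rather than re-deriving the IFT computation, but the content is the same.) There are, however, two places where your argument has a genuine gap.

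First, you assert positive-definiteness of \( \Xa^\top \Xa + M \) without justification, and this is precisely where minimality must be used. The matrix \( M \) is only positive semi-definite: its kernel consists of vectors that are blockwise collinear with \( \wa \). So \( \Xa^\top \Xa + M \) fails to be positive definite exactly when some nonzero blockwise-collinear perturbation lies in \( \Null(\Xa) \), i.e.\ when \( \sum_{\bi \in \act} \beta_\bi \Xbi \wi = 0 \) has a nontrivial solution --- which is ruled out by linear independence of \( \cbr{\Xbi \wi}_{\act} \), i.e.\ by minimality (\cref{prop:minimal-solutions}). The paper isolates this as a separate lemma (second-order stationarity of the reduced problem, \cref{lemma:second-order-stationary}); without it your block-elimination argument for the invertibility of \( D \) does not go through.

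Second, the passage back from the reduced problem to the full problem is not a matter of ``the active set being locally constant.'' What must be shown is that the reduced solution, padded with zeros on \( \calB \setminus \act \), continues to satisfy the KKT conditions of the \emph{full} problem for nearby \( (\lambda, y) \). Blocks in \( \calB \setminus \equi \) are handled by continuity of a strict inequality. The delicate case is \( \bi \in \equi \setminus \act \), where \( \norm{\ci - \Ki \ri}_2 = \lambda \) holds with equality at the base point; a perturbation of \( (\lambda, y) \) can push the correlation norm above \( \lambda \), in which case the zero block would violate stationarity and the padded point would no longer be optimal. This is exactly why SCS is assumed on all of \( \equi \) and not just on \( \act \): a strictly positive multiplier \( \ri > 0 \) for these blocks provides the slack needed to re-solve for a nearby feasible \( \ri(\lambda, y) \geq 0 \) keeping \( \norm{\ci - \Ki \ri}_2 \leq \lambda \). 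Your sketch mentions SCS here but does not identify this as the step it is needed for, and the ``locally constant active set'' framing would not produce the required verification.
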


\section{Specialization to Neural Networks}\label{sec:specialization}


\begin{algorithm}[tb]
	\caption{Approximate ReLU Pruning}
	\label{alg:pruning-solutions-nn}
	\begin{algorithmic}
		\STATE {\bfseries Input:} data matrix \( Z \), weights \( W_{1}, w_{2} \),
		score function \( s \).
		\STATE \( m \gets \abs{\act(W_1)} \)
		\STATE \( (W_1^0, w_2^0) \gets (W_{1}, w_2) \).
		\STATE \( q^0_i \gets (X W^0_{1i})_+ w^0_2 \)
		\FOR {\( k = 0 \text{ to } m -1 \)}
		\STATE \( j^k = \argmin_{i \in \act(W_{1i}^k)} s(W_{1i}^k) \)
		\STATE \( \beta^k = \argmin_{\beta} \norm{ \sum_{i \neq j^k} \beta_{i} q^k_i - q_{j^k}^k}_2^2 \)
		\STATE \( i^k \gets \argmax_{i} \cbr{|\beta_i| : i \in \act(W_{1i}^k)}  \)
		\STATE \( t^k \gets 1/|\beta_{i^k}| \)
		\STATE \( (W_{1i}^{k+1}, w_{2i}^{k+1}) \gets (W_{1i}^k, w_{2i}^k)
		\cdot (1 - t^k \beta_i)^{1/2} \)
		\STATE \( q^{k+1}_i \gets q_i^{k} \cdot (1 - t^k \beta_i) \)
		\ENDFOR
		\STATE {\bfseries Output:} final weights \( W_1^k, w_2^k \)
	\end{algorithmic}
\end{algorithm}

Now we specialize our results for CGL to two-layer neural networks
with ReLU or gated ReLU activations.
We state and prove our results for ReLU networks, but they are easily
adapted to gated ReLUs.
We start by interpreting conditions for uniqueness in the context
of non-convex ReLU models and then move on to discussing optimal pruning
for ReLU networks and continuity properties of the solution function.
Proofs are deferred to \cref{app:specializations}.

\textbf{Optimal Sets and Uniqueness}:
Combining the mapping between solutions for the convex reformulation
and the original non-convex training problem (\cref{eq:convex-to-relu})
and \cref{prop:sol-fn-cgl}
immediately allows us to characterize the solution set
for the full ReLU problem:
\begin{restatable}{theorem}{reluSolFn}\label{cor:relu-sol-fn}
	Suppose \( m \geq m^* \) and \( \tilde D = \calD_Z \) (no sub-sampling),
	with \( p = |\tilde \calD_Z| \).
	Then the optimal set for the ReLU problem up to permutation/splitting
	symmetries is
	\vspace{-1ex}
	\begin{equation}\label{eq:relu-sol-fn}
		\begin{aligned}
			\hspace{-0.5em} \calO_\lambda
			\!= & \,
			\big\{
			(W_1, \! w_2) \!:\!
			\, f_{W_1, w_2}(Z) \! = \! \hat y,
			W_{1i} \!=\! (\sfrac{\alpha_{i}}{\lambda})^{\sfrac{1}{2}} v_i, \\
			    & w_{2i} \!=\! \xi_i (\alpha_i \lambda)^{\sfrac{1}{2}},
			\alpha_i \!\geq\! 0, \, i \!\in\! [2p] \!\setminus\! \calS_\lambda \!\Rightarrow\! \alpha_i \!=\! 0
			\big\}.
		\end{aligned}
	\end{equation}
\end{restatable}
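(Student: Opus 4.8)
The plan is to realize the convex ReLU reformulation \cref{eq:convex-relu-mlp} (with $\tilde \calD = \calD_Z$) as a single CGL instance, read off its optimal set from \cref{prop:sol-fn-cgl}, and then transport that description to the non-convex problem through the weight map \cref{eq:convex-to-relu} and the equivalence of \citet{pilanci2020convexnn}. Stacking the $p$ pairs $(v_i, u_i)$ yields a CGL problem with $2p$ blocks, data matrix $X = [D_1 Z \cdots D_p Z \;\; {-D_1 Z} \cdots {-D_p Z}]$, and cone membership encoded by $K_i = -Z^\top(2 D_i - I)$, exactly as described in \cref{sec:cgl}; hence \cref{prop:sol-fn-cgl} applies without modification.

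First I would write out the CGL optimal set: every optimum has $w_i = \alpha_i v_i$ with $\alpha_i \geq 0$ for $i \in \calS_\lambda$, $w_i = 0$ for $i \in [2p] \setminus \calS_\lambda$, and $X w = \hat y$. Because $\calS_\lambda \subseteq \equi$ and every equicorrelation block satisfies $\norm{v_i}_2 = \lambda$, each active block has norm $\norm{w_i}_2 = \alpha_i \lambda$. Substituting $w_i = \alpha_i v_i$ into \cref{eq:convex-to-relu} and simplifying with this norm gives
\[
	W_{1i} = \frac{\alpha_i v_i}{\sqrt{\alpha_i \lambda}} = \rbr{\frac{\alpha_i}{\lambda}}^{1/2} v_i, \qquad w_{2i} = \xi_i \sqrt{\alpha_i \lambda} = \xi_i (\alpha_i \lambda)^{1/2},
\]
where $\xi_i = +1$ for the $p$ blocks arising from the $v$-variables and $\xi_i = -1$ for those from the $u$-variables. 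This is exactly the parameterization in \cref{eq:relu-sol-fn}, with inactive blocks ($\alpha_i = 0$) contributing zero weights under the convention $0/0 = 0$.

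Next I would check that the fit constraint transfers in both directions. Feasibility $v_i, u_i \in \calK_i$ forces $D_i Z W_{1i} = (Z W_{1i})_+$, so each neuron contributes $(Z W_{1i})_+ w_{2i} = D_i Z w_i$ and summing recovers $f_{W_1, w_2}(Z) = \sum_i D_i Z(v_i - u_i) = X w = \hat y$, which is unique by \cref{lemma:unique-fit-cgl}. Conversely, the balancedness $\norm{W_{1i}}_2 = |w_{2i}|$ enforced by \cref{eq:relu-sol-fn} lets me invert the map via $w_i = |w_{2i}| W_{1i}$ to obtain a cone-feasible CGL block, so the two descriptions are in correspondence. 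Under $m \geq m^*$ and $\tilde \calD = \calD_Z$, the equivalence of \citet{pilanci2020convexnn} promotes this to a bijection between convex and non-convex optima modulo permutation and splitting of neurons, and pushing \cref{prop:sol-fn-cgl} through it yields $\calO_\lambda$.

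The hard part will be the bookkeeping around the permutation/splitting symmetry and the two-sidedness of the correspondence. The forward direction is a direct substitution, but to conclude set equality I must argue that \emph{every} ReLU optimum is of the stated form; this relies on the reverse map of \citet{pilanci2020convexnn}, which assigns each ReLU neuron to a cone $\calK_i$ through its activation pattern and may distribute a single convex block across several neurons when $m > m^*$. I also need to confirm that the sign $\xi_i$ and the $0/0 = 0$ convention interact correctly with inactive blocks, so that degenerate neurons neither affect the fit nor escape the clause $i \in [2p] \setminus \calS_\lambda \Rightarrow \alpha_i = 0$.
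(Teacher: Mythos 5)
Your proposal is correct and follows essentially the same route as the paper: invoke \cref{prop:sol-fn-cgl} on the $2p$-block CGL instance, push the resulting parameterization $w_i = \alpha_i v_i$ through the map \cref{eq:convex-to-relu} using $\norm{v_i}_2 = \lambda$ on the equicorrelation set, and appeal to the convex/non-convex equivalence for the reverse inclusion. The ``hard part'' you flag --- showing that every ReLU optimum mapping to a given convex solution agrees with the canonical preimage up to permutations and splits/merges of collinear neurons --- is exactly what the paper isolates into \cref{lemma:mapping-equality} (built on the balancedness argument of \cref{lemma:scaling-unique}), so your plan matches the paper's proof step for step.
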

\vspace{-1ex}%
Eq.~\eqref{eq:relu-sol-fn} abuses notation by using $\calS_\lambda$ as
a subset of the neuron indices \( \cbr{1, \ldots 2p} \), where
\( \cbr{1, \ldots p} \) index positive neurons for which \( \xi_i = +1 \)
(corresponding to blocks \( D_i Z \))
and \( \cbr{p+1, \ldots, 2p} \) index the negative neurons (corresponding to \(
- D_i Z \)), for which \( \xi_i = -1 \).
\cref{fig:solution-set} plots the first feature of three neurons
as they vary over this solution set.
The mapping from convex to non-convex parameterization
transforms the polytope of solutions into a curved manifold.

Choosing a sub-sampled set of patterns
\( \tilde \calD \subset \calD_Z \) corresponds to finding a stationary
point of the non-convex training problem \citep{wang2021hidden}.
Using this fact with \cref{cor:relu-sol-fn} finally justifies
description of all stationary points of the ReLU
problem given in the introduction.
\begin{restatable}{proposition}{stationaryPoints}\label{prop:stationary-points}
	The set of stationary points of two-layer ReLU networks up to
	permutation/splitting symmetries is
	\vspace{-0.5ex}%
	\begin{equation*}
		\begin{aligned}
			\calC_\lambda = \big\{
			 & (W_1, \! w_2) : \tilde \calD \! \subseteq \! \calD_Z,
			\, f_{W_1, w_2}(Z) \! = \! \hat y_{\tilde \calD},                                             \\
			 & W_{1i} = (\sfrac{\alpha_{i}}{\lambda})^{\sfrac{1}{2}} v_i(\tilde \calD), \,
			w_{2i} = \xi_i(\alpha_i \lambda)^{\sfrac{1}{2}},                                              \\
			 & \alpha_i \geq 0, \, i \in [2 |\tilde \calD|] \setminus \calS_\lambda \implies \alpha_i = 0
			\big\},
		\end{aligned}
	\end{equation*}
	where \( \tilde \calD \) are sub-sampled activation patterns,
	\( \hat y_{\tilde \calD} \) is the optimal model fit using
	those patterns, and
	\( v_i(\tilde \calD) = \ci(\tilde \calD) - \Ki \ri(\tilde \calD) \)
	is determined by the fit and the dual parameters.
\end{restatable}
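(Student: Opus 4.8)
The plan is to reduce the claim to our CGL solution-set characterization (\cref{prop:sol-fn-cgl}) via the correspondence between non-convex stationary points and optima of sub-sampled convex programs, and then take a union over all pattern subsets. Concretely, \citet{wang2021hidden} give a two-way correspondence between the two problems: on the one hand, every global optimum of the convex program \cref{eq:convex-relu-mlp} restricted to a subset $\tilde\calD \subseteq \calD_Z$ maps, through \cref{eq:convex-to-relu}, to a stationary point of the non-convex objective \cref{eq:non-convex-relu-mlp}; on the other hand, every stationary point induces a set of activation patterns $\tilde\calD$ through the pre-activations of its neurons and is the image of an optimum of the convex program on that $\tilde\calD$. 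Granting this, the set of stationary points (up to permutation/splitting) equals $\bigcup_{\tilde\calD \subseteq \calD_Z}$ of the images under \cref{eq:convex-to-relu} of the optimal sets $\solfn(\lambda)$ of the sub-sampled programs.

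The second step evaluates each of these images. For fixed $\tilde\calD$, the sub-sampled program is an instance of CGL with basis $X = [D_1 Z \ldots D_{|\tilde\calD|} Z]$ and constraint matrices $\Ki = -Z^\top(2D_i - I)$, so \cref{prop:sol-fn-cgl} describes its optimal set in terms of the unique fit $\hat y_{\tilde\calD}$, the block directions $v_i(\tilde\calD) = \ci(\tilde\calD) - \Ki \ri(\tilde\calD)$, and the support set $\calS_\lambda$. Pushing this description through the convex-to-non-convex map exactly as in the proof of \cref{cor:relu-sol-fn} --- splitting each convex block $v_i$ into a positive and a negative neuron indexed by $[2|\tilde\calD|]$ with sign $\xi_i$, and rescaling via $W_{1i} = (\sfrac{\alpha_i}{\lambda})^{\sfrac{1}{2}} v_i(\tilde\calD)$, $w_{2i} = \xi_i(\alpha_i\lambda)^{\sfrac{1}{2}}$ --- yields precisely the $\tilde\calD$-indexed set inside the definition of $\calC_\lambda$. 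Taking the union over $\tilde\calD \subseteq \calD_Z$ then gives the claimed identity; the only genuinely new content beyond \cref{cor:relu-sol-fn} is replacing the single full program $\tilde\calD = \calD_Z$ by an arbitrary subset.

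I expect the correspondence of the first step, and especially its converse direction (stationary point $\Rightarrow$ convex optimum on the induced $\tilde\calD$), to be the crux. The delicate points are matching the appropriate notion of stationarity for the non-smooth non-convex objective with the KKT system \cref{eq:kkt-cgl} of the convex program --- in particular for neurons whose pre-activations $ZW_{1i}$ have zero coordinates and thus sit on a cone boundary, where the activation pattern is not locally constant --- and ensuring $m$ is large enough that every convex optimum can be realized without forcing spurious patterns, which is exactly why the statement is phrased up to permutation/splitting symmetries. Once the \citet{wang2021hidden} correspondence is invoked, the remaining work is bookkeeping: the positive/negative block split, the sign accounting for $\xi_i$, and verifying that $\hat y_{\tilde\calD}$, $v_i(\tilde\calD)$, and $\calS_\lambda$ are the CGL quantities associated with the sub-sampled instance.
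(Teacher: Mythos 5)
Your proposal follows the same route as the paper's proof: invoke the correspondence of \citet[Theorem 3]{wang2021hidden} between optima of sub-sampled convex programs and Clarke stationary points of \cref{eq:non-convex-relu-mlp}, apply \cref{prop:sol-fn-cgl} to each sub-sampled CGL instance, push the result through the convex-to-non-convex map as in \cref{cor:relu-sol-fn}, and union over all $\tilde\calD \subseteq \calD_Z$. The subtleties you flag (the precise notion of stationarity, neurons on cone boundaries) are exactly the content delegated to the cited theorem, which is also how the paper handles them.
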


We note that since deeper networks are also related
to CGL through convex reformulations \citep{ergen2021beyond},
\cref{prop:stationary-points} may also be applied beyond two layers.

Recall that the solution set for a two-layer ReLU network
is typically not unique due to model symmetries.
However, if the convex solution is unique, then
the non-convex ReLU training problem is p-unique.
Combining this with our results for CGL
gives the following sufficient conditions.

\begin{restatable}{proposition}{pUnique}\label{prop:p-unique}
	Let \( \lambda > 0 \)
	and suppose that the convex ReLU problem
	has a unique solution.
	Then the ReLU model solution is p-unique.
	In particular,
	if \( \cbr{D_i Z \vi}_\equi \) are linearly independent,
	then the non-convex solution is p-unique.
\end{restatable}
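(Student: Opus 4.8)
The plan is to transport uniqueness from the convex reformulation to the non-convex problem through the explicit parameterization already established in \cref{cor:relu-sol-fn}. That theorem shows that, modulo permutation/splitting symmetries, every element of \( \calO_\lambda \) is determined by a single nonnegative vector \( \alpha = (\alpha_i)_{i \in [2p]} \), supported on \( \calS_\lambda \), via \( W_{1i} = (\alpha_i/\lambda)^{1/2} v_i \) and \( w_{2i} = \xi_i(\alpha_i\lambda)^{1/2} \), subject to the model-fit constraint \( f_{W_1,w_2}(Z) = \hat y \). Thus p-uniqueness is equivalent to uniqueness of this \( \alpha \), and the whole argument reduces to showing that the admissible \( \alpha \) are in bijection with solutions of the underlying CGL program, which is exactly the convex ReLU problem.

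First I would rewrite the fit constraint in the \( \alpha \)-parameterization. Substituting \( W_{1i}, w_{2i} \) into a single neuron's contribution gives \( (Z W_{1i})_+ w_{2i} = \alpha_i \xi_i D_i Z v_i = \alpha_i \Xbi v_i \), using \( v_i \in \calK_i \) so that \( (Z v_i)_+ = D_i Z v_i \), together with the sign convention \( \Xbi = \xi_i D_i Z \) that identifies the \( 2p \) neuron blocks with the CGL data matrix \( X = [D_1 Z \ldots D_p Z\ {-}D_1 Z \ldots {-}D_p Z] \). Summing over neurons, the fit constraint becomes \( \sum_i \alpha_i \Xbi v_i = \hat y \), which is precisely the CGL constraint \( X w = \hat y \) under the substitution \( \wi = \alpha_i \vi \) furnished by \cref{prop:sol-fn-cgl}.

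Next I would argue the correspondence \( \alpha \leftrightarrow w \) is a bijection. Surjectivity onto CGL solutions is immediate from \cref{prop:sol-fn-cgl}; for injectivity, note that every \( \bi \in \calS_\lambda \subseteq \equi \) has \( \norm{\vi}_2 = \lambda > 0 \), so \( \vi \neq 0 \) and \( \alpha_i \) is uniquely recovered from \( \wi = \alpha_i \vi \). Consequently, the convex ReLU problem has a unique solution if and only if there is a unique admissible \( \alpha \), which by \cref{cor:relu-sol-fn} holds if and only if the ReLU model is p-unique. This proves the first claim directly, since the convex ReLU program is an instance of CGL.

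For the sufficient condition, I would invoke \cref{lemma:unique-cgl}: the CGL solution is unique iff \( \cbr{\Xbi \vi}_{\calS_\lambda} \) are linearly independent. Since \( \Xbi \vi = \pm D_i Z \vi \) and linear independence is insensitive to sign, and since \( \calS_\lambda \subseteq \equi \), linear independence of \( \cbr{D_i Z \vi}_\equi \) implies linear independence of \( \cbr{\Xbi \vi}_{\calS_\lambda} \), giving CGL uniqueness and hence p-uniqueness. The main obstacle here is bookkeeping rather than conceptual: I must faithfully track the \( 2p \)-block structure and the sign \( \xi_i \) relating neuron contributions to the CGL columns, and confirm that the freedom collapsed by ``p-uniqueness'' is exactly the permutation/splitting freedom already quotiented out in \cref{cor:relu-sol-fn}, so that uniqueness of \( \alpha \) genuinely corresponds to p-uniqueness of the network.
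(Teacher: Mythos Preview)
Your proof is correct but takes a different route than the paper's. The paper's argument is shorter and does not pass through the $\alpha$-parameterization of \cref{cor:relu-sol-fn}. Instead, it observes directly that every non-convex optimum maps to some convex optimum; if the convex solution is unique, all non-convex optima map to the same point, and \cref{lemma:mapping-equality} then asserts that any two non-convex solutions with the same convex image agree up to permutations and splits/merges. Your approach instead invokes \cref{cor:relu-sol-fn} to parameterize the quotient $\calO_\lambda$ by $\alpha$ and checks explicitly that admissible $\alpha$'s are in bijection with CGL solutions via \cref{prop:sol-fn-cgl}. This works, but it re-traces machinery already packaged inside the proof of \cref{cor:relu-sol-fn} (which itself relies on \cref{lemma:mapping-equality}), and requires the bookkeeping you flag: verifying $v_i \in \calK_i$ on $\calS_\lambda$ so that the ReLU acts as $D_i$, and tracking the $2p$-block sign convention. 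The paper's route avoids this by staying one level of abstraction higher. Both approaches handle the ``in particular'' clause the same way, via \cref{lemma:unique-cgl} and $\calS_\lambda \subseteq \equi$.
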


For gated ReLU networks, it is also sufficient to check the blocks
\( \sbr{D_i Z}_{D_i \in \tilde \calD} \) to see if they satisfy GGP.
By looking at the structure of \( D_i X \),
we give simple sufficient conditions for sub-sampled
convex ReLU programs to be p-unique.

\begin{restatable}{proposition}{reluUnique}\label{prop:relu-unique}
	Let \( \lambda > 0 \) and \( p = |\tilde \calD| \).
	Suppose \( Z \) follows a continuous probability distribution
	and \( \nnz(D_i) \geq p \cdot d \) for every \( D_i \in \tilde \calD \).
	If \( \equi \) does not contain two blocks with the same
	activation pattern, then
	the sub-sampled convex ReLU program has a p-unique solution
	almost surely.
\end{restatable}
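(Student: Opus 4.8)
The plan is to reduce p-uniqueness of the non-convex network to uniqueness of the sub-sampled convex ReLU program, and then to a generic full-rank condition. By \cref{prop:p-unique} the ReLU solution is p-unique whenever the convex program has a unique minimizer. Since the convex reformulation is an instance of CGL whose equicorrelation blocks are the signed matrices \( \xi_i D_i Z \) for \( i \in \equi \) (with \( \xi_i \in \{+1,-1\} \)), and since \( \calN_\lambda \subseteq \Null(\Xe) \), linear independence of the columns of \( \Xe \) forces \( \calN_\lambda = \{0\} \), so \cref{cor:uniqueness-cond-cgl} gives uniqueness. Two hypotheses make this feasible: the assumption that \( \equi \) contains no two blocks with the same activation pattern rules out a block and its negation \( D_i Z, -D_i Z \) both lying in \( \equi \), which would force immediate column dependence; and the counting bound \( n \geq \nnz(D_i) \geq p\,d \geq |\equi|\,d \) makes full column rank dimensionally possible. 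It therefore suffices to show that \( \Xe \) has full column rank for almost every \( Z \).

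First I would set up the genericity argument by conditioning on the combinatorial activation configuration, i.e. on which rows are active for each neuron. Because \( Z \) is continuously distributed, this splits the support into finitely many full-dimensional regions, and within each region the \( 0/1 \) masks are fixed, so \( \Xe \) is a fixed linear function of the entries of \( Z \). Full column rank is then the non-vanishing of some \( |\equi|d \times |\equi|d \) minor, an algebraic condition that fails only on a measure-zero subset of the region provided the minor is a not-identically-zero polynomial. Taking a finite union over regions preserves measure zero, so the whole claim reduces to exhibiting, in each realizable region, a witness \( Z \) at which one maximal minor is nonzero.

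To build such a witness I would use a per-row reduction: a dependence \( \sum_{i \in \equi} D_i Z \beta_i = 0 \) is equivalent to \( Z_r^\top b_{a_r} = 0 \) for every row \( r \), where \( a_r \in \{0,1\}^{|\equi|} \) records which neurons fire on row \( r \) and \( b_a = \sum_{i : a_i = 1} \beta_i \) depends only on the activation vector. Grouping rows by their activation vector and using that, for generic \( Z \), any collection of at least \( d \) rows spans \( \R^d \), every activation vector carried by at least \( d \) rows forces its \( b_a = 0 \). The hypotheses then serve to make these constraints pin down \( \beta = 0 \): the budget \( \nnz(D_i) \geq p\,d \) ensures, by pigeonhole against the limited number of realizable activation vectors, that each neuron remains active on enough rows to be resolved, while distinctness of the patterns prevents two neurons from being indistinguishable across all rows.

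The hard part will be exactly this last combinatorial-geometric step: showing that the realizable activation vectors, under the counting bound and distinctness, yield a system \( \{b_a = 0\} \) of full rank \( |\equi| \), equivalently that one can select \( |\equi|\,d \) rows giving an invertible submatrix. I would attack it by a peeling induction on \( |\equi| \): repeatedly identify a neuron active on a sufficiently large ``private'' set of rows, whose existence I would extract from the \( p\,d \) budget weighed against the at-most-\( p \) patterns, use those rows to isolate and eliminate its coefficient block \( \beta_i \), and recurse on the remaining neurons, whose supports still meet the inductive bound. Controlling the overlaps between the halfspace-induced supports so that the private sets stay large enough at every stage is the delicate point, and is precisely where the value \( p\,d \), rather than merely \( d \), is needed.
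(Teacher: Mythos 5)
Your reduction matches the paper's: \cref{prop:p-unique} converts p-uniqueness into uniqueness of the convex program, and both you and the paper then aim at the stronger claim that \emph{all} columns of \( \Xe = [D_i Z]_{i \in \equi} \) are linearly independent almost surely (the paper routes through \cref{lemma:unique-cgl} rather than \cref{cor:uniqueness-cond-cgl}, but the target is identical, and the role you assign to pattern distinctness and to the bound \( \nnz(D_i) \geq pd \) is the intended one). Where you diverge is the genericity step. The paper's argument is short and direct: each column \( [D_i Z]_j \) has at least \( pd \) nonzero entries, the span of the remaining \( pd - 1 \) columns has dimension at most \( pd - 1 \), so conditionally the probability that \( [D_i Z]_j \) lies in that span is zero, and a union bound finishes. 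You instead propose conditioning on the activation configuration, reducing full rank to non-vanishing of a polynomial minor, and building a witness via a per-row reduction followed by a ``peeling'' induction.

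The gap is that the peeling induction --- which you correctly flag as carrying all the content --- is not carried out, and your own per-row reduction shows it cannot be completed from the stated hypotheses alone. As you observe, a dependence \( \sum_i D_i Z \beta_i = 0 \) collapses to \( Z_r^\top b_{a_r} = 0 \) row by row, and genericity forces \( b_a = 0 \) only for activation vectors \( a \) carried by at least \( d \) rows; vectors carried by fewer rows yield only partial constraints. Take \( d = 2 \), \( n = 5 \), \( p = 2 \), with \( \text{supp}(D_1) = [5] \) and \( \text{supp}(D_2) = \cbr{1,2,3,4} \): the patterns are distinct and both satisfy \( \nnz(D_i) \geq pd = 4 \), yet the only activation vector carried by at least \( d \) rows is \( (1,1) \), which gives \( \beta_1 + \beta_2 = 0 \), while row \( 5 \) imposes only \( Z_5^\top \beta_1 = 0 \). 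Hence \( \Xe \) has a nontrivial kernel for \emph{every} generic \( Z \) realizing this configuration, and no witness with a nonzero maximal minor exists. So the pigeonhole/peeling step is not merely delicate: the rank-\( |\equi| \) system \( \cbr{b_a = 0} \) you need does not follow from \( \nnz(D_i) \geq pd \) and pattern distinctness when supports are nested or heavily overlapping, and some additional hypothesis (or a retreat to the weaker condition of \cref{lemma:unique-cgl} involving the specific vectors \( D_i Z \vi \)) is required. Your setup, taken seriously, in fact exposes the same fragility in the paper's column-wise conditioning, since the ``other columns'' there include maskings of the very column of \( Z \) being conditioned on; but as submitted, your proof is incomplete at precisely the step that would have to do all the work.
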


\cref{prop:relu-unique} requires \( n \) to be much greater than
\( d \) to be useful due to the trivial bound \( \nnz(D_i) \leq n \).
In practice, the condition on activation patterns can be enforced by
constraining \( v_i = 0 \) or \( w_i = 0 \) for each
activation pattern before solving the convex reformulation.

\setlength{\tabcolsep}{2.5pt}
\begin{table}[t]
	\centering
	\caption{Tuning neural networks by searching over the optimal set.
		We fit two-layer ReLU networks on the training set and
		compute the minimum \( \ell_2 \) norm solution (Min L$_2$).
		Then we tune by finding an extreme point approximating
		the maximum \( \ell_2 \)-norm solution (EP), minimizing validation
		MSE over the optimal set (V-MSE), and minimizing test MSE over
		the optimal set (T-MSE).
		Results show median test accuracy;
		Max Diff. reports the difference between the best and worse models
		found.
		Exploring the optimal set reveals a huge disparity in the performance
		of optimal networks, with the generalization gap exceeding \( 20 \)
		points on four datasets.
	}
	\label{table:tuning-small}
	\vspace{0.1in}
	\begin{tabular}{lccccc} \toprule
		\textbf{Dataset} & \textbf{Min L$_2$} & \textbf{EP} & \textbf{V-MSE} & \textbf{T-MSE} & \textbf{Max Diff.} \\
		\midrule
		fertility        & 0.66               & 0.69        & 0.65           & 0.64           & 0.05               \\
		heart-hung.      & 0.75               & 0.75        & 0.71           & 0.85           & 0.14               \\
		mammogr.         & 0.77               & 0.77        & 0.57           & 0.78           & 0.21               \\
		monks-1          & 0.67               & 0.66        & 0.49           & 0.57           & 0.17               \\
		planning         & 0.53               & 0.52        & 0.53           & 0.7            & 0.17               \\
		spectf           & 0.64               & 0.64        & 0.56           & 0.58           & 0.08               \\
		horse-colic      & 0.75               & 0.59        & 0.74           & 0.85           & 0.26               \\
		ilpd-indian      & 0.59               & 0.59        & 0.53           & 0.72           & 0.19               \\
		parkinsons       & 0.74               & 0.74        & 0.65           & 0.88           & 0.23               \\
		pima             & 0.68               & 0.68        & 0.68           & 0.87           & 0.2                \\
		\bottomrule
	\end{tabular}
	\vspace{-2ex}
\end{table}

\begin{figure*}[t]
	\centering
	\includegraphics[width=0.98\textwidth]{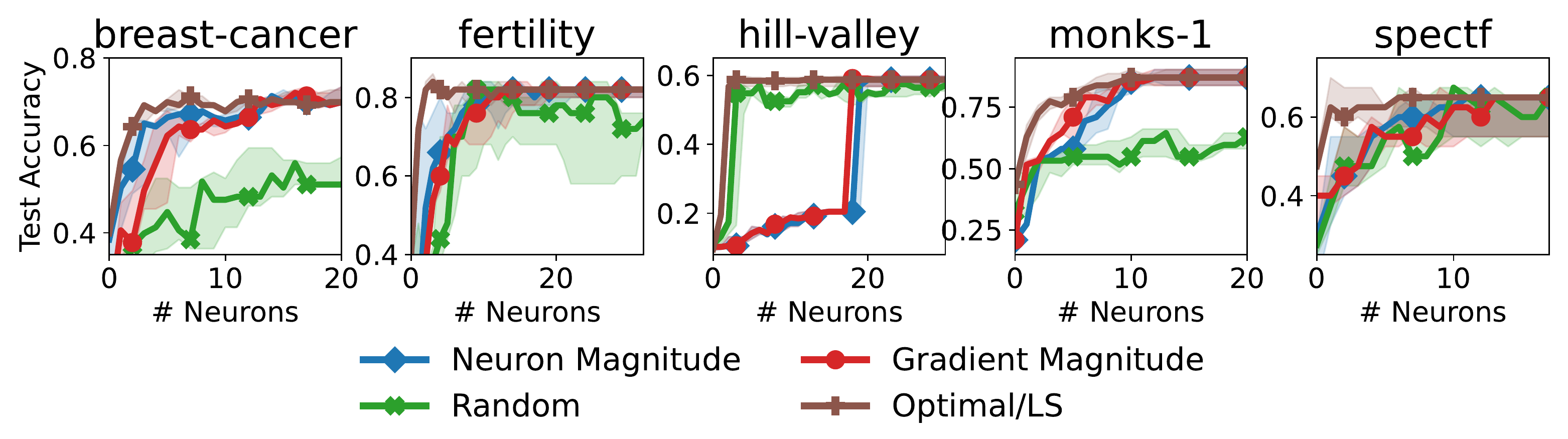}
	\vspace{-2.5ex}
	\caption{Pruning neurons from two-layer ReLU networks on binary
		classification tasks from the UCI repository.
		We compare our theory-inspired approach (\textbf{Optimal/LS}),
		against removing the neuron with smallest \( \ell_2 \) norm
		(\textbf{Neuron Magnitude}),
		removing the neuron with the smallest weighted gradient norm
		(\textbf{Gradient Magnitude}), and random pruning (\textbf{Random}).
		For Optimal/LS, we use \cref{alg:pruning-solutions-nn}, which begins
		with optimal pruning and then switches to a least-squares heuristic.
		We plot test accuracy against number of active neurons.
		Optimal/LS dominates the baseline methods on every dataset
		and even improves test accuracy on \texttt{breast-cancer}
		and \texttt{fertility}.
	}
	\label{fig:uci-pruning-acc}
	\vspace{-2ex}
\end{figure*}

\textbf{Pruning}:
If \( (v, u) \) is a minimal solution to the convex reformulation, then the
corresponding ReLU network is the p-unique model using only those
activation patterns (Propositions~\ref{prop:minimal-solutions} and~\ref{prop:p-unique}).
Thus, \cref{alg:pruning-solutions} can be used to prune
any solution to obtain a neuron-sparse neural network
achieving the optimal training objective.
\cref{alg:pruning-solutions-nn} specializes our pruning algorithm
to the ReLU problem and extends it to support approximate pruning.
Note the resulting procedure is completely independent of the convex
reformulation.
The complexity of this method is as follows.

\begin{restatable}{proposition}{minimalComplexity}\label{prop:minimal-complexity}
	Suppose \( r = \text{rank}(X) \).
	Then an optimal and minimal ReLU network with at most
	\( m^* \leq n \) non-zero neurons can be computed in
	\( O\rbr{d^3 r^3 (n/r)^{3r}} \)
	time.
\end{restatable}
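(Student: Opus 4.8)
The plan is to build the convex reformulation \eqref{eq:convex-relu-mlp} with the full pattern set $\tilde \calD = \calD_Z$, solve it once, prune the resulting optimum to a minimal solution with \cref{alg:pruning-solutions}, and finally map back to ReLU weights via \eqref{eq:convex-to-relu}. The only expensive step will be the convex solve; pattern enumeration, pruning, and the convex-to-nonconvex map will all be lower order, so substituting the pattern bound $|\calD_Z| \le O(r(n/r)^r)$ into the cost of the solve will produce the stated running time.

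First I would enumerate $\calD_Z$ and assemble the CGL data matrix $X = [D_1 Z \ldots D_p Z]$ together with the sign-flipped blocks, where $p = |\calD_Z|$. As noted above, $p \le O(r(n/r)^r)$ \citep{pilanci2020convexnn}, and the enumeration is an arrangement traversal whose cost is near-linear in the output size $p$, hence dominated by the solve. The convex ReLU program then has $2p$ blocks of width $d$, i.e. $N = O(pd)$ decision variables. I would then solve \eqref{eq:convex-relu-mlp}: as a constrained group-lasso (second-order cone) program in $N$ variables, e.g. an interior-point method forms and factorizes a Newton system of size $O(N)$ at cubic cost $O(N^3)$ per iteration, with the iteration count absorbed into the $O(\cdot)$. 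This gives $O(N^3) = O(p^3 d^3)$, and substituting $p^3 \le O(r^3 (n/r)^{3r})$ yields the claimed $O(d^3 r^3 (n/r)^{3r})$.

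Next I would run \cref{alg:pruning-solutions} on the returned optimum. By \cref{prop:minimal-solutions} its output $w$ is minimal, so $\cbr{\Xbi \wi}_{\calA(w)}$ is linearly independent; since these are vectors lying in the range of $X$, at most $r = \text{rank}(X) \le n$ blocks can be active, giving a network with at most $m^* \le n$ nonzero neurons. The pruning cost is $O(n^3 l + nd)$ with $l \le 2p$ active blocks (\cref{prop:pruning-correctness}), which is lower order than the solve. Mapping the pruned convex solution to ReLU weights via \eqref{eq:convex-to-relu} is $O(pd)$ and, by the equivalence of \citet{pilanci2020convexnn} (applicable since $m \ge m^*$ and $\tilde \calD = \calD_Z$), produces a genuinely optimal network for \eqref{eq:non-convex-relu-mlp}.

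The main obstacle is the complexity accounting rather than any single deep step: one must verify that the convex solve dominates enumeration, pruning, and reconstruction. This reduces to counting the $O(pd)$ variables, invoking the cubic cost of the interior-point linear algebra to get $O((pd)^3)$, and then checking term by term that $O(p)$-time enumeration, the $O(n^3 p + nd)$ pruning cost, and the $O(pd)$ reconstruction are each dominated by $O((pd)^3)$ in the relevant regime. The rank argument capping the active blocks at $r \le n$ then certifies both minimality and the neuron count $m^* \le n$.
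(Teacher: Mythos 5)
Your proposal follows essentially the same route as the paper's proof: enumerate the $O(r(n/r)^r)$ activation patterns, solve the full convex reformulation with an interior-point method at cost $O(d^3 r^3 (n/r)^{3r})$, prune to a minimal solution via \cref{alg:pruning-solutions} at the lower-order cost $O(n^3 p + nd)$ from \cref{prop:pruning-correctness}, and bound the active neuron count by the dimension of the span of the fitted vectors. The only cosmetic difference is that you re-derive the interior-point cost from the $O(pd)$ variable count rather than citing it directly from \citet{pilanci2020convexnn}, as the paper does.
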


As a consequence, the complexity of computing
an optimal and minimal ReLU network is fully polynomial when \( r \)
is bounded.
We also have a more sensitive statement for the minimal width: if
\( (W_1^*, w_2^*) \) are optimal weights for the ReLU model, then
\( m^* \) is at most the dimension of
\( \Span \cbr{(X W^*_{1i})_+}_i \)
and exactly this dimension under group dependence
(\cref{prop:minimal-sol-characterization}).
We experiment with pruning ReLU networks using this approach in
\cref{sec:experiments} and that show it
is more effective than naive pruning strategies.

\textbf{Continuity}:
First we give a negative result for singular networks, that is, models
where \( m < m^* \) and no convex reformulation exists.
In this setting, the solution map can be made to behave
arbitrarily poorly.
\vspace{-0.5ex}
\begin{restatable}{proposition}{oneNeuron}\label{prop:one-neuron}
	There exists \( (Z, y) \) for which \( \calO^* \)
	is not open nor is the model fit \( f_{W_1, w_2}(Z) \) continuous in
	\( \lambda \).
\end{restatable}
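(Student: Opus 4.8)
The plan is to exhibit a single explicit \emph{singular} instance --- a network with \( m = 1 \) hidden neuron, for which no exact convex reformulation exists --- and to show that its optimal value is attained on two disjoint branches whose ranking flips at a critical \( \lambda \). Concretely, I would take scalar inputs \( d = 1 \) with \( n = 2 \) data points \( z_1 = 1 \), \( z_2 = -3 \) and targets \( y_1 = 2 \), \( y_2 = 1 \). Because \( d = 1 \), a single neuron \( \hat y_i = w_2 (z_i W_1)_+ \) has only two nontrivial activation patterns, selected by \( \text{sign}(W_1) \): the neuron either fits the positive point (\( W_1 > 0 \), leaving \( \hat y_2 = 0 \)) or the negative point (\( W_1 < 0 \), leaving \( \hat y_1 = 0 \)). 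This reduces the non-convex problem to the lower envelope of two one-dimensional subproblems.

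First I would reduce each branch to a soft-thresholding problem. Writing \( \gamma \) for the prediction at the active point and using AM--GM to minimize \( \frac{\lambda}{2}(W_1^2 + w_2^2) \) subject to a fixed product \( w_2 (z_i W_1)_+ = \gamma \), the effective penalty is \( (\lambda / |z_i|)|\gamma| \), so each branch solves \( \min_\gamma \frac12(\gamma - y_i)^2 + (\lambda/|z_i|)|\gamma| \) plus the fixed loss from the inactive point. Evaluating the two resulting concave value functions \( g_A(\lambda) \) (fit \( z_1 \)) and \( g_B(\lambda) \) (fit \( z_2 \)), I would verify they cross at \( \lambda^* = 3/2 \), with \( g_A < g_B \) for \( \lambda < \lambda^* \) and \( g_A > g_B \) for \( \lambda > \lambda^* \); the branch slopes \( 2 - \lambda \) and \( \tfrac13 - \tfrac{\lambda}{9} \) differ at the crossing, so the switch is strict. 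At \( \lambda^* \) the two optimal fits are \emph{distinct}, namely \( (\tfrac12, 0) \) on branch \( A \) and \( (0, \tfrac12) \) on branch \( B \). Since the minimizing branch switches from \( A \) to \( B \) as \( \lambda \) increases through \( \lambda^* \), the model fit \( f_{W_1,w_2}(Z) \) jumps from \( (\tfrac12, 0) \) to \( (0, \tfrac12) \), giving discontinuity. Non-openness then follows directly from the definition: at \( \bar\lambda = \lambda^* \) the branch-\( B \) solution lies in \( \calO^*(\lambda^*) \), yet for any sequence \( \lambda_k \uparrow \lambda^* \) every element of \( \calO^*(\lambda_k) \) has fit converging to \( (\tfrac12, 0) \neq (0, \tfrac12) \), so no sequence of optima can approach the branch-\( B \) point.

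The hard part will be justifying that the two branches genuinely exhaust the optimal set, so that the left and right limits of the fit are well-defined single values rather than artifacts of a poor parameterization. This requires checking (i) that the \( \text{sign}(W_1) \) case split is exhaustive for one neuron in one dimension, (ii) that within each branch the soft-threshold solution fixes the fit uniquely even though the weights retain the scaling symmetry, and (iii) that the strictly sub-optimal branch contributes no optima on either side of \( \lambda^* \), so the jump is not merely a tie that persists. Once these are in place, the crossing computation is routine and both conclusions --- failure of openness and discontinuity of the fit --- follow from the single jump at \( \lambda^* \).
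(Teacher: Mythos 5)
Your proposal is correct and follows essentially the same route as the paper's proof: both construct a one-dimensional, one-neuron instance, split on the sign of the first-layer weight so that each branch reduces to a scalar soft-thresholding problem, and show the two branch value functions cross transversally at a critical \( \lambda^* \) where the optimal fits are distinct, which yields both the jump in the model fit and the failure of openness. The only cosmetic difference is that the paper routes the reduction through the equivalent formulation of \citet{mishkin2022convex} (with data \( (-100,1),(1,10) \) and crossing near \( \lambda \approx 10 \)) while you use the AM--GM rescaling argument directly; the underlying computation is the same.
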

Combined with the next result, \cref{prop:one-neuron} indicates
that the threshold \( m^* \) may be crucial for continuity to extend
to the non-convex parameterization.
\begin{corollary}\label{cor:relu-continuity}
	Suppose \( m \geq m^* \).
	Then the optimal model fit for two-layer gated ReLU networks
	is continuous at all \( \lambda > 0 \).
	Similarly, if the (gated) ReLU solution is p-unique on an open
	interval \( \Lambda \), then the regularization path is also continuous
	on \( \Lambda \) up to permutations of the weights.
\end{corollary}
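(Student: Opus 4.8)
The plan is to transfer the CGL continuity results of the previous section through the convex reformulation and the weight-recovery map \cref{eq:convex-to-relu}, treating the two claims separately. For the first claim, I would observe that gated ReLU networks are exactly CGL with \( K = 0 \) under the basis \( X = [D_1 Z \ldots D_p Z] \), \( D_i \in \calD_Z \). Since \( m \geq m^* \) and \( \tilde\calD = \calD_Z \), the reformulation is equivalent to the non-convex problem, so the optimal gated ReLU model fit coincides with the convex fit \( X w = \hat y(\lambda) \). Because \( K = 0 \), \cref{prop:model-fit-continuity} gives that \( \hat y(\lambda) \) is continuous on \( \R_+ \), and hence the gated ReLU model fit is continuous at every \( \lambda > 0 \). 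Note this portion is special to gated ReLU: standard ReLU has \( K_i = -Z^\top(2 D_i - I) \neq 0 \), so \cref{prop:model-fit-continuity} does not apply and model-fit continuity cannot be claimed without the stronger hypothesis of the second part.

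For the second claim, I would first argue that p-uniqueness of the non-convex (gated) ReLU problem is equivalent to uniqueness of the associated convex solution: the reformulation is a bijection on optimal sets up to permutation and splitting symmetry, so each convex optimum corresponds to one symmetry class of non-convex optima and vice versa (cf. \cref{prop:p-unique}). Hence "unique equivalence class" matches "unique convex solution." Given that the convex solution is then unique on the open interval \( \Lambda \), \cref{prop:map-continuity-cgl} shows \( \solfn \) is open there, and since it is always closed it is continuous; as a single-valued map the convex path \( \lambda \mapsto w(\lambda) \) is continuous on \( \Lambda \).

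It remains to push this continuity through the recovery map. For ReLU this map is \cref{eq:convex-to-relu}, sending \( v_i \mapsto (v_i/\sqrt{\norm{v_i}},\, +\sqrt{\norm{v_i}}) \) and \( u_i \mapsto (u_i/\sqrt{\norm{u_i}},\, -\sqrt{\norm{u_i}}) \) (for gated ReLU the analogous blockwise map applies). I would verify continuity on all of \( \R^d \): away from the origin it is a composition of continuous functions, and at \( v_i = 0 \) the convention \( 0/0 = 0 \) is consistent because \( \norm{v_i/\sqrt{\norm{v_i}}}_2 = \sqrt{\norm{v_i}} \to 0 \) and \( \sqrt{\norm{v_i}} \to 0 \) as \( v_i \to 0 \). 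Composing the continuous convex path with this continuous map yields a continuous path of non-convex weights in which each block index contributes one (gated) or two (ReLU, positive/negative) neurons in a fixed order; the only remaining freedom is the labeling of the hidden units, giving continuity up to permutation (and splitting) of the weights.

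The hard part will be making "continuity up to permutations" precise and confirming it survives blocks entering and leaving the active set. The recovery map fixes a canonical representative ordered by block index, so the genuine content is that this canonical lift stays continuous exactly as some \( v_i(\lambda) \) passes through \( 0 \) — that is, when a neuron's weight vanishes. This is precisely where continuity of the recovery map at \( v_i = 0 \) is essential, and where the \( 0/0 = 0 \) convention must be checked carefully to rule out a jump in \( (W_{1i}, w_{2i}) \) when its contribution disappears. A secondary subtlety is ensuring the bijection-up-to-symmetry argument for the equivalence of p-uniqueness and convex uniqueness is watertight, since splitting symmetry enlarges the non-convex optimal set beyond the image of the recovery map.
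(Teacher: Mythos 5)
Your proposal is correct and takes the same route the paper intends: the paper offers no separate proof of this corollary, treating it as an immediate consequence of \cref{prop:model-fit-continuity} (applied to the gated ReLU case via $K=0$) for the first claim and of \cref{prop:map-continuity-cgl} together with the p-uniqueness/convex-uniqueness correspondence from \cref{prop:p-unique} and \cref{lemma:mapping-equality} for the second. Your explicit check that the recovery map \cref{eq:convex-to-relu} remains continuous at $v_i=0$ under the $0/0=0$ convention, so that continuity survives blocks leaving the active set, is exactly the detail the paper leaves implicit.
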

Together, \cref{cor:relu-continuity} and \cref{prop:relu-unique}
are concrete conditions for the model fit and regularization path
of a sub-sampled problem to be continuous.

\textbf{Min-Norm Solutions}:
In \cref{sec:min-norm}, we examined minimum \( \ell_2 \)-norm solutions to CGL.
However, all optimal ReLU models have the same
\( \ell_2 \)-norm when \( \lambda > 0 \).
Minimizing the Euclidean norm of solutions to the convex reformulation instead
minimizes the sum of norms to the fourth power.
\begin{restatable}{lemma}{minNormNN}\label{lemma:min-norm-nn}
	The minimum \( \ell_2 \)-norm solution to the convex reformulation
	of a (gated) ReLU model corresponds to the
	optimal neural network which minimizes,
	\[
		r(W_1, w_2) = \sum_{i = 1}^m \norm{W_{1i}}_2^4 + \norm{w_{2i}}_2^4,
		\vspace{-1ex}%
	\]
	and admits no neuron-merge symmetries.
\end{restatable}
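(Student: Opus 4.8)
The plan is to push the correspondence \eqref{eq:convex-to-relu} between convex parameters and ReLU weights through both norms, showing that the convex $\ell_2$-objective pulls back to (half of) $r$ on the class of networks produced by the map. First I would record the elementary scaling identity: the map sends a convex variable $v_i^*$ (resp.\ $u_i^*$) to a single balanced neuron with $\norm{W_{1i}}_2 = |w_{2i}| = \norm{v_i^*}_2^{1/2}$, so that $\norm{W_{1i}}_2^4 + \norm{w_{2i}}_2^4 = 2\norm{v_i^*}_2^2$. Summing over the nonzero blocks gives $r(W_1, w_2) = 2\left(\sum_i \norm{v_i^*}_2^2 + \sum_i \norm{u_i^*}_2^2\right) = 2\norm{(v^*,u^*)}_2^2$, so on the image of the map the network objective $r$ is exactly twice the squared $\ell_2$-norm of the convex solution.

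Next I would promote this identity to a bijection. The inverse map sends an optimal ReLU network in which each activation pattern is used by at most one positive and one negative neuron back to a feasible convex pair by setting $v_i = w_{2i} W_{1i}$ when $w_{2i} > 0$ and $u_i = -w_{2i}W_{1i}$ when $w_{2i} < 0$; feasibility $v_i, u_i \in \calK_i$ holds because each vector is a nonnegative multiple of the corresponding first-layer weight, which already matches the pattern $D_i$. Because $\lambda > 0$ forces optimal networks to be balanced (the rescaling symmetry minimizes weight decay exactly at $\norm{W_{1i}}_2 = |w_{2i}|$), the scaling identity of the first step holds for every such network, so the two maps are mutually inverse and preserve the relation $r = 2\norm{(v,u)}_2^2$. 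Minimizing $\norm{(v,u)}_2$ over convex solutions, which is a strictly convex quadratic over the polyhedral set $\solfn(\lambda)$ and hence has a unique minimizer, therefore coincides with minimizing $r$ over this class of networks; the min-$\ell_2$ convex solution maps to the corresponding $r$-minimizer and back. The no-merge claim then follows because the convex reformulation carries one variable per activation pattern, so the image network assigns at most one neuron to each (pattern, sign) pair: neurons with distinct patterns cannot be merged, and a positive/negative pair sharing a pattern has opposite second-layer signs, so no neuron-merge symmetry is available.

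The main obstacle is that $r$ is not invariant under the splitting symmetry: an active neuron $v_i$ may be split into aligned pieces $t_k v_i$ with $\sum_k t_k = 1$, which preserves both the fit and the weight decay $2\sum_i \norm{v_i}_2$ but replaces $2\norm{v_i}_2^2$ by $2\norm{v_i}_2^2 \sum_k t_k^2 \le 2\norm{v_i}_2^2$, strictly decreasing $r$. Consequently the clean equivalence ``min-$\ell_2$ convex $\Leftrightarrow$ min $r$'' only holds once we restrict to the reduced (no-merge) representative of each splitting class, which is precisely the network the lemma produces; I must therefore phrase the correspondence over reduced networks and argue that the min-$\ell_2$ convex solution is already reduced. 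A secondary subtlety, specific to the ReLU (as opposed to gated) case, is ensuring the min-norm solution never places parallel mass in both $v_i$ and $u_i$ for a common pattern, which would create a hidden merge; this follows because such parallel mass can be canceled against the fit while strictly lowering $\norm{(v,u)}_2$, contradicting min-norm optimality.
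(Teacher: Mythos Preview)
Your proposal is correct and follows essentially the same approach as the paper: both compute the identity $r(W_1, w_2) = 2\|(v,u)\|_2^2$ via the balanced-neuron mapping \eqref{eq:convex-to-relu}, observe that splitting a neuron strictly decreases $r$, and conclude that the min-$\ell_2$ convex solution corresponds to the $r$-minimizer only among networks admitting no merge symmetries. Your treatment is slightly more explicit about the bijection between reduced networks and convex solutions, and your remark on $v_i/u_i$ cancellation for a shared pattern is a detail the paper does not address but which is indeed harmless for the min-norm solution.
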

\vspace{-1ex}%

As a result, we can compute the \( r \)-minimal optimal ReLU network
by solving Problem~\eqref{eq:min-norm-program}.
If \( \calS_\lambda \) is unknown, then using \( \act(w) \) for some solution
\( w \) as an approximation gives the \( r \)-minimal network
using a subset of those activations.

\textbf{Sensitivity}:
\cref{prop:local-sol-fn} extends similar results for the group lasso by
\citet{vaiter2012dof} to CGL using standard
CQs.
Since \( K \) is block-diagonal,
LICQ will be satisfied whenever the rows of \( Z \) are linearly independent.
SCS is more challenging; while the classical theorem of
\citet{goldman20164} establishes that SCS is satisfied for linear programs,
it is known that SCS can fail for general cone programs \citep{tunccel2012strong}.
As such, SCS must be checked on a per-problem basis in general.

In the context of gated ReLU problems, \( K = 0 \) and there is no requirement
for CSC/LICQ.
Minimal models \( w(\lambda, y) \) are weakly differentiable, which
\citet{vaiter2012dof} uses to compute the degrees of freedom of
\( w \) via Stein's Lemma \citep{stein1981estimation}.
It is straightforward to extend this calculation to the gated ReLU weights
using the chain rule, which can then be used to calculate
Stein's unbiased risk estimator.

\section{Experiments}\label{sec:experiments}


Through convex reformulations, we have characterized the optimal sets of ReLU
networks, computed minimal networks, and provided sensitivity results.
Our goal in this section is to illustrate the power of our framework for
analyzing ReLU networks and developing new algorithms.

\textbf{Tuning}:
We first consider a tuning task on 10 binary classification
datasets from the UCI repository~\citep{dua2019uci}.
For each dataset, we do a train/validation/test split, fit a two-layer
ReLU model on the training set, and then compute the minimum \( \ell_2 \)-norm
model.
We use this to explore the optimal set in three ways:
(i) we compute an extreme point that (approximately)
maximizes the model's \( \ell_2 \)-norm; (ii)
we minimize the validation MSE over
\( \solfn(\lambda) \); (iii) we minimize test MSE over \( \solfn(\lambda) \).
These procedures select for different optimal models,
have no effect on the training objective, and are only possible
because we know \( \solfn \).

The results are summarized in \cref{table:tuning-small}.
We see that optimal models can perform very differently at test time despite
having exactly the same training error and model norm.
Indeed, 9/10 datasets show at least a 10 percent gap between the best
and worst models and 4/10 have a gap exceeding 20 percent accuracy.
We conclude that the training objective is badly under-determined even for
shallow neural networks, implying that implicit regularization is critical
in practice.
See \cref{app:additional-experiments} for results on additional datasets.

\begin{figure}[t]
	\centering
	\includegraphics[width=0.48\textwidth]{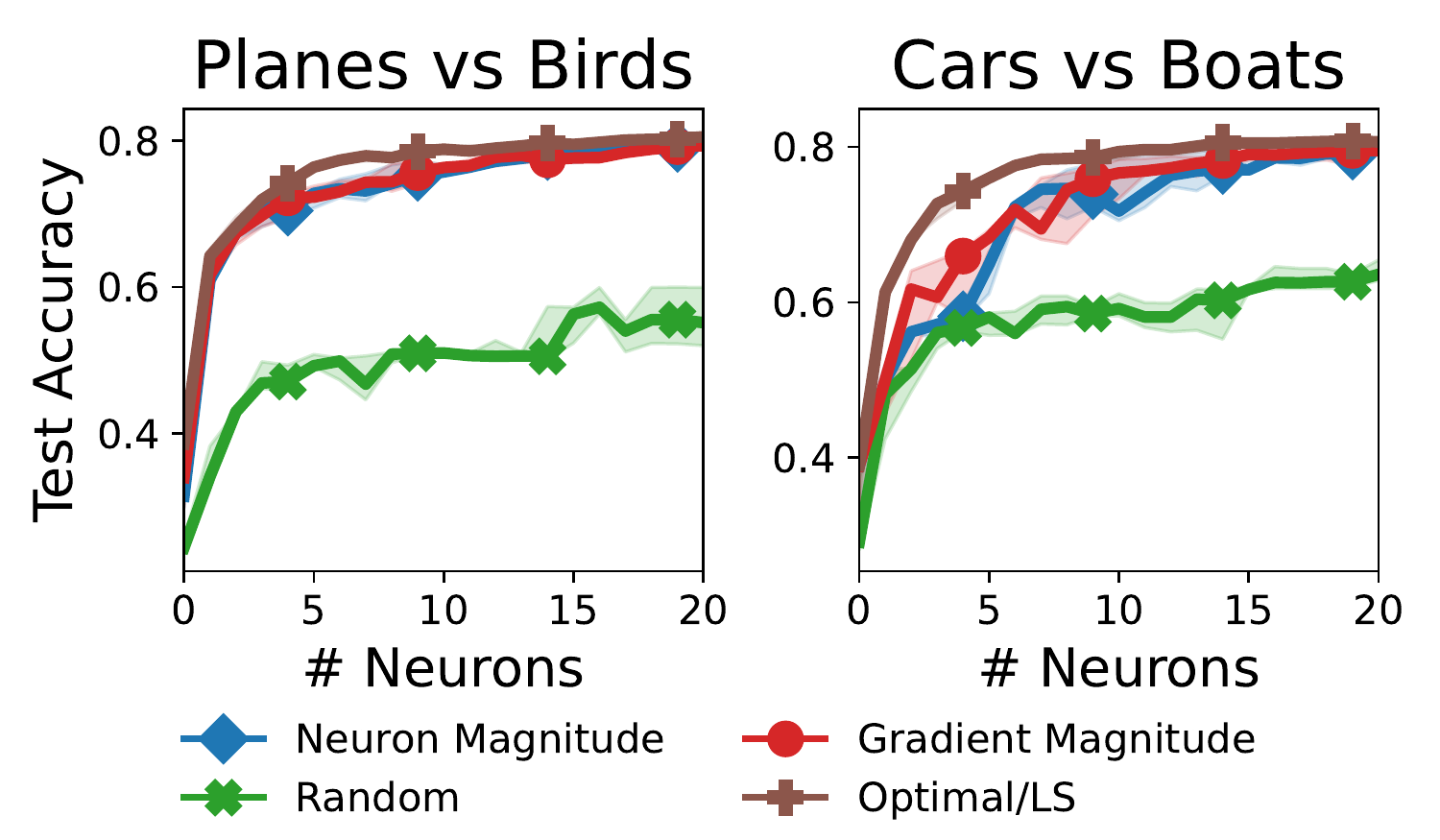}
	\vspace{-2.5ex}
	\caption{Pruning neurons from two-layer ReLU networks on two binary
		classification tasks drawn from the CIFAR-10 dataset.
		We compare our method (\textbf{Optimal/LS}) against baselines;
		see \cref{fig:uci-pruning-acc} for details.
		Our approach, which makes use of a weight correction after pruning,
		outperforms every baseline.
	}
	\label{fig:cifar-pruning-acc}
	\vspace{-2ex}
\end{figure}

\textbf{Pruning}:
We also consider several neuron pruning tasks.
We use two-layer ReLU networks and start pruning from the model given
by optimizing the convex reformulation.
We compare four strategies:
(i) pruning neurons optimally using
\cref{alg:pruning-solutions-nn} until \( \cbr{(XW_{1i})_+}_\act \) are linearly
independent and then approximately using least-squares fits;
and (ii) by removing the neuron with the smallest magnitude,
\( \norm{W_{1i} \cdot w_{2i}} \);
(iii) by removing the neuron with the smallest weighted gradient;
and (iv) by random pruning.

\cref{fig:uci-pruning-acc} shows test performance of the these methods for five
UCI datasets.
Our theory-based pruning method has better test performance
than the baselines on every dataset considered;
on \texttt{hill-valley}, the gap between our approach and
magnitude-based pruning is approximately \( 40\% \).
\cref{fig:cifar-pruning-acc} presents similar results for two binary tasks
taken from the CIFAR-10 dataset \citep{krizhevsky2009learning}.
We provide experiments on additional datasets, including MNIST
\citep{lecun1998gradient},
and experimental details in
\cref{app:additional-experiments}.

\section{Conclusion}\label{sec:conclusion}


We study the structure and properties of solution sets for shallow
neural networks with (gated) ReLU activations.
Unlike previous work, we avoid non-convexity of neural networks
by studying the constrained group lasso,
a generalized linear model which unifies the convex reformulations of both
ReLU and gated ReLU networks.
We derive analytical expressions for the optima and all stationary
points of the training objective for two-layer ReLU networks.
Building on this characterization, we develop
conditions for the optimal neural network to be p-unique,
an algorithm for optimal pruning of neural networks,
and sensitivity results.
We demonstrate the utility of our framework
in experiments on MNIST, CIFAR-10, and the UCI datasets.

There is still much work to do in this area.
For example, we conjecture that the min-norm CGL solution,
which corresponds to the network minimizing a fourth-power penalty,
always has a continuous regularization path.
More generally, it remains to extend our characterization of the solution set to
deeper networks and vector-output models.


\section*{Acknowledgements}

This work was supported by the National Science Foundation (NSF),
Grants ECCS-2037304, DMS-2134248; by the NSF CAREER
Award, Grant CCF-2236829; by the U.S. Army Research Office
Early Career Award, Grant W911NF-21-1-0242; by the Stanford
Precourt Institute; by the Stanford Research Computing Center;
and by the ACCESS—AI Chip Center for Emerging
Smart Systems through InnoHK, Hong Kong, SAR.
Aaron Mishkin was supported by NSF Grant DGE-1656518 and by
NSERC Grant PGSD3-547242-2020.
We thank Frederik Kunstner and Victor Sanches Portella for many
insightful discussions.

\clearpage
\newpage

\bibliography{refs.bib}

\bibliographystyle{icml2023}


\newpage
\appendix
\onecolumn

\section{Constrained Group Lasso: Proofs}\label{app:cgl}

\subsection{Describing the Optimal Set}


\begin{lemma}\label{lemma:unique-fit-cgl}
	The model fit is the same for all optimal solutions to the
	CGL problem.
	That is,
	\[
		X w = X w' \quad \text{ for all } w, w' \in \solfn(\lambda).
	\]
	As a consequence, the sum of group norms is also constant
	when \( \lambda > 0 \).
\end{lemma}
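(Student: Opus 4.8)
The plan is to exploit that the quadratic loss depends on $w$ only through the linear image $Xw$ and is \emph{strictly} convex in that argument, while the remaining terms (the group penalty and the constraint set) are merely convex. Write $F_\lambda(w) = g(Xw) + \lambda R(w)$, where $g(\hat y) = \half \norm{\hat y - y}_2^2$ is strictly convex and $R(w) = \sum_{\bi \in \calB} \norm{\wi}_2$ is convex; the feasible set $\calC = \cbr{w : \Ki^\top \wi \le 0 \text{ for all } \bi \in \calB}$ is an intersection of halfspaces and hence convex.

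First I would take two optimal solutions $w, w' \in \solfn(\lambda)$ and form the midpoint $\bar w = \half(w + w')$. Convexity of $\calC$ gives $\bar w \in \calC$, and convexity of $F_\lambda$ gives $F_\lambda(\bar w) \le \half F_\lambda(w) + \half F_\lambda(w') = p^*(\lambda)$. Since $p^*(\lambda)$ is the minimum value, this forces $F_\lambda(\bar w) = p^*(\lambda)$, so the convexity (Jensen) inequality is in fact an equality.

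The key step is to split this tightness into its two pieces. Subtracting, I would write
\[
	\Big[ g(X \bar w) - \half g(Xw) - \half g(Xw') \Big]
	+ \lambda \Big[ R(\bar w) - \half R(w) - \half R(w') \Big] = 0.
\]
Both bracketed terms are nonpositive, the first by convexity of $g \circ X$ (using $X\bar w = \half(Xw + Xw')$) and the second by convexity of $R$, so each vanishes separately. In particular the loss term gives $g\big(\half(Xw + Xw')\big) = \half g(Xw) + \half g(Xw')$, and strict convexity of the quadratic $g$ then forces $Xw = Xw'$, which is exactly the claimed uniqueness of the fit.

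For the stated consequence, note that $Xw = \hat y(\lambda)$ is now constant across $\solfn(\lambda)$, so the loss value $g(\hat y(\lambda))$ is constant on the optimal set. Because the total objective equals $p^*(\lambda)$ on $\solfn(\lambda)$, the penalty $\lambda R(w)$ must also be constant, and dividing by $\lambda > 0$ yields that $\sum_{\bi \in \calB} \norm{\wi}_2$ is constant. I expect no serious obstacle here; the only subtlety worth care is invoking strict convexity of $g$ in the target space rather than of $g \circ X$, which is generally only weakly convex when $X$ has a nontrivial kernel.
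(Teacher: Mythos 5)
Your proposal is correct and follows essentially the same route as the paper: both take the midpoint of two optimal solutions, use convexity and feasibility to show it attains the optimal value, and then invoke strict convexity of the squared loss in the fitted values $Xw$ (not in $w$) to force $Xw = Xw'$, with the constancy of the group-norm penalty following by subtracting the now-constant loss from $p^*(\lambda)$. Your explicit splitting of the Jensen equality into the loss and penalty brackets is a slightly more detailed rendering of the same argument, not a different one.
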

\begin{proof}
	This follows in a similar fashion to the classic result for the group lasso.
	Let \( w, w' \in \solfn(\lambda) \), \( \bar w = \half w + \half w' \),
	and suppose \( p^* \) is the optimal value of the constrained program.
	By convexity, we have
	\begin{align*}
		\half \norm{X \bar w - y}_2^2 + \lambda \sum_{\bi \in \calB} \norm{\bwi}_2
		 & \leq
		\half p^* + \half p^* = p^*,
	\end{align*}
	where the inequality is strict if \( X w \neq X w' \) by strong convexity
	of \( f(u) = \norm{u - y}_2^2 \).
	Since \( w, w' \) are both feasible, \( \bar w \) is also feasible and
	clearly \( \bar w \) cannot obtain an objective value less than \( p^* \).
	Thus, \( X w = X w' \)  must hold.

	To see the second part of the result, observe that
	\[
		\lambda \sum_{\bi \in \calB} \norm{\bwi}_2 = p^* - \half \norm{\hat y(\lambda)  - y}_2^2,
	\]
	is also constant over \( \solfn(\lambda) \).
\end{proof}

\solFnCGL*
\begin{proof}
	Fix \( \lambda > 0 \) and let
	\( w \in \solfn(\lambda) \).
	If \( \wi \neq 0 \), then the KKT conditions require
	\[
		\lambda \frac{\wi}{\norm{\wi}_2} = \vi \implies \wi = \alpha_\bi \vi,
	\]
	for \( \alpha > 0 \).
	If \( \wi = 0 \), then \( \wi = \alpha_\bi \vi \) holds trivially for
	\( \alpha_\bi = 0 \).
	Since \( w \) is optimal, it must satisfy
	\[
		X \w = \hat y,
	\]
	by \cref{lemma:unique-fit-cgl}.
	Finally, \( \act(w) \subseteq \calS_\lambda \) so that
	\( w \) satisfies the characterization.

	For the reverse direction, we start by defining
	\[
		\begin{aligned}
			\calX =
			\bigg\{ w \in \R^d : \,
			 & \forall \, \bi \in \calS_\lambda, \,
			\wi = \alpha_\bi \vi, \alpha_\bi \geq 0, \,          \\
			 & \forall \, b_j \in \calB \setminus \calS_\lambda,
			\w_{b_j} = 0, \, X w = \hat y
			\bigg\}
		\end{aligned}
	\]
	Take any \( w' \in \calX \).
	If \( \wi' \neq 0 \), then \( \bi \in \equi \) and \( \norm{\vi}_2 = \lambda \)
	so that we may write
	\begin{align*}
		\wi' = \alpha_\bi \vi
		 & \implies \frac{\wi'}{\norm{\wi'}_2} = \frac{\alpha_\bi \vi}{\norm{\alpha_\bi \vi}_2} \\
		 & \implies \frac{\wi'}{\norm{\wi'}_2} = \frac{\vi}{\lambda}                            \\
		 & \implies \lambda \frac{\wi'}{\norm{\wi'}_2} = \vi.
	\end{align*}
	That is,
	\[
		\Xbi^\top (Xw - y) + \lambda \frac{\wi'}{\norm{\wi'}_2} + \Ki \rmin_\bi = 0,
	\]
	which is exactly stationarity of the Lagrangian.

	If \( \wi' = 0 \), then \( X w' = \hat y \) implies
	\[
		\norm{\Xbi^\top (y - Xw') + K \rmin_\bi}_2
		= \norm{\Xbi^\top (y - Xw) + K \rmin_\bi}_2
		\leq \lambda,
	\]
	for some \( w \in \solfn(\lambda) \),
	which also implies the Lagrangian is stationary.

	Now we show optimality of
	\( w' \) by checking feasibility and complementary slackness.
	If \( \wi' \neq 0  \)
	then \( \wi' = \frac{\alpha_\bi'}{\alpha_\bi} \wi \)
	for some optimal solution \( w \) with \( \alpha_\bi > 0 \).
	This follows since \( \wi' \neq 0 \) implies \( \bi \in \calS_\lambda \).
	Thus,
	\[
		\Ki^\top \wi' = \frac{\alpha_\bi'}{\alpha_\bi} \wi \leq 0,
	\]
	by feasibility of \( \wi \).
	Similarly, we find that complementary slackness is satisfied as follows:
	\[
		[\ri]_j \cdot [\Ki]_j^\top \wi'
		= \frac{\alpha_\bi'}{\alpha_\bi} [\ri]_j \cdot [\Ki]_j^\top \wi
		= 0.
	\]
	If \( \wi = 0 \), then both feasibility and complementary slackness
	are trivial.
	Since \( (w', \rmin) \) are feasible and \( \rmin \) is dual optimal,
	we conclude the KKT conditions are satisfied and thus
	\( w' \in \solfn(\lambda) \).
	This completes the proof.
\end{proof}

\begin{proposition}\label{prop:cgl-alternate}
	Fix \( \lambda > 0 \).
	The optimal set for CGL problem is also
	given by
	\begin{equation}\label{eq:sol-fn-cgl-alternate}
		\begin{aligned}
			\solfn(\lambda) =
			\bigg\{ w \in \R^d : \,
			 & \forall \, \bi \in \equi, \,
			\wi = \alpha_\bi \vi, \alpha_\bi \geq 0, \,                   \\
			 & \forall \, b_j \in \calB \setminus \equi, \w_{b_j} = 0, \,
			X w = \hat y,                                                 \\
			 & \, K^\top w \leq 0, \,
			\abr{\rmin, K^\top w} = 0
			\bigg\}
		\end{aligned}
	\end{equation}
\end{proposition}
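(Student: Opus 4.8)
The plan is to prove $\solfn(\lambda) = \calX_2$, where $\calX_2$ denotes the right-hand side of \eqref{eq:sol-fn-cgl-alternate}, by establishing the two inclusions directly from the KKT system \eqref{eq:kkt-cgl} with the fixed min-norm dual parameter $\rmin$, exactly as in the proof of \cref{prop:sol-fn-cgl}. The only structural change from that proof is that the ``shape'' constraints $\wi = \alpha_\bi \vi$ are now indexed by the (possibly larger) equicorrelation set $\equi \supseteq \calS_\lambda$, and we compensate by imposing primal feasibility $K^\top w \le 0$ and complementary slackness $\abr{\rmin, K^\top w} = 0$ explicitly. Throughout I would use \cref{lemma:unique-fit-cgl} (uniqueness of $\hat y$, hence of $\ci$ and $\vi$), together with the dual feasibility $\norm{\vi}_2 \le \lambda$ for every block, with equality precisely on $\equi$.

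For the forward inclusion, take $w \in \solfn(\lambda)$. If $\wi \ne 0$ then $\bi \in \act(w) \subseteq \equi$ and stationarity forces $\wi = \alpha_\bi \vi$ with $\alpha_\bi > 0$; if $\wi = 0$ this holds trivially with $\alpha_\bi = 0$, and for $b_j \notin \equi$ we have $\w_{b_j} = 0$ because $\act(w) \subseteq \equi$. The fit constraint $X w = \hat y$ is \cref{lemma:unique-fit-cgl}, while $K^\top w \le 0$ and $\abr{\rmin, K^\top w} = 0$ are simply primal feasibility and complementary slackness of the optimal pair $(w, \rmin)$; since the KKT conditions hold for any pairing of optimal primal and dual variables, these hold with the single fixed $\rmin$. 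Hence $w \in \calX_2$.

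For the reverse inclusion, take $w' \in \calX_2$ and verify each KKT condition for $(w', \rmin)$. Stationarity is checked blockwise as in \cref{prop:sol-fn-cgl}: for $\bi \in \equi$ with $\wi' \ne 0$, writing $\wi' = \alpha_\bi \vi$ and using $\norm{\vi}_2 = \lambda$ gives $\lambda \wi'/\norm{\wi'}_2 = \vi = \ci - \Ki \rmin_\bi$, which together with $X w' = \hat y$ is stationarity; for $\wi' = 0$ the residual subgradient equals $\vi$ and $\norm{\vi}_2 \le \lambda$ certifies membership in $\partial \lambda \norm{0}_2$. Dual feasibility $\rmin \ge 0$ is inherited, and primal feasibility $K^\top w' \le 0$ is one of the defining constraints of $\calX_2$. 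The crux is complementary slackness: in \cref{prop:sol-fn-cgl} it came for free by expressing each active block as a positive scalar multiple of a block of a \emph{known} optimal solution, but here $\equi$ may strictly contain $\calS_\lambda$, so no reference solution is available. Instead I would combine the three facts $\rmin \ge 0$, $K^\top w' \le 0$, and $\abr{\rmin, K^\top w'} = 0$: each summand $[\rmin]_j [K]_j^\top w'$ is nonpositive while the total sum vanishes, so every summand is zero, which is exactly the termwise complementary slackness in \eqref{eq:kkt-cgl}. All KKT conditions then hold, so $w' \in \solfn(\lambda)$, completing the equality. I expect this complementary-slackness step to be the only genuine obstacle; note that when $K = 0$ both added constraints are vacuous and the statement collapses to replacing $\calS_\lambda$ by $\equi$, recovering the simplification mentioned in the main text.
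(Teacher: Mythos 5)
Your proposal is correct and follows essentially the same route as the paper's own proof: both directions are verified against the KKT system with the fixed min-norm dual $\rmin$, with stationarity checked blockwise via $\lambda \wi'/\norm{\wi'}_2 = \vi$ on $\equi$ and via $\norm{\vi}_2 \le \lambda$ on zero blocks. Your explicit termwise argument for complementary slackness (nonpositive summands with zero sum) is exactly the step the paper compresses to ``it is clear that complementary slackness must also hold,'' so you have simply filled in a detail rather than taken a different path.
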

\begin{proof}
	Fix \( \lambda > 0 \) and let
	\( w \in \solfn(\lambda) \).
	If \( \wi \neq 0 \), then the KKT conditions require
	\[
		\lambda \frac{\wi}{\norm{\wi}_2} = \vi \implies \wi = \alpha_\bi \vi,
	\]
	for \( \alpha > 0 \).
	If \( \wi = 0 \), then \( \wi = \alpha_\bi \vi \) holds trivially for
	\( \alpha_\bi = 0 \).
	Since \( w \) is optimal, it must satisfy
	\[
		X \w = \hat y,
	\]
	by \cref{lemma:unique-fit-cgl}.
	Finally, \( \abr{\rmin, K w} = 0 \) and is feasible by KKT conditions
	so that \( w \) satisfies the characterization.

	For the reverse direction, we start by defining
	\[
		\begin{aligned}
			\calX =
			\bigg\{ w \in \R^d : \,
			 & \forall \, \bi \in \equi, \,
			\wi = \alpha_\bi \vi, \alpha_\bi \geq 0, \,                   \\
			 & \forall \, b_j \in \calB \setminus \equi, \w_{b_j} = 0, \,
			X w = \hat y,                                                 \\
			 & \, K^\top w \leq 0, \,
			\abr{\rmin, K^\top w} = 0
			\bigg\}
		\end{aligned}
	\]
	Take any \( w' \in \calX \).
	If \( \wi' \neq 0 \), then \( \bi \in \equi \) and \( \norm{\vi}_2 = \lambda \)
	so that we may write
	\begin{align*}
		\wi' = \alpha_\bi \vi
		 & \implies \frac{\wi'}{\norm{\wi'}_2} = \frac{\alpha_\bi \vi}{\norm{\alpha_\bi \vi}_2} \\
		 & \implies \frac{\wi'}{\norm{\wi'}_2} = \frac{\vi}{\lambda}                            \\
		 & \implies \lambda \frac{\wi'}{\norm{\wi'}_2} = \vi.
	\end{align*}
	That is,
	\[
		\Xbi^\top (Xw - y) + \lambda \frac{\wi'}{\norm{\wi'}_2} + \Ki \rmin_\bi = 0,
	\]
	which is exactly stationarity of the Lagrangian.

	If \( \wi' = 0 \), then
	\[
		X w' = \hat y \implies \norm{\Xbi^\top (y - Xw') + K \rmin_\bi}_2 \leq \lambda,
	\]
	which also implies the Lagrangian is stationary.

	Since \( \abr{\rmin, K^\top w'} = 0 \) and
	\( K^\top w' \leq 0 \), i.e. \( w' \) is feasible,
	it is clear that complementary slackness must also hold.
	We conclude that \( (w', \rmin) \) are primal-dual optimal by the KKT
	conditions and the proof is complete.
\end{proof}

\uniqueCGL*
\begin{proof}
	Suppose by way of contradiction that \( w, w' \in \solfn(\lambda) \)
	such that \( w \neq w' \).
	By \cref{prop:sol-fn-cgl}, we have
	\begin{align*}
		0 = X (w - w')
		 & = \sum_{\bi \in \calS_\lambda} \Xbi (\wi - \wi')                    \\
		 & = \sum_{\bi \in \calS_\lambda} (\alpha_\bi - \alpha_\bi') \Xbi \vi,
	\end{align*}
	which implies that the vectors \( \cbr{\Xbi \vi}_{\calS_\lambda} \) are
	linearly dependent.

	Necessity follows from \cref{alg:pruning-solutions},
	which shows that, given a solution \( w \in \solfn(\lambda) \),
	linear dependence of \( \cbr{\Xbi w(\lambda) : \bi \in \calA(w(\lambda)) } \)
	implies the exist of at least one additional solution.
\end{proof}

\GGP*
\begin{proof}
	Suppose the group Lasso solution is not unique.
	Then, \cref{cor:uniqueness-cond-cgl} implies
	\[
		\calN_\lambda = \Null(\Xe) \bigcap \cbr{z : \zi = \alpha_\bi \ci, \bi \in \equi },
	\]
	is non-empty.
	That is, there exist \( \alpha_{\bi} \geq 0 \) such that
	\begin{align*}
		X_{b_j} c_{b_j}
		 & = \sum_{\bi \in \equi \setminus j} \alpha_{\bi} \Xbi \ci.
		\intertext{Taking inner-products on both sides with the residual
			\( r \),}
		r^\top X_{b_j} c_{b_j}
		 & = \sum_{\bi \in \equi \setminus j} \alpha_{\bi} r^\top \Xbi \ci \\
		\implies
		c_{b_j}^\top c_{b_j}
		 & = \sum_{\bi \in \equi \setminus j} \alpha_{\bi} \ci^\top \ci    \\
		\implies \lambda^2
		 & = \sum_{\bi \in \equi \setminus j} \alpha_{\bi} \lambda^2       \\
		\implies 1
		 & = \sum_{\bi \in \equi \setminus j} \alpha_{\bi}.
	\end{align*}
	Thus, we deduce that
	\begin{equation}\label{eq:dependence-relation}
		X_{b_j} c_{b_j}
		= \sum_{\bi \in \equi \setminus j} \beta_{\bi} \Xbi \ci,
	\end{equation}
	where \( \sum_{\bi \in \equi \setminus j} \beta_{\bi} = 1 \),
	meaning \( X_{b_j} c_{b_j} \in \text{affine}(\Xbi \zi : \bi \in \equi \setminus b_j) \)
	Now, suppose that \( |\equi| > n + 1 \).
	Then, \( \cbr{\Xbi \ci : \bi \in \equi \setminus j} \) are linearly dependent
	and, by eliminating dependent vectors \( \Xbi \ci \),
	we can repeat the above proof with a subset \( \calE' \) of at most \( n + 1 \)
	blocks.
	Noting \( \norm{\ci}_2 = \lambda \) for each \( \bi \in \equi \) and
	rescaling both sides of \cref{eq:dependence-relation} by \( \lambda \)
	implies the existence of unit vectors \( z_{\bi} \) which contradict
	GGP.
	This completes the proof.
\end{proof}

\begin{proposition}\label{prop:general-position-relation}
	Group general position does not imply
	the columns of \( X \) are in general position.
	Similarly, general position of the columns of \( X \) does not imply
	group general position.
\end{proposition}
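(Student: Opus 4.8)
The statement is a pair of non-implications, so the plan is to exhibit two explicit counterexamples, one for each direction, and to verify in each case that one property holds while the other fails.

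For the first claim (GGP does not imply general position), the plan is to place linearly dependent columns inside a single block, so that the redundancy destroys general position but remains invisible to GGP. Concretely, I would take \( n = 2 \), \( \calB = \cbr{b_1, b_2} \) with \( b_1 = \cbr{1,2} \), \( b_2 = \cbr{3} \), and columns \( X_1 = X_2 = (1,0)^\top \), \( X_3 = (0,1)^\top \). Since \( X_1 = X_2 \), the zero-dimensional affine subspace \( \cbr{(1,0)^\top} \) contains two of the signed columns, so the columns are not in general position in the sense of \citet{tibshirani2013unique}. To check GGP (\cref{prop:unique-sol}), note that the only block collection with a nonempty affine-hull condition is \( \calE = \cbr{b_1, b_2} \), since a single block yields \( \calE \setminus b_j = \emptyset \) and hence an empty affine hull. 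For any unit \( z_{b_1} \in \R^2 \) and \( z_{b_2} \in \cbr{\pm 1} \), the vector \( X_{b_1} z_{b_1} \) lies on the \( x \)-axis while \( X_{b_2} z_{b_2} = \pm(0,1)^\top \) lies on the \( y \)-axis; the affine-membership condition forces these to be equal, which is impossible because the latter is nonzero. Hence GGP holds and general position fails.

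For the second claim (general position does not imply GGP), the plan is to exploit blocks of size at least two, where \( \cbr{X_{b_i} z_{b_i} : \norm{z_{b_i}}_2 = 1} \) traces an ellipse rather than a finite set of signed points, giving enough freedom to manufacture cross-block affine dependence among otherwise generic columns. Working in \( \R^2 \) with two size-two blocks \( b_1, b_2 \) whose columns are each linearly independent, each such set is an origin-centered ellipse — the image of the unit circle under the block matrix. I would choose the two blocks so that these ellipses cross transversally at a nonzero point \( v \), e.g. shapes \( x^2 + y^2/4 = 1 \) and \( x^2/4 + y^2 = 1 \), which meet along \( y = \pm x \) so that neither is contained in the other. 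At such a crossing there are unit vectors \( z_{b_1}, z_{b_2} \) with \( X_{b_1} z_{b_1} = v = X_{b_2} z_{b_2} \), and taking \( \calE = \cbr{b_1, b_2} \) (so \( \abs{\calE} = 2 \leq n+1 \)) with \( j = b_2 \) shows \( X_{b_2} z_{b_2} \in \text{affine}(\cbr{X_{b_1} z_{b_1}}) \), violating GGP.

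It remains to ensure the four columns are simultaneously in general position, and this is where the only real subtlety lies: the naive axis-aligned blocks that make the ellipses cross tend to place two columns on a common line through the origin, which already breaks general position. I would resolve this by a stability argument: a transversal crossing of the two ellipses persists under small perturbations of the block matrices (it is an open condition), whereas general position of the columns holds for almost every configuration. Perturbing the crossing example slightly therefore yields columns in general position for which the ellipses still cross, completing the counterexample. The main obstacle is thus reconciling the two requirements in this second direction; the first direction is immediate once one observes that GGP only constrains affine relations between \emph{distinct} blocks and never sees within-block redundancy.
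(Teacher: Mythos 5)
Your proof is correct but takes a genuinely different route from the paper's. The paper handles both directions with a single construction --- $b_1 = \cbr{1}$, $b_2 = \cbr{2, \ldots, d}$, $X_{b_2} = I$ --- and observes that the set of $x_1$ violating GGP (the image of the unit sphere under $X_{b_2}$) and the set violating general position (the affine hyperplane $\cbr{z : \abr{z, 1} = 1}$) are incomparable, so $x_1$ can be placed in either one alone. Your first example isolates a cleaner structural reason for the first non-implication: GGP only ever inspects the aggregated vectors $\Xbi z_\bi$, one per block, so it is blind to within-block column degeneracy, which is exactly what duplicating a column inside $b_1$ exploits. Your second example realizes the same sphere-versus-affine-hull gap as the paper, but with two rank-two blocks, at the cost of an extra perturbation step; note that your unperturbed choice $X_{b_1} = \text{diag}(1,2)$, $X_{b_2} = \text{diag}(2,1)$ is in fact already in general position for $n = 2$ (two columns collinear through the origin do not violate Tibshirani's definition unless a third signed column lands in their affine span, and none does here), so the perturbation is a safe but unnecessary precaution in this instance. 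One caveat worth making explicit: your second example requires reading the size bound $\abs{\calE} \leq n+1$ in \cref{prop:unique-sol} as counting blocks (giving $2 \leq 3$) rather than the cardinality of the union of indices (which would give $4 > 3$ and exempt the only violating collection). The proof of \cref{prop:unique-sol} reduces to ``at most $n+1$ blocks,'' so block-counting is clearly the intended reading, but the paper's notation section formally defines $\abs{\cdot}$ on collections as the cardinality of the union, so you should state which convention you are using.
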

\begin{proof}
	Let \( x_1, \ldots, x_{d} \in \R^{d-1} \).
	Consider the simple case where we have two groups: \( b_1 = \cbr{1} \)
	and \( b_2 = \cbr{2, \ldots, d} \).
	Group general position is violated if there exists a unit vector \( z_{b_2} \)
	such that
	\begin{align*}
		x_{1}
		 & = X_{b_2} z_{b_2}.   \\
		\iff x_1
		 & \in X_{b_2} B_{d-1},
	\end{align*}
	where \( B_{d-1} = \cbr{z \in \R^{d-1} : \norm{z}_2 \leq 1} \).
	In contrast, general position is violated if
	\begin{align*}
		x_1
		 & \in \text{affine}(x_2, \ldots, x_{d}) \\
		\iff x_1
		 & \in X_{b_2} \cbr{z : \abr{z, 1} = 1}.
	\end{align*}
	Taking \( X_{b_2} = I \), it is trivial to see that group general position can
	hold when general position is violated and vice-versa.
\end{proof}

\subsection{Computing Dual Optimal Parameters}


\begin{lemma}\label{lemma:lagrange-dual}
	One Lagrange dual of CGL is the following:
	\begin{equation}\label{eq:dual-program}
		\begin{aligned}
			\max_{\eta, \rho} \;
			 & - \half (\eta + K \rho - X^\top y) (X^\top X)^+ (\eta + K \rho - X^\top y) + \half \norm{y}_2^2                         \\
			 & \quad \quad \text{s.t.} \quad \eta + K \rho \in \Row(X), \; \norm{\eta_\bi}_2 \leq \lambda \; \forall \; \bi \in \calB,
		\end{aligned}
	\end{equation}
	where \( \eta_\bi = \ci - \Ki \ri \) shows that the vectors \( \vi \)
	are, in fact, dual variables.
	Moreover, if \( K = 0 \), then \( \rmin = 0 \) and \( \eta \)
	has the unique solution \( \eta_\bi = \Xbi^\top (y -  Xw) = \ci \).
	That is, the dual parameters are the (unique) block correlation vectors.
\end{lemma}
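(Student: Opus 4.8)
The plan is to derive \eqref{eq:dual-program} by computing the dual function $g(\rho) = \min_w \calL(w, \rho)$ associated with the Lagrangian \eqref{eq:lagrangian-cgl} and then maximizing over the multipliers $\rho \geq 0$. Since the constraints $\Ki^\top \wi \leq 0$ are polyhedral and the problem is feasible, strong duality holds and $p^*(\lambda) = \max_{\rho \geq 0} g(\rho)$, so it suffices to put $g$ into the claimed form. I keep the sign constraint $\rho \geq 0$ in the outer maximization, as it is the inequality multiplier inherited from the primal constraints.

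To evaluate $g(\rho)$, the key step is to also dualize the group-norm penalty through its support-function representation $\lambda \norm{\wi}_2 = \max_{\norm{\eta_\bi}_2 \leq \lambda} \abr{\eta_\bi, \wi}$. This rewrites $g(\rho)$ as a min-max in $w$ and $\eta = (\eta_\bi)_{\bi}$; because the objective is convex in $w$, linear (hence concave) in $\eta$, and the $\eta$-feasible set is compact, a standard minimax theorem (Sion's) lets me exchange the order to get $g(\rho) = \max_{\norm{\eta_\bi}_2 \leq \lambda} \min_w \big[ \half \norm{Xw - y}_2^2 + \abr{\eta + K\rho, w} \big]$. The inner problem is an unconstrained convex quadratic whose stationarity condition is the normal equation $X^\top X w = X^\top y - (\eta + K\rho)$. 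This is solvable exactly when $\eta + K\rho - X^\top y \in \Range(X^\top X) = \Row(X)$; since $X^\top y \in \Row(X)$ automatically, this is precisely the stated constraint $\eta + K\rho \in \Row(X)$, and otherwise the quadratic is unbounded below along $\Null(X)$ so the inner value is $-\infty$. Substituting the minimizer $w = (X^\top X)^+ (X^\top y - \eta - K\rho)$ and simplifying using the identity $X^\top X (X^\top X)^+ z = z$ for $z \in \Row(X)$ produces exactly $-\half (\eta + K\rho - X^\top y)^\top (X^\top X)^+ (\eta + K\rho - X^\top y) + \half \norm{y}_2^2$, which is \eqref{eq:dual-program}.

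Next I identify $\eta$ with the vectors $\vi$. At a primal-dual optimal pair the recovered inner minimizer coincides with a primal optimum $w$, so the normal equation reads blockwise as $\Xbi^\top (Xw - y) = -(\eta_\bi + \Ki \ri)$; rearranging gives $\eta_\bi = \Xbi^\top(y - Xw) - \Ki \ri = \ci - \Ki \ri = \vi$, which is the claimed interpretation, and $\norm{\eta_\bi}_2 \leq \lambda$ recovers the familiar equicorrelation bound $\norm{\vi}_2 \leq \lambda$. For $K = 0$ the constraints are vacuous and the term $\abr{K\rho, w}$ vanishes for every $\rho$, so $\rmin = 0$. The dual objective then depends on $\eta$ only through $-\half (\eta - X^\top y)^\top (X^\top X)^+ (\eta - X^\top y)$ restricted to $\eta \in \Row(X)$; since $(X^\top X)^+$ is positive definite on $\Row(X)$, this objective is strictly concave there and the maximizer is unique. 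Because the model fit $\hat y(\lambda)$ is constant over $\solfn(\lambda)$ (\cref{lemma:unique-fit-cgl}), the correlations $\ci = \Xbi^\top(y - \hat y)$ are determined, lie in $\Row(X)$, and satisfy $\norm{\ci}_2 \leq \lambda$; the stationarity identity above with $\rho = 0$ then forces the unique optimizer to equal $\eta_\bi = \ci$.

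The main obstacle I anticipate is the careful treatment of the rank-deficiency of $X$: the minimax exchange must be justified (compactness of the $\eta$-constraint together with convex-concavity suffices), and the pseudo-inverse bookkeeping — pinning down $\Row(X)$ as exactly the set where the inner minimization is attained and verifying the quadratic simplification on that subspace — is where the argument is most delicate. Establishing that the inner minimizer $w$ can be taken to be a genuine primal optimum, so that the identification $\eta_\bi = \vi$ is legitimate, also relies on complementary slackness at the saddle point rather than on an arbitrary inner minimizer.
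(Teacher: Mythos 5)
Your derivation is correct and lands on essentially the same argument as the paper: the same normal equation $X^\top X w = X^\top y - \eta - K\rho$, the same $\Row(X)$ condition for boundedness, the same pseudo-inverse substitution, and the same identification $\eta_\bi = \ci - \Ki \ri$ at optimality. The only real difference is mechanical: the paper introduces $\eta$ as the multiplier for a splitting constraint $z = w$ and reads off the indicator $\mathbbm{1}(\norm{\eta_\bi}_2 \le \lambda)$ as the Fenchel conjugate of $\lambda\norm{\cdot}_2$, which avoids invoking Sion's theorem, whereas you introduce $\eta$ through the support-function representation and must justify the min--max exchange; your uniqueness argument for the $K=0$ case via strict concavity of the dual objective on $\Row(X)$ is a clean alternative to the paper's appeal to uniqueness of $\ci$, and your explicit retention of $\rho \ge 0$ is if anything more careful than the statement of \eqref{eq:dual-program}.
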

\begin{proof}
	We re-write the group Lasso problem as follows:
	\begin{align*}
		\min_{w} \half \norm{X z - y}_2^2 + \lambda \sum_{\bi \in \calB} \norm{\wi}_2
		\quad \text{s.t.} \, \, z = w, \, \Ki^\top z_\bi \leq 0.
	\end{align*}
	The Lagrangian for this problem is
	\begin{align*}
		\calL(w, z, \eta, \rho)
		 & = \half \norm{X z - y}_2^2 + \abr{\eta, z - w} + \abr{\rho, K^\top z} + \lambda \sum_{\bi \in \calB} \norm{\wi}_2 \\
		 & = \half \norm{X z - y}_2^2 + \abr{\eta + K \rho, z} - \abr{\eta, w} + \lambda \sum_{\bi \in \calB} \norm{\wi}_2.
	\end{align*}
	Minimizing over \( z \), we find that stationarity implies
	\[
		X^\top (y - X z) = \eta + K \rho,
	\]
	so that \( \eta + K \rho \in \Row(X) \).
	Solving this system, we find
	\[
		X^\top X z = X^\top y - \eta - K \rho \implies z = (X^\top X)^+ \sbr{X^\top y - \eta - K \rho} + c,
	\]
	where \( c \in \Null(X) \).
	Let us minimize over \( w \) similarly.
	The Lagrangian decouples block-wise in \( w \), so that
	we must solve
	\[
		\min_{\wi} \lambda \norm{\wi}_2 - \abr{\eta_\bi, \wi},
	\]
	for each \( \bi \in \calB \).
	The minimum value is achieved by the (negative) Fenchel conjugate of
	\( \lambda \norm{\wi}_2 \)
	evaluated at \( \eta_\bi \);
	that is,
	\[
		\min_{\wi} \lambda \norm{\wi}_2 - \abr{\eta_\bi, \wi} = - \mathbbm{1}(\norm{\eta_\bi}_2 \leq \lambda).
	\]
	Combining this with the expression for \( z \), we obtain
	\begin{align*}
		\calL(c, \eta, \rho)
		 & = - \half (\eta + K \rho - X^\top y) (X^\top X)^+ (\eta + K \rho - X^\top y) + \abr{\eta + K \rho, c}
		- \sum_{\bi \in \calB} \mathbbm{1}(\norm{\eta_\bi}_2 \leq \lambda)                                       \\
		 & = - \half (\eta + K \rho - X^\top y) (X^\top X)^+ (\eta + K \rho - X^\top y)
		- \sum_{\bi \in \calB} \mathbbm{1}(\norm{\eta_\bi}_2 \leq \lambda),
	\end{align*}
	where the second equality follows since \( \eta + K \rho \in \Row(X) \) and \( c \in \Null(X) \)
	are orthogonal.
	(Alternatively, one can observe that the dual problem is unbounded below
	whenever \( \abr{c, \eta + K \rho} \neq 0 \).)
	Thus, the dual problem is equal to
	\begin{align*}
		\max_{\eta, \rho} - \half (\eta + K \rho - X^\top y) (X^\top X)^+ (\eta _ K \rho - X^\top y)
		- \sum_{\bi \in \calB} \mathbbm{1}(\norm{\eta_\bi}_2 \leq \lambda)
		- \mathbbm{1}(\eta + K \rho \in \Row(X)),
	\end{align*}
	which completes the derivation.

	Recalling \( z = w \) for any primal-dual optimal pair and
	\[
		X^\top (y - X z) = \eta + K \rho,
	\]
	shows that \( \eta_\bi = \ci - K \rho \) as claimed.
	Moreover, if \( K = 0 \), then we may assume without loss of generality
	that he corresponding dual vectors \( \ri \) are zero.
	In this case, \( \eta_\bi = \ci \) and,
	since \( \ci \) is unique, the dual solution must also be unique.
\end{proof}

\begin{proposition}\label{prop:non-negative-regression}
	Let \( \lambda > 0 \) and \( w \in \solfn(\lambda) \).
	If \( \wi = 0 \), then any solution to \cref{eq:non-negative-regression}
	is dual optimal for block \( \bi \).
\end{proposition}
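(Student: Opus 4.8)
The plan is to verify directly that any minimizer $\hat\rho_\bi$ of \cref{eq:non-negative-regression} satisfies the block-$\bi$ KKT conditions in \cref{eq:kkt-cgl}, using a known dual optimal parameter as a feasible comparison point. The key structural observation is that the KKT conditions decouple across blocks: stationarity, complementary slackness, and non-negativity for block $\bi$ involve only $\ri$ and not the dual parameters of other blocks. Hence ``dual optimal for block $\bi$'' reduces to checking these three conditions for $\hat\rho_\bi$ alone, after which $\hat\rho_\bi$ can be substituted into any dual optimal parameter without affecting the remaining blocks.

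First I would simplify the block-$\bi$ conditions under the hypothesis $\wi = 0$. Feasibility $\Ki^\top \wi \leq 0$ and complementary slackness $[\ri]_j [\Ki]_j^\top \wi = 0$ then hold trivially for every $j$. Using $\Xbi^\top(Xw - y) = -\ci$ and the subgradient characterization $s_\bi \in \partial \lambda \norm{\wi}_2 = \cbr{s : \norm{s}_2 \leq \lambda}$ at $\wi = 0$, the stationarity equation $-\ci + \Ki \ri + s_\bi = 0$ is equivalent to the single norm condition $\norm{\ci - \Ki \ri}_2 \leq \lambda$. Thus the only substantive requirements on a candidate dual parameter are $\hat\rho_\bi \geq 0$ and $\norm{\ci - \Ki \hat\rho_\bi}_2 \leq \lambda$.

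The non-negativity $\hat\rho_\bi \geq 0$ is immediate from the feasible set of \cref{eq:non-negative-regression}. For the norm bound, I would invoke existence of a dual optimal parameter: since $w \in \solfn(\lambda)$ and the constraints are linear so that strong duality attains, there is a dual optimal $\rmin$ satisfying all of \cref{eq:kkt-cgl}; in particular $\rmin_\bi \geq 0$ and $\norm{\ci - \Ki \rmin_\bi}_2 \leq \lambda$. Since $\rmin_\bi$ is feasible for \cref{eq:non-negative-regression} and $\hat\rho_\bi$ minimizes $\norm{\Ki \ri - \ci}_2$ over the non-negative orthant, we obtain
\[
	\norm{\ci - \Ki \hat\rho_\bi}_2 \leq \norm{\ci - \Ki \rmin_\bi}_2 \leq \lambda.
\]
Combining this with $\hat\rho_\bi \geq 0$ shows $\hat\rho_\bi$ meets the block-$\bi$ KKT conditions, and the block-wise decoupling lets me conclude it is dual optimal for block $\bi$.

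There is no genuinely hard step; the argument is essentially a one-line optimality comparison. The only points requiring care are recognizing that the subdifferential of the group norm at the origin collapses stationarity to a single inequality, and that the known $\rmin_\bi$ serves as a feasible witness certifying the value $\lambda$ is attainable within the non-negative orthant. Everything else follows from the minimality of $\hat\rho_\bi$.
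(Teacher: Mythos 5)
Your proposal is correct and follows essentially the same route as the paper: both arguments use the existence of a (min-norm) dual optimal parameter $\rmin_\bi$ as a feasible comparison point in \cref{eq:non-negative-regression} to conclude $\norm{\Ki \hat\rho_\bi - \ci}_2 \leq \norm{\Ki \rmin_\bi - \ci}_2 \leq \lambda$, then verify that feasibility and complementary slackness hold trivially because $\wi = 0$, and invoke sufficiency of the KKT conditions. Your write-up is in fact slightly more explicit than the paper's about why stationarity at $\wi = 0$ collapses to the single norm inequality via the subdifferential of the group norm at the origin.
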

\begin{proof}
	Let \( \ri \) be a solution to \cref{eq:non-negative-regression}.
	Since \( w \) is optimal and strong duality holds, there exists
	some min-norm dual optimal vector \( \rmin \).
	Moreover \( \rmin \) satisfies \( \rmin_\bi \geq 0 \) and
	\[
		\norm{\Ki \ri - \ci}^2_2 \leq \norm{\Ki \rmin_\bi - \ci}^2_2 \leq \lambda^2,
	\]
	so \( w \) is both feasible satisfies stationarity of the Lagrangian,
	Finally, because \( \wi = 0 \), complementary slackness,
	\[
		[\ri]_j \cdot [\Ki]_j^\top \wi = 0,
	\]
	is verified for every \( j \in [a_\bi] \).
	Since the KKT conditions are sufficient for primal-dual optimality, we
	conclude that \( \ri \) is dual optimal.
	This completes the proof.
\end{proof}

\subsection{Minimal Solutions and Optimal Pruning}


\minimalSols*
\begin{proof}
	Let \( w \in \solfn(\lambda) \) and assume that
	the vectors \( \cbr{\Xbi \wi}_{\calA(w)} \) are linearly independent.
	By way of contradiction, assume there exists \( w' \in \solfn(\lambda) \)
	with strictly smaller support.
	By \cref{prop:sol-fn-cgl}, we have
	\[
		\wi' = \beta_\bi \wi,
	\]
	for some \( \beta_\bi \geq 0 \).
	This holds for each \( \bi \in \act(\w) \)
	(with \( \beta_\bi = 0 \) when \( \bi \in \act(w) \setminus \act(w') \)) so that
	\[
		X w = X w' \implies \sum_{\bi \in \act(w)} (1 - \beta_\bi) \Xbi \wi = 0,
	\]
	which is a contradiction.

	For the reverse direction, assume that \( w \) is minimal, but that
	\( \cbr{\Xbi \wi}_{\calA(w)} \) are not linearly independent.
	Then the correctness of \cref{alg:pruning-solutions}
	(see \cref{prop:pruning-correctness}) implies
	\( w \) is not minimal.
\end{proof}

\begin{proposition}\label{prop:pruning-correctness}
	\cref{alg:pruning-solutions} returns a minimal solution to the
	constrained group lasso problem in at most \( O(n^3l + nd) \)
	time, where \( l \) is the number of non-zero groups in the
	initial solution.
\end{proposition}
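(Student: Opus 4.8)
The plan is to verify three things about \cref{alg:pruning-solutions}: every iterate $w^k$ stays in $\solfn(\lambda)$ (the update preserves optimality), the support $\calA(w^k)$ strictly shrinks each pass so the loop runs at most $l$ times, and when the loop guard fails the active vectors $\cbr{\Xbi \wi^k}_{\calA(w^k)}$ are linearly independent, so by \cref{prop:minimal-solutions} the returned iterate is minimal. The running-time claim then follows by tallying the per-iteration cost. I expect the optimality invariant to be the main obstacle; once it is in place, the support count and the complexity accounting are routine.

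First I would establish $w^k \in \solfn(\lambda)$ by induction, leaning on the exact parameterization of \cref{prop:sol-fn-cgl}. The update $\wi^{k+1} = \wi^k(1 - t^k \beta_\bi)$ merely rescales each block, so if $\wi^k = \alpha_\bi^k \vi$ then $\wi^{k+1} = \alpha_\bi^{k+1} \vi$ with $\alpha_\bi^{k+1} = \alpha_\bi^k(1 - t^k \beta_\bi)$; the form $\wi = \alpha_\bi \vi$ is preserved and the support can only shrink, so $\calA(w^{k+1}) \subseteq \calA(w^k) \subseteq \calS_\lambda$. Nonnegativity $\alpha_\bi^{k+1} \geq 0$ comes from the step rule: since $\bi^k \in \argmax_\bi |\beta_\bi|$ and $t^k = 1/|\beta_{\bi^k}|$, we have $|t^k \beta_\bi| \leq 1$, hence $1 - t^k \beta_\bi \in [0,2]$ for every active block. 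The fit is unchanged because $\beta$ lies in the null space of the active vectors:
\[
	X w^{k+1} = \sum_{\bi \in \calA(w^k)} \Xbi \wi^k - t^k \sum_{\bi \in \calA(w^k)} \beta_\bi \Xbi \wi^k = X w^k = \hat y.
\]
Thus both defining conditions of \cref{prop:sol-fn-cgl} hold and $w^{k+1} \in \solfn(\lambda)$, which is the crux of the argument.

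Next I would argue strict support reduction. Given a null direction $\beta \neq 0$, replacing $\beta$ by $-\beta$ if needed (this leaves the maximizer $\bi^k$ and the relation $\sum_\bi \beta_\bi \Xbi \wi^k = 0$ intact) lets us assume $\beta_{\bi^k} = |\beta_{\bi^k}| > 0$; then $t^k \beta_{\bi^k} = 1$, so $w^{k+1}_{\bi^k} = 0$ and block $\bi^k$ leaves the support while no new block enters. Hence $|\calA(w^{k+1})| \leq |\calA(w^k)| - 1$, and starting from $|\calA(w^0)| = l$ the loop executes at most $l$ times. The guard ``$\exists \beta \neq 0$ with $\sum_\bi \beta_\bi \Xbi \wi^k = 0$'' is exactly linear dependence of $\cbr{\Xbi \wi^k}_{\calA(w^k)}$, so at termination these vectors are independent and \cref{prop:minimal-solutions} certifies the output is minimal.

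For the complexity, forming the block products $q_\bi = \Xbi \wi$ for all active blocks is a one-time $O(nd)$ cost. Each of the $\leq l$ iterations is dominated by finding a nonzero $\beta$ in the null space of the $n \times |\calA(w^k)|$ matrix with columns $\cbr{q_\bi}$; since any $n+1$ vectors in $\R^n$ are dependent, when $|\calA(w^k)| > n+1$ we may restrict to an arbitrary subset of $n+1$ active columns, so a dependence is always available among at most $n+1$ columns and is computable in $O(n^3)$ by Gaussian elimination. The multiplicative update only touches the (at most $n+1$) blocks on which $\beta$ is supported, so maintaining the $q_\bi$ costs $O(n^2)$ per iteration. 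Summing over $\leq l$ iterations and adding the setup yields the claimed $O(n^3 l + nd)$ bound.
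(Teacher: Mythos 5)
Your proof is correct and follows essentially the same route as the paper's: verify that each update preserves membership in \( \solfn(\lambda) \) via the characterization of \cref{prop:sol-fn-cgl}, show the support strictly shrinks each pass, invoke the linear-independence criterion of \cref{prop:minimal-solutions} at termination, and tally the per-iteration costs. Two minor points where your write-up is tighter than the paper's: you verify optimality of \( w^{k+1} \) directly at \( t = t^k \) (using that the characterization permits \( \alpha_\bi = 0 \)) rather than via the paper's limiting argument over \( 0 < t < t^k \), and you make explicit the sign normalization of \( \beta \) needed to guarantee that block \( \bi^k \) is actually zeroed.
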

\begin{proof}
	\textbf{Correctness}:
	Let \( w \in \solfn(\lambda) \)
	and \( \calA \) be the associated active set.
	If \( w^0 = w \) is minimal, then \cref{prop:minimal-solutions}
	implies \( \cbr{\Xbi \wi}_\calA \) are linearly independent
	the algorithm returns a minimal solution.

	Let \( k \geq 0 \) and suppose \( w^k \) is not minimal.
	Then there exist weights \( \beta_\bi \) such that
	\[
		\sum_{\bi \in \calA} \beta_\bi \Xbi w^k = 0.
	\]
	Let \( \wi^t = (1 - t \beta_\bi) \wi \) and let \( t^k \)
	be as defined in the algorithm.
	By construction, \( t^k \) is the smallest magnitude \( t \) such that
	\( (1 - t \beta_\bi) = 0 \) for some \( \bi \in \calA \).
	We assume without loss of generality that \( t^k > 0 \).

	Fix \( 0 < t < t^k \). Let's show that \( w^t \) is a solution to the
	constrained problem.
	Firstly, we have
	\[
		X w^t = X w^k - t \sum_{\bi \in \calA} \beta_\bi \Xbi \wi^k = X w^k,
	\]
	showing that the model fit preserved.
	Moreover,
	\[
		\wi^t = (1 - t \beta_\bi) \wi^k = (1 - t \beta_\bi) \alpha_\bi \vi,
	\]
	where \( (1 - t \beta_\bi) \alpha_\bi > 0 \) by the choice of \( t \).
	We conclude that \( w^t \) is optimal by \cref{prop:sol-fn-cgl}.

	By construction,
	\[
		\lim_{t \uparrow t^k} w^t = w^{k+1}.
	\]
	Since \( w^t \) is an optimal solution,
	it has the (unique) optimal squared error and sum of group norms.
	Taking limits as \( t \uparrow t^k \),
	we see that
	\[
		X w^{k+1} = X w^k,
		\quad \sum_{\bi \in \calB} \norm{\wi^{k+1}}_2
		=
		\sum_{\bi \in \calB} \norm{\wi}_2,
		\quad
		K^\top w^{k+1} \leq 0,
	\]
	which implies that \( w^{k+1} \) obtains the optimal objective
	value and is feasible.
	Thus \( w^{k+1} \) is also a solution.
	Finally, \( \calA(w^{k+1}) \) is strictly smaller than
	\( \calA(w^k) \), as required.

	Arguing by induction now implies that \cref{alg:pruning-solutions}
	returns an minimal solution in a finite number of steps.

	\textbf{Complexity}:
	First, observe that we can pre-compute the block-wise model fits
	\( q_\bi = \Xbi \wi \) before running the
	algorithm. The complexity is at most \( O(nd) \).
	At each iteration of the algorithm, we must do two things: (i)
	compute a non-trivial solution to a homogeneous equation and (ii)
	update the weights of the model fits.
	For (i), it is clear that any set of \( n+1 \) of the \( q_\bi \) vectors
	will be linearly dependent, so that we compute a non-trivial
	solution to the homogeneous equation using the SVD in an at most
	\( O(n^3) \) operations.
	For (ii), updating at most \( n \) of the \( \beta_{\bi} \)'s
	requires \( O(n) \) time.
	Since the algorithm runs at most \( l \) iterations, we obtain a final
	complexity of \( O(n^3l + nd) \), as claimed.
\end{proof}

\begin{definition}\label{def:group-dependence}
	We say that \( X \) satisfies \emph{group dependence} if for every choice of
	vectors \( z_{\bi} \in \R^{|\bi|} \) and partition
	\( \calD \cup \calI = \calB \), \( \calD \cap \calI = \emptyset \) such
	that
	\[
		g_{\calD} := \sum_{b_j \in \calD} X_{b_j} z_{b_j}
		\in \text{Span}\rbr{\cbr{\Xbi z_{\bi} : \bi \in \calI}},
	\]
	it holds that
	\[
		\text{Span}\rbr{\cbr{X_{b_j} z_{b_j} : b_j \in \calD}}
		\subseteq \text{Span}\rbr{\cbr{\Xbi z_{\bi} : \bi \in \calI}}.
	\]
	In other words, linear dependence of
	\( g_{\calD} \)  and \( \cbr{\Xbi z_{\bi} : \bi \in \calI} \) implies
	linear dependence of \( X_{b_j} z_{b_j} \) and
	\( \cbr{\Xbi z_{\bi} : \bi \in \calI} \) for every \( b_j \in \calD \).
\end{definition}

\begin{lemma}\label{lemma:pruning-span}
	If \( X \) satisfies group dependence, then each step of the
	\cref{alg:pruning-solutions} preserves the span of \( \cbr{\Xbi \wi^k } \).
	That is, only linearly dependent vectors are removed.
\end{lemma}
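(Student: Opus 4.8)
The plan is to analyze a single iteration of \cref{alg:pruning-solutions} and show that the blocks eliminated at that step already lie in the span of the blocks that survive; span preservation then follows because the surviving blocks are merely rescaled by nonzero factors. Fix an iteration $k$ and write $\calA = \act(w^k)$ for the current active set. The loop supplies coefficients $\beta \neq 0$ with $\sum_{\bi \in \calA} \beta_\bi \Xbi \wi^k = 0$, and (negating $\beta$ if necessary so that $\beta_{\bi^k} > 0$) the update rescales each active block by $1 - t^k \beta_\bi$ with $t^k = 1/|\beta_{\bi^k}|$. The blocks that are zeroed are exactly
\[
	\calD = \cbr{\bi \in \calA : 1 - t^k \beta_\bi = 0} = \cbr{\bi \in \calA : \beta_\bi = 1/t^k},
\]
which is nonempty since it contains $\bi^k$, and every block in $\calD$ has $\beta_\bi = 1/t^k \neq 0$.

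First I would isolate the deleted blocks in the dependence relation. Splitting the sum along $\calD$ and its complement $\calA \setminus \calD$ gives
\[
	\sum_{\bi \in \calD} \beta_\bi \Xbi \wi^k = - \sum_{\bi \in \calA \setminus \calD} \beta_\bi \Xbi \wi^k,
\]
so the left-hand side lies in $\Span(\cbr{\Xbi \wi^k : \bi \in \calA \setminus \calD})$. If only $\bi^k$ were deleted, this display would already place the single removed vector in the span of the survivors; the difficulty, and the reason group dependence is needed, is that ties in $|\beta_\bi|$ can force $\calD$ to contain several blocks at once, and the display only controls their aggregate.

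The key step is to upgrade this statement about the sum to a statement about each individual deleted block via group dependence (\cref{def:group-dependence}). I would set $z_\bi = \beta_\bi \wi^k$ for every $\bi \in \calB$ (so $z_\bi = 0$ on inactive blocks), take the partition $\calB = \calD \cup \calI$ with $\calI = \calB \setminus \calD$, and observe that $g_\calD = \sum_{\bi \in \calD} \Xbi z_\bi \in \Span(\cbr{\Xbi z_\bi : \bi \in \calI})$ is precisely the displayed identity. Group dependence then yields $\Span(\cbr{\Xbi z_\bi : \bi \in \calD}) \subseteq \Span(\cbr{\Xbi z_\bi : \bi \in \calI})$. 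Since $\beta_\bi \neq 0$ on $\calD$, the left span equals $\Span(\cbr{\Xbi \wi^k : \bi \in \calD})$; and the right span is contained in $\Span(\cbr{\Xbi \wi^k : \bi \in \calA \setminus \calD})$, because inactive blocks contribute the zero vector and rescaling can only shrink a span. Combining gives the inclusion $\Span(\cbr{\Xbi \wi^k : \bi \in \calD}) \subseteq \Span(\cbr{\Xbi \wi^k : \bi \in \calA \setminus \calD})$.

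Finally I would conclude span preservation. The inclusion just obtained shows $\Span(\cbr{\Xbi \wi^k : \bi \in \calA}) = \Span(\cbr{\Xbi \wi^k : \bi \in \calA \setminus \calD})$, while the update gives $\Xbi \wi^{k+1} = (1 - t^k \beta_\bi) \Xbi \wi^k$, which vanishes for $\bi \in \calD$ and is a nonzero multiple of $\Xbi \wi^k$ for $\bi \in \calA \setminus \calD$; hence $\Span(\cbr{\Xbi \wi^{k+1}}) = \Span(\cbr{\Xbi \wi^k : \bi \in \calA \setminus \calD})$ as well, and the two spans coincide. This also proves the final assertion that only linearly dependent vectors are removed, since each vector in $\calD$ lies in the span of the survivors. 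The main obstacle is precisely the possibility of simultaneous deletions under ties in $\beta$, which is the exact situation that group dependence is designed to control.
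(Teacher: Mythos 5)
Your proof is correct and follows essentially the same route as the paper's: both isolate the simultaneously deleted blocks $\calD$, observe that their aggregate $\sum_{\bi \in \calD} \beta_\bi \Xbi \wi^k$ lies in the span of the survivors, and then invoke group dependence on the partition $(\calD, \calB \setminus \calD)$ to place each deleted vector individually in that span, from which span preservation follows since surviving blocks are only rescaled by nonzero factors. Your explicit handling of the sign normalization of $\beta$ and of the zero-padding on inactive blocks is slightly more careful than the paper's write-up, but the argument is the same.
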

\begin{proof}
	Let \( \calV = \Span\cbr{\Xbi \wi } \) be the span of the initial solution.
	Since \( w^0 = w \) by definition, the base case holds trivially.

	Now, suppose \( \Span(\Xbi \wi^{k}) = \calV \) and let
	\( \calD^k = \act(w^{k}) \setminus \act(w^{k+1}) \).
	For every \( b_j \in \calD^k \), it holds that \( \beta_{b_j} = \beta_{\bi^k} \),
	meaning
	\[
		\sum_{j \in \calD^k} X_{b_j} w_{b_j}^k \in \text{Span}\rbr{\cbr{\Xbi \wi^k : \bi \in \act(w^{k+1})}}.
	\]
	Group dependence now implies for every \( b_j \in \calD^k \) there exists \( \zeta^j \)
	such that
	\[
		X_{b_j} w_{b_j}^k = \sum_{\bi \in \act(w^k)} \zeta^j_{\bi} \Xbi \wi^k.
	\]
	Let \( v \in \calV \) and observe that
	\begin{align*}
		v
		 & = \sum_{\bi \in \act(w^k)} \alpha_\bi \Xbi \wi^k                  \\
		 & = \sum_{\bi \in \act(w^{k+1})} \alpha_\bi \Xbi \wi^k
		+ \sum_{b_j \in \calD^k} \alpha_{b_j} X_{b_j} w_{b_j}^{k}            \\
		 & = \sum_{\bi \in \act(w^{k+1})} \alpha_\bi \Xbi \wi^k
		+ \sum_{b_j \in \calD^k}
		\sum_{\bi \in \act(w^{k+1})} \alpha_{b_j} \zeta_{b_i}^j \Xbi \wi^{k} \\
		 & = \sum_{\bi \in \act(w^{k+1})}
		\rbr{\alpha_\bi + \sbr{\sum_{b_j \in \calD^k} \alpha_{b_j} \zeta_{b_i}^j}}
		\Xbi \wi^{k}                                                         \\
		 & = \sum_{\bi \in \act(w^{k+1})}
		\rbr{\frac{\alpha_\bi + \sbr{\sum_{b_j \in \calD^k}
					\alpha_{b_j} \zeta_{b_i}^j}}{1 - t^k \beta_\bi}}
		\Xbi \wi^{k+1},
	\end{align*}
	Thus, \( \Span(\Xbi \wi^{k+1}) = \calV  \).
	Arguing by induction completes the proof.
\end{proof}

\begin{lemma}\label{lemma:pruning-paths-equiv}
	Let \( \calX = \cbr{x_1, \ldots, x_k} \) be a set of linearly dependent vectors.
	Every linearly independent subset of \( \calX \) obtained by iteratively removing
	linearly dependent vectors has the same cardinality.
\end{lemma}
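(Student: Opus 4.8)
The plan is to reduce the statement to the well-definedness of the dimension of $\Span(\calX)$. The key observation is that every removal step preserves the span of the surviving vectors, so that any maximal independent subset reached by the process is in fact a basis of $\Span(\calX)$ and therefore has cardinality $\dim(\Span(\calX))$, independent of the order in which vectors are discarded.

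First I would pin down what ``removing a linearly dependent vector'' means: at each step we have a current set $\calS$ and we delete some $x_i \in \calS$ that participates with a nonzero coefficient in a linear dependence among the elements of $\calS$, i.e. there are scalars $\beta_j$, not all zero, with $\sum_{x_j \in \calS} \beta_j x_j = 0$ and $\beta_i \neq 0$. Solving for $x_i$ gives $x_i = -\beta_i^{-1} \sum_{x_j \in \calS \setminus \{x_i\}} \beta_j x_j$, so that $x_i \in \Span(\calS \setminus \{x_i\})$. Consequently $\Span(\calS \setminus \{x_i\}) = \Span(\calS)$: the deleted vector was redundant and the span is unchanged.

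Applying this invariant inductively along the whole sequence of deletions shows that the span of the remaining vectors equals $\Span(\calX)$ at every step. The process terminates precisely when no further dependence exists, i.e. when the surviving set is linearly independent. At that point the surviving set is a linearly independent spanning set of $\Span(\calX)$, hence a basis. Since any two bases of a finite-dimensional vector space have the same number of elements, every independent subset produced in this way has cardinality exactly $\dim(\Span(\calX))$, which proves the claim.

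The only subtlety, and the step I would be most careful about, is the first one: verifying that the vector chosen for deletion genuinely lies in the span of the others, which is exactly what makes the span invariant hold. Everything after that is the standard fact that dimension is well defined. As an alternative route one could invoke the basis-exchange property of the linear matroid on $\calX$, for which all bases are equicardinal, but the direct span-preservation argument is self-contained and fits the inductive style used elsewhere in the pruning analysis.
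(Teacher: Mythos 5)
Your proof is correct and follows essentially the same route as the paper: both arguments rest on the observation that each deletion preserves the span of the surviving vectors, so any terminal independent set is a basis of $\Span(\calX)$ and hence has cardinality $\dim(\Span(\calX))$. Your write-up is in fact slightly more explicit about why the span is preserved at each step, which the paper's proof asserts without detail.
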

\begin{proof}
	Let \( \calY, \calY' \subset \calX \) be linearly independent subsets
	obtained by pruning linearly dependent vectors from \( \calX \) and assume
	\( |\calY'| < |\calY| \).
	Since \( \calY \) and \( \calY' \) are obtained by pruning only linearly
	dependent vectors, it must be that
	\[ \Span\rbr{\calY'} = \Span\rbr{\calY} = \Span\rbr{\calX}. \]
	Let \( c = \text{dim}\rbr{\Span\rbr{\calX}} \).
	Only a set of \( c \) linearly independent vectors can span
	\( \Span\rbr{\calX} \);
	thus, \( |\calY| = c \) must hold and \( \calY' \) cannot span \( \Span\rbr{\calX} \).
	This is a contradiction.
	We conclude
	\( |\calY'| = |\calY| \)
	as claimed.
\end{proof}

\begin{lemma}\label{lemma:maximal-sol}
	There exists a solution to CGL with support exactly \( \calS_\lambda \).
\end{lemma}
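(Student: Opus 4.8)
The plan is to build a single optimal point whose active set is all of \( \calS_\lambda \) by averaging a family of solutions, each witnessing one block. Two facts make this work: \( \solfn(\lambda) \) is convex, being the minimizer set of the convex objective \( F_\lambda \) over the convex feasible cone \( \{w : \Ki^\top \wi \leq 0\} \); and by \cref{prop:sol-fn-cgl} every solution decomposes blockwise as \( \wi = \alpha_\bi \vi \) with \( \alpha_\bi \geq 0 \), where the directions \( \vi \) are fixed across all solutions (they are determined by the unique residual and the min-norm dual parameter).

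First I would use that \( \calB \), hence \( \calS_\lambda \subseteq \calB \), is finite. Enumerating \( \calS_\lambda \), I invoke its definition to select, for each \( \bi \in \calS_\lambda \), a solution \( w^\bi \in \solfn(\lambda) \) with \( w^\bi_\bi \neq 0 \). I then form the average \( \bar w = |\calS_\lambda|^{-1} \sum_{\bi \in \calS_\lambda} w^\bi \), which lies in \( \solfn(\lambda) \) by convexity.

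It remains to compute \( \act(\bar w) \). For any \( b_j \in \calS_\lambda \), \cref{prop:sol-fn-cgl} gives \( w^\bi_{b_j} = \alpha^\bi_{b_j} v_{b_j} \) with \( \alpha^\bi_{b_j} \geq 0 \), so \( \bar w_{b_j} = \big(|\calS_\lambda|^{-1} \sum_{\bi} \alpha^\bi_{b_j}\big) v_{b_j} \). The bracketed coefficient is a nonnegative sum containing the strictly positive term \( \alpha^{b_j}_{b_j} > 0 \) (coming from the solution chosen to witness \( b_j \), since \( w^{b_j}_{b_j} \neq 0 \) and \( v_{b_j} \neq 0 \)), and \( v_{b_j} \neq 0 \) because \( b_j \in \equi \) forces \( \norm{v_{b_j}}_2 = \lambda > 0 \); hence \( \bar w_{b_j} \neq 0 \). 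Conversely, any \( b_j \notin \calS_\lambda \) satisfies \( w^\bi_{b_j} = 0 \) for every chosen solution by the definition of \( \calS_\lambda \), so \( \bar w_{b_j} = 0 \). Therefore \( \act(\bar w) = \calS_\lambda \).

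The step requiring the most care is ruling out cancellation: a priori, summing several solutions might annihilate a block that is active in each summand. This is exactly where the shared-direction structure \( \wi = \alpha_\bi \vi \) is essential --- since every solution places block \( b_j \) along the same fixed ray \( v_{b_j} \), the blockwise average is merely a nonnegative rescaling of \( v_{b_j} \) and cannot vanish once a single coefficient is positive. The remaining ingredients --- convexity of \( \solfn(\lambda) \) and finiteness of the enumeration --- are routine.
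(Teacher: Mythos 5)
Your proof is correct and follows essentially the same route as the paper's: pick a witness solution for each block in \( \calS_\lambda \), average them, and invoke convexity of \( \solfn(\lambda) \). You are in fact slightly more careful than the paper, which asserts without comment that the convex combination has full support; your use of the shared ray \( \wi = \alpha_\bi \vi \) from \cref{prop:sol-fn-cgl} to rule out cancellation makes that step explicit.
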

\begin{proof}
	By definition, there exists \( w \in \solfn(\lambda) \) such that
	\( \wi \neq 0 \) for every \( \bi \in \calS_\lambda \).
	Taking convex combination of these solutions yields \( w' \) with support
	exactly \( \calS_\lambda \).
	Since \( \solfn(\lambda) \) is convex, \( w' \) is also a solution.
	This completes the proof.
\end{proof}

\begin{lemma}\label{lemma:one-step-pruning}
	Let \( w \) be a minimal solution and \( \bar w \) be a solution with support
	\( \calS_\lambda \), which exists by \cref{lemma:maximal-sol}.
	Then, \( \bar w \) can be pruned in one step to obtain \( w \).
\end{lemma}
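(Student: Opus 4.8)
The plan is to leverage \cref{prop:sol-fn-cgl}, which forces every optimal solution to share the same block directions $\vi$ and to differ only through a vector of nonnegative scalars. I would first write $\wi = \alpha_\bi \vi$ and $\bar w_\bi = \bar\alpha_\bi \vi$ for every $\bi \in \calS_\lambda$, with $\alpha_\bi \geq 0$; since $\bar w$ is supported on all of $\calS_\lambda$, I also have $\bar\alpha_\bi > 0$ for each such block, while both $w$ and $\bar w$ vanish outside $\calS_\lambda$. This reduces the lemma to exhibiting a single pruning direction that rescales the strictly positive coefficients $\bar\alpha_\bi$ into the (possibly zero) coefficients $\alpha_\bi$.

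I would then take the explicit direction $\beta_\bi = 1 - \alpha_\bi/\bar\alpha_\bi$ on $\calS_\lambda$ (and $\beta_\bi = 0$ elsewhere, which is irrelevant since $\bar w_\bi = 0$ there); this is well defined precisely because $\bar\alpha_\bi > 0$. Two verifications remain. First, $\beta$ is an admissible pruning direction: $\sum_\bi \beta_\bi \Xbi \bar w_\bi = \sum_\bi (\bar\alpha_\bi - \alpha_\bi)\Xbi \vi = X\bar w - Xw$, which vanishes because both fits equal $\hat y$ by \cref{lemma:unique-fit-cgl}. Second, the multiplicative update of \cref{alg:pruning-solutions} with step $t = 1$ sends $\bar w_\bi$ to $\bar w_\bi(1 - \beta_\bi) = \bar w_\bi \cdot (\alpha_\bi/\bar\alpha_\bi) = \alpha_\bi \vi = \wi$ for every block, so the single step produces exactly $w$.

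Finally I would confirm that this is a legitimate pruning step. Because $\alpha_\bi \geq 0$ and $\bar\alpha_\bi > 0$, each factor $1 - t\beta_\bi$ stays nonnegative for $t \in [0,1]$, so the intermediate iterate $w(t)$ has block coefficients $(1 - t\beta_\bi)\bar\alpha_\bi \geq 0$ and fit $X w(t) = X \bar w - t \sum_\bi \beta_\bi \Xbi \bar w_\bi = \hat y$, placing the whole segment from $\bar w$ to $w$ inside the optimal set by \cref{prop:sol-fn-cgl}; the blocks driven to zero are exactly those with $\alpha_\bi = 0$, namely $\calS_\lambda \setminus \calA(w)$, and $\beta \neq 0$ unless $\bar w = w$ already. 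I expect the only delicate point to be emphasizing that a single update along one direction may annihilate many blocks simultaneously --- unlike the greedy argmax step, which removes one block per iteration --- and noting that minimality of $w$ guarantees, via \cref{prop:minimal-solutions}, that the endpoint of this step is the intended minimal solution rather than merely some smaller-support optimum.
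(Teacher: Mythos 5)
Your proposal is correct and follows essentially the same route as the paper's proof: both write $\wi = \alpha_\bi \vi$, $\bar w_\bi = \bar\alpha_\bi \vi$ via \cref{prop:sol-fn-cgl}, take the direction $\beta_\bi = 1 - \alpha_\bi/\bar\alpha_\bi$ (which equals $1$ on $\calS_\lambda \setminus \act(w)$, matching the paper's piecewise definition), and verify that the segment $t \mapsto (1 - t\beta_\bi)\bar w_\bi$ stays in the optimal set up to $t = 1$, where it lands exactly on $w$. The only cosmetic difference is that the paper first dispatches the trivial case where $\bar w$ is itself minimal, which your uniform formula absorbs automatically.
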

\begin{proof}
	Suppose \( \bar w \) is minimal.
	Then \( \cbr{\Xbi \bwi}_{\calS_\lambda} \) are linearly independent,
	which implies \( \cbr{\Xbi \vi}_{\calS_\lambda} \) are also linearly independent.
	We conclude \( \bar w \) is the unique solution to CGL by \cref{lemma:unique-cgl}
	and the claim holds trivially.

	Suppose \( \bar w \) is not minimal.
	Since \( \act(w) \subset \calS_\lambda \),
	\( \Span\cbr{\Xbi \wi} \subset \Span\cbr{\Xbi \bwi} \)
	so that every vector \( \Xbi \wi \) can be written as a linear combination
	of vectors in \( \cbr{\Xbi \bwi} \).
	Thus, we find that
	\begin{align*}
		\sum_{\bi \in \act(\bar w)} \Xbi \bwi
		 & = \hat y
		= \sum_{\bi \in \act(w)} \Xbi \wi \\
		\implies
		\sum_{\bi \in \calS_\lambda \setminus \act(w)} \Xbi \bwi +
		\sum_{\bi \in \act(w)} \beta_{\bi} \Xbi \bwi
		 & = 0,
	\end{align*}
	where \( \beta_{\bi} = 1 - \frac{\alpha_\bi}{\bar \alpha_\bi} \) for
	\( \wi = \alpha_\bi \vi \),
	\( \bwi = \bar \alpha_\bi \vi \).
	Thus, \( \cbr{\Xbi \bwi} \) are linearly dependent and it is possible
	to prune the solution.

	Now we show that we can, in fact, prune all vectors in \( \calS_\lambda \setminus \act(w) \)
	in one pruning step.
	First, observe that \( \frac{\alpha_\bi}{\bar \alpha_\bi} > 0 \) so that \( \beta_\bi < 1 \)
	for every \( \bi \in \act(w) \).
	Following the proof of \cref{prop:pruning-correctness},
	define
	\[
		\wi^t = (1 - t \beta_\bi) \bwi = (1 - t \beta_\bi) \bar \alpha_\bi \vi,
	\]
	for \( 0 < t < 1 \).
	Since \( \beta_\bi < 1 \) for every \( \bi \in \calS_\lambda \),
	it is straightforward to deduce that \( \wi^t \) is optimal by
	\cref{prop:sol-fn-cgl}.
	Arguing as in \cref{prop:pruning-correctness}, we can show
	\( w^1 \) is also optimal.
	It remains only to notice that \( \wi^1 = 0 \) for \( \bi \in \calS_\lambda \setminus \act(w) \)
	and \( \wi^1 = \wi \) for \( \bi \in \act(w) \).
	Thus, the pruning algorithm can move from \( \bar w \) to \( w \) in one step.
\end{proof}

\minimalSolCharacterization*
\begin{proof}
	Suppose \( w \) and \( w' \) are two minimal solutions.
	Both can be obtained by pruning the maximal solution with support \( \calS_\lambda \) (\cref{lemma:one-step-pruning}).
	Thus, \( w \) and \( w' \) both span \( \Span(\cbr{\Xbi \wi}) \) by \cref{lemma:pruning-span}.
	\cref{lemma:pruning-paths-equiv} now implies \( w \) and \( w' \)
	have the same number of active blocks.
	This number must be \( c \), otherwise there would be a linearly dependent vector
	in \( \cbr{\Xbi \wi}_{\act\rbr{w}} \) and \( w \) would not be minimal.
	This completes the proof.
\end{proof}

\subsection{Continuity of the Solution Path}


\begin{restatable}{lemma}{boundedSolutionsCGL}\label{lemma:bounded-solutions-cgl}
	Every solution to the constrained group lasso problem
	is bounded by an absolute constant independent of \( \lambda \).
	Specifically, every \( w \in \solfn(\lambda) \) satisfies
	\[
		\sum_{\bi \in \calB} \norm{\wi}_2 \leq \sum_{\bi \in \calB} \norm{\bar w_\bi}_2.
	\]
	where \( \bar w \in \solfn(0) \) is the least-squares solution with minimum \( \ell_2 \)-norm.
\end{restatable}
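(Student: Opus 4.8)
The plan is to exploit the optimality of $w$ for the regularized objective $F_\lambda$ together with the optimality of the comparator $\bar w$ for the \emph{unregularized} (least-squares) problem, and to play these two facts against one another so that the squared-error terms cancel and only the group-norm penalties survive. Throughout I take $\lambda > 0$; the division by $\lambda$ in the final step is exactly what makes the resulting bound uniform in $\lambda$, and this restriction is consistent with the fact (noted after \cref{prop:map-continuity-cgl}) that $\solfn(0)$ itself may be unbounded.

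First I would fix $w \in \solfn(\lambda)$ and check that $\bar w$ is well-defined. The feasible set $\cbr{w : \Ki^\top \wi \le 0 \ \forall \, \bi \in \calB}$ is a nonempty polyhedral cone (it contains the origin) and $\half \norm{Xw - y}_2^2$ is bounded below by $0$ on it, so the constrained least-squares problem $\solfn(0)$ attains its minimum since a convex quadratic bounded below over a polyhedron attains its infimum. The solution set is closed and convex, hence its minimum-$\ell_2$-norm element $\bar w$ (the projection of the origin onto this set) is unique and well-defined.

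Next I would write down the two inequalities. Since $w$ minimizes $F_\lambda$ and $\bar w$ is feasible, optimality gives
\[
	\half \norm{Xw - y}_2^2 + \lambda \sum_{\bi \in \calB} \norm{\wi}_2
	\le
	\half \norm{X\bar w - y}_2^2 + \lambda \sum_{\bi \in \calB} \norm{\bwi}_2 .
\]
Because $\bar w \in \solfn(0)$ minimizes the squared error over the feasible cone and $w$ is feasible, we also have $\half \norm{X\bar w - y}_2^2 \le \half \norm{Xw - y}_2^2$. Replacing the term $\half \norm{X\bar w - y}_2^2$ on the right-hand side above by the larger quantity $\half \norm{Xw - y}_2^2$ only enlarges the bound, so the inequality still holds; now the squared-error terms cancel and we are left with $\lambda \sum_{\bi \in \calB} \norm{\wi}_2 \le \lambda \sum_{\bi \in \calB} \norm{\bwi}_2$. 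Dividing by $\lambda > 0$ yields the claim, and finally $\norm{w}_2 = \big(\sum_{\bi \in \calB} \norm{\wi}_2^2\big)^{1/2} \le \sum_{\bi \in \calB} \norm{\wi}_2 \le \sum_{\bi \in \calB} \norm{\bwi}_2$, so the full solution vector is bounded by the same constant.

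There is no genuine analytic obstacle here; the two things to be careful about are (i) restricting to $\lambda > 0$ so the concluding division is legitimate, and (ii) the existence of the comparison point $\bar w$. The real content of the argument is conceptual rather than computational: choosing the $\lambda = 0$ solution (and not, say, the origin) as the comparator is precisely what produces a constant independent of $\lambda$, which is what the subsequent continuity arguments for $\solfn$ require.
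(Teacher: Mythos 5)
Your proof is correct and follows essentially the same route as the paper's: both arguments play the optimality of $w$ for $F_\lambda$ against the optimality of a $\lambda=0$ comparator so that the squared-error terms cancel, leaving $\lambda\sum_{\bi}\norm{\wi}_2 \le \lambda\sum_{\bi}\norm{\bwi}_2$. The only cosmetic differences are that the paper phrases the comparison as a contradiction and routes it through the minimum-group-norm least-squares solution $w_g$ before observing $h(w_g)\le h(\bar w)$, whereas you compare with $\bar w$ directly and add a (welcome) justification that $\bar w$ exists.
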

\begin{proof}
	Let \( h(w) = \sum_{\bi \in \calB} \norm{\wi}_2 \),
	and define \( W_{g} \) to be the set of least squares solutions with
	minimum group norm,
	\[
		W_{g} = \argmin \cbr{h(w) : w \in \solfn(0) }.
	\]
	Let \( w_g \in W_g \),
	and suppose that \( h(w_{g}) < h(w(\lambda)) \)
	for some \( \lambda > 0 \), \( w \in \solfn(\lambda) \).
	Since
	\[
		\half \norm{X w_g - y}_2^2
		\leq \half \norm{X w(\lambda) - y}_2^2
	\]
	we deduce
	\[
		\half \norm{X w_g - y}_2^2 + \lambda h(w_g) <
		\half \norm{X \w(\lambda) - y}_2^2
		+ \lambda h(\w(\lambda)),
	\]
	which is a contradiction.
	So \( h(\w(\lambda)) \leq h(w_g) \) for all \( \lambda > 0 \).
	Observing \( h(w_g) \leq h(\bar w) \) since \( \bar w \) may not be in
	\( W_{g} \) gives the result.
	Since \( h(\bar w) \) is independent of \( \lambda \),
	we conclude that \( \solfn(\lambda) \) is bounded independent
	of \( \lambda \).
\end{proof}

\valueContinuityCGL*
\begin{proof}
	Define the joint objective function
	\[
		f(w, \lambda) = \half \norm{X w - y}_2^2 + \lambda \sum_{b_i \in \calB} \norm{\wi}_2.
	\]
	Clearly \( f(w, \lambda) \) is jointly continuous in \( w \) and \( \lambda \).
	By \cref{lemma:bounded-solutions-cgl}, minimization of
	\( f(w, \lambda) \) subject to \( \Ki^\top \wi \leq 0 \)
	is equivalent to the constrained minimization problem,
	\[
		p^*(\lambda) = \min_{w} f(w, \lambda) \quad \text{s.t.} \; \sum_{\bi \in \calB} \norm{\wi}_2 \leq C, \, \Ki^\top \wi \leq 0,
	\]
	where \( C \) is a finite absolute constant.
	Note that this expression is also valid when \( \lambda = 0 \) as the
	min-norm solution to the unregularized least squares problem
	obeys the constraint.

	Thus, \( \solfn \) is a continuous optimization problem over a continuous
	(constant in this case)
	compact constraint set and the classical result of \citet{berge1997topological}
	(see also \citet{hogan1973point}[Theorem 7]) implies
	\( p^* \) is continuous.
\end{proof}

\mapContinuityCGL*
\begin{proof}
	Joint continuity of the objective
	\[
		f(w, \lambda) = \half \norm{X w - y}_2^2 + \lambda \sum_{b_i \in \calB} \norm{\wi}_2,
	\]
	combined with continuity of the (constant) constraint allows us to use
	\citet[Theorem 1]{robinson1974sufficient} to obtain that
	that \( \solfn \) is upper semi-continuous.
	Since \( \solfn \) is convex and bounded, it is compact.
	It is thus also uniformly compact and upper semi-continuity is
	equivalent to closedness \citep{hogan1973point}[Theorem 3].
	We conclude that \( \solfn \) is closed as claimed.

	If \( X \) is full column rank, then the constrained group
	lasso solution is unique for all \( \lambda \geq 0 \).
	The solution map is a singleton on \( \R^+ \), and closedness
	and openness are equivalent properties for singleton maps.
	Since we have already shown it is closed, the solution map
	must also be open.
	An identical argument shows that the solution map is open
	at on any interval over which the solution is unique.

	Now we show the reverse implication by proving the contrapositive.
	Assume \( X \) is not full column-rank and suppose \( \Ki = 0 \)
	for each \( \bi \in \calB \).
	The solution map at \( \lambda = 0 \) is the solution set to
	the least squares problem,
	\[
		\min_w \half \norm{X w - y}_2^2,
	\]
	which is known to be \( \solfn(0) = \cbr{\wmin(0) + z : z \in \Null(X)} \).
	While \( \solfn(0) \) is unbounded, it holds that
	\( \solfn(\lambda) \subset C \) for some bounded \( C \) for
	every \( \lambda > 0 \) (\cref{lemma:bounded-solutions-cgl}).
	As a result, there exist uncountably many solutions in \( \solfn(0) \)
	which are not limit points of solutions in \( \solfn(\lambda_k) \)
	as \( \lambda_k \rightarrow 0 \).
	In other words, \( \solfn(0) \) is not open at \( 0 \).
\end{proof}

\modelFitContinuity*
\begin{proof}
	Consider the dual problem from \cref{lemma:lagrange-dual},
	\begin{equation*}
		\begin{aligned}
			\max_{\eta} \;
			 & - \half (\eta - X^\top y) (X^\top X)^+ (\eta - X^\top y) + \half \norm{y}_2^2                                  \\
			 & \quad \quad \text{s.t.} \quad \eta \in \Row(X), \; \norm{\eta_\bi}_2 \leq \lambda \; \forall \; \bi \in \calB.
		\end{aligned}
	\end{equation*}
	The objective function is a convex quadratic and continuous in \( \eta \).
	The constraint set is
	\begin{align*}
		\calC(\lambda)
		 & = \cbr{ \eta : \eta \in \Row(X), : \norm{\eta_\bi}_2 \leq \lambda \; \forall \; \bi \in \calB}.
	\end{align*}
	Let's show that \( \calC \) is continuous.

	Let \( \lambda_k \geq 0 \), \( \lambda_k \into \bar \lambda \)
	and \( \eta_k \in \calC(\lambda_k) \) such that \( \eta_k \into \bar \eta \).
	Since \( \eta_k \in \Row(X) \), \( \bar \eta \in \Row(X) \).
	Moreover,
	\[
		\norm{[\eta_k]_\bi}_2 \leq \lambda_k,
	\]
	so that taking
	limits on both sides implies \( \norm{\bar \eta_k}_2 \leq \bar \lambda \).
	Thus, \( \bar \eta \in \calC(\bar \lambda) \), showing that \( \calC \)
	is closed.

	\citet[Theorem 12]{hogan1973point} states that \( \calC(\lambda) \)
	is open at \( \bar \lambda \) if for each \( \bi \in \calB \), \( g_\bi(\lambda, \eta) = \norm{\eta_\bi}_2 - \lambda \)
	is continuous on \( \bar \lambda \times \calC(\lambda) \), convex in
	\( \eta \), and for fixed \( \lambda \), and there exists \( \bar \eta \)
	such that such that \( g(\lambda, \bar \eta) < 0  \).

	Let us check these conditions.
	First, observe that \( g_\bi \) is continuous and convex in \( \eta \) for
	any choice of \( \lambda \).
	Taking \( \bar \eta = 0 \), we find
	\[
		g_\bi(\lambda, 0) = -\lambda < 0,
	\]
	as long as \( \lambda > 0 \).
	Thus, \( \calC(\lambda) \) is open at each \( \lambda > 0 \).

	At \( \lambda = 0 \), \( \calC(\lambda) = \cbr{0} \).
	Since \( \calC \) is closed everywhere, we conclude it is open at
	\( \lambda = 0 \).
	Putting these results together proves that \( \calC \) is continuous.

	Recall that the dual solution satisfies \( \eta^*_\bi = \ci \),
	so that it is always unique.
	Combining this fact with \citet[Theorem 1]{robinson1974sufficient}
	implies that \( \lambda \mapsto \eta^*(\lambda) \) is a continuous
	function.
	Since we have
	\[
		\eta^*(\lambda) = c(\lambda) = X^\top (y - X w^*(\lambda)),
	\]
	it must be that the model fit \( \hat y(\lambda) = X w^*(\lambda) \)
	is continuous as well (any discontinuities must be in \( \Null(X) \),
	but \( \hat y \) is orthogonal to \( \Null(X) \)).

	Finally, using \cref{prop:value-continuity-cgl}, we have
	\[
		p^*(\lambda) - \half \norm{\hat y(\lambda) - y}_2^2 = \lambda \sum_{\bi \in \calB} \norm{\wi}_2,
	\]
	is a sum of continuous functions and thus continuous.
	Writing
	\[
		g(\lambda) = \sum_{\bi \in \calB} \norm{\wi}_2 = \sbr{\lambda \sum_{\bi \in \calB} \norm{\wi}_2} \sbr{\frac{1}{\lambda}},
	\]
	as the product of continuous functions shows
	\( g(\lambda) \) is continuous at every \( \lambda > 0 \).
\end{proof}

\subsection{The Min-Norm Path}


\begin{proposition}\label{prop:min-norm-interp}
	Consider the min group-norm interpolation problem,
	\[
		\min_{w} \sum_{\bi \in \calB} \norm{\wi}_2 \quad \text{s.t.} \, \, X w = y.
	\]
	There exist \( X, y \) such that
	the minimum-group-norm solution to this problem
	is unique and not in \( \Row(\Xas) \).
\end{proposition}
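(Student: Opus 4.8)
The plan is to prove this by an explicit counterexample, and the mechanism I would exploit comes straight from the optimality conditions. At any optimum of the group-norm interpolation problem, stationarity forces $\wi = \norm{\wi}_2\,\Xbi^\top \nu$ for a \emph{single} dual vector $\nu$ shared across all active blocks. Consequently $w_\acts \in \Row(\Xas)$ would require one vector $\mu$ with $\Xbi^\top \mu = \wi$ for every active $\bi$ simultaneously; when two active blocks have \emph{different} norms $\norm{\wi}_2$ but enough column span, no such $\mu$ exists. So the whole game is to arrange unequal block norms at a \emph{unique} optimum, since the symmetric (equal-norm) case always lands back in the row space.

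Concretely, I would take $n = 3$, $d = 4$, and the partition $\calB = \{b_1, b_2\}$ with $b_1 = \{1,2\}$, $b_2 = \{3,4\}$, setting
\[
  X = \begin{pmatrix} 1 & 0 & 0 & 0 \\ 0 & 1 & 1 & 0 \\ 0 & 0 & 0 & 1 \end{pmatrix},
  \qquad y = (1,\,3,\,2)^\top.
\]
First I would note that feasibility pins down the ``private'' coordinates: the first coordinate of $y$ can only be supplied by column $1$ (block $b_1$) and the third only by column $4$ (block $b_2$), so every feasible $w$ satisfies $w_1 = 1$ and $w_4 = 2$, while the shared second coordinate forces $w_2 + w_3 = 3$. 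Since $y_1, y_3 \neq 0$, both blocks are active at \emph{every} feasible point, so $\acts = \{b_1, b_2\}$ and $\Xas = X$.

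Next I would reduce the problem to a one-dimensional strictly convex minimization. Substituting the forced values leaves the single free variable $b = w_2$ (with $w_3 = 3 - b$), and the objective becomes $f(b) = \sqrt{1 + b^2} + \sqrt{(3-b)^2 + 4}$. Each summand has positive second derivative ($(1+b^2)^{-3/2}>0$, and likewise $4\,((3-b)^2+4)^{-3/2}>0$ for the second), so $f$ is strictly convex with a unique minimizer; the stationarity equation $\tfrac{b}{\sqrt{1+b^2}} = \tfrac{3-b}{\sqrt{(3-b)^2+4}}$ is solved at $b = 1$, which I would confirm by direct substitution (both sides equal $1/\sqrt2$). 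Hence the unique min-group-norm interpolant is $w^* = (1,1,2,2)^\top$, with deliberately unequal block norms $\norm{w_{b_1}}_2 = \sqrt2$ and $\norm{w_{b_2}}_2 = 2\sqrt2$.

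Finally I would certify $w^* \notin \Row(\Xas)$. Since $\Xas = X$ has full row rank $3$, its null space is one-dimensional; solving $X\xi = 0$ gives $\Null(\Xas) = \Span\{(0,1,-1,0)^\top\}$, and $\Row(\Xas) = \Null(\Xas)^\perp$ is exactly the set of vectors with equal second and third entries. Because $\abr{w^*, (0,1,-1,0)^\top} = w_2^* - w_3^* = 1 - 2 = -1 \neq 0$, we get $w^* \notin \Row(\Xas)$, finishing the construction. The only genuine obstacle is the tension resolved in the third step: the reduction must deliver uniqueness (from strict convexity, guaranteed by the two nonzero private coordinates) \emph{and} unequal block norms (from choosing $y_1 = 1 \neq 2 = y_3$ with nonzero shared mass $y_2 = 3$). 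Were either private coordinate zero, or the two equal, the optimal split would be symmetric with $w_2 = w_3$ and $w^*$ would fall back into $\Row(\Xas)$; the chosen values are precisely what push the stationary point off the row space.
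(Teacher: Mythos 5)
Your proof is correct. It follows the same overall strategy as the paper --- an explicit counterexample in which the feasible set is a one-dimensional affine line, strict convexity of the group norm along that line gives uniqueness, and the unique minimizer is shown to have a nonzero component in \( \Null(\Xas) \) --- but the example and the verification route differ in a useful way. The paper uses a \( 2 \times 3 \) matrix with blocks \( \{1,2\} \) and \( \{3\} \), computes the interpolant lying in \( \Row(X) \), and then shows that perturbing along the null-space direction strictly decreases the group norm for \( \gamma \) in a small interval, so the (unique, by strict convexity in \( \gamma \)) optimum cannot lie in the row space; the exact minimizer is never computed. You instead design \( X \) so that two ``private'' coordinates of \( y \) pin down \( w_1 = 1 \) and \( w_4 = 2 \) (guaranteeing both blocks are active at every feasible point, hence \( \Xas = X \)), reduce to the scalar problem \( f(b) = \sqrt{1+b^2} + \sqrt{(3-b)^2+4} \), solve the stationarity equation in closed form at \( b = 1 \), and then check directly that \( w^* = (1,1,2,2)^\top \) fails the single linear condition \( w_2 = w_3 \) characterizing \( \Row(X) \). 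This buys a fully explicit optimum and a one-line row-space check, at the cost of a slightly larger example; the paper's version requires less design of \( X \) but more computation to rule out the row-space candidate. Your opening heuristic about stationarity forcing \( \wi = \norm{\wi}_2 \Xbi^\top \nu \) is only motivation and is not needed for the argument, which stands on the explicit computations alone.
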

\begin{proof}
	We provide a counter-example where the solution is not the row space of
	the active set.
	Choose
	\[
		X =
		\left[
			\begin{array}{cc|c}
				1 & 2 & 0 \\
				1 & 0 & 2 \\
			\end{array}
			\right],
		\quad
		y = \begin{bmatrix}
			1 \\
			1
		\end{bmatrix},
	\]
	where the vertical line indicates the block structure, i.e.,
	\( b_1 = \cbr{1, 2} \) and \( b_2 = \cbr{3} \).

	Clearly a solution using only \( b_2 \) cannot interpolate the data,
	so the active set must be \( \cbr{b_1, b_2} \)  or \( \cbr{b_1} \).
	If the active set is \( b_1 \), then the minimum norm interpolating solution
	can only be \( w = [1 \; 0 \; 0]^\top \), which has group norm \( 1 \).

	Now, consider when the active set is \( \cbr{b_1, b_2} \).
	The interpolating solution in \( \Row(X) \) satisfies the following
	system
	\begin{align*}
		\left[
			\begin{array}{cc|c}
				1 & 2 & 0 \\
				1 & 0 & 2 \\
			\end{array}
			\right]
		\rbr{
			\alpha
			\begin{bmatrix}
				1 \\ 2\\ 0
			\end{bmatrix}
			+
			\beta
			\begin{bmatrix}
				1 \\ 0\\ 2
			\end{bmatrix}
		}
		 & = \begin{bmatrix}
			     1 \\
			     1
		     \end{bmatrix} \\
		\implies
		\begin{bmatrix}
			5 \alpha + \beta \\
			\alpha + 5 \beta
		\end{bmatrix}
		 & =
		\begin{bmatrix}
			1 \\
			1
		\end{bmatrix}.
	\end{align*}
	Solving for \( \alpha \) and \( \beta \) yields
	\( \alpha = 1 - 5 \beta \) and \( 24 \beta = 4 \),
	which implies \( \beta = 1/6 \) and \( \alpha = 1/6 \).
	The optimal \( w^* \) is thus
	\[
		w^* =
		\begin{bmatrix}
			1/3 \\
			1/3 \\
			1/3
		\end{bmatrix},
	\]
	and the group norm of \( w^* \) is
	\[
		\sum_{\bi \in \calB} \norm{\wi^*}_2 = \sqrt{2/9} + 1/3  = (1 + \sqrt{2}) / 3 < 1.
	\]
    Thus, we can discount the interpolating solution using only \( b_1 \).

	Now let's see if we can reduce the norm by including directions in \( \Null(X) \).
	The Null space is orthogonal to both rows of \( X \), from which we conclude
	\[
		\Null(X) =
		\cbr{
			\gamma z : z =
			\begin{bmatrix}
				-2/3 \\
				1/3  \\
				1/3
			\end{bmatrix},
			\gamma \in \R
		}.
	\]
	Any vector \( w' = w^* + \gamma z \) is an interpolating solution,
	so it only remains to check if there is a choice of \( \gamma \)
	that decreases the group norm.
	Assuming \( \gamma > -1 \),
	\begin{align*}
		\sum_{\bi \in \calB} \norm{\wi^*}_2
		 & \leq \sum_{\bi \in \calB} \norm{\wi'}_2                                                                             \\
		\iff \frac{1 + \sqrt{2}}{3}
		 & = \sqrt{(\frac{1}{3} - \frac{2 \gamma}{3})^2 + (\frac{1}{3} + \frac{\gamma}{3})^2}
		+ \abs{\frac{1 + \gamma}{3}}                                                                                           \\
		\iff \frac{1 + \sqrt{2}}{3} - \frac{1 + \gamma}{3}
		 & \leq \sqrt{(\frac{1}{3} - \frac{2 \gamma}{3})^2 + (\frac{1}{3} + \frac{\gamma}{3})^2} \tag{since \( \gamma > -1 \)} \\
		\iff \rbr{\frac{1 + \sqrt{2}}{3} - \frac{1 + \gamma}{3}}^2
		 & \leq (\frac{1}{3} - \frac{2 \gamma}{3})^2 + (\frac{1}{3} + \frac{\gamma}{3})^2                                      \\
		\iff \rbr{\sqrt{2} - \gamma}^2
		 & \leq (1 - 2 \gamma)^2 + (1 + \gamma)^2
	\end{align*}
	The left-hand side satisfies
	\[
		\rbr{\sqrt{2} - \gamma}^2 = 2 - 2\sqrt{2}\gamma + \gamma^2,
	\]
	while the right-hand side is
	\[
		(1 - 2 \gamma)^2 + (1 + \gamma)^2 = 1 - 4 \gamma + 4 \gamma^2 + 1 + 2 \gamma + \gamma^2
		=  2 - 2 \gamma + 5 \gamma^2.
	\]
	As a result,
	\begin{align*}
		\sum_{\bi \in \calB} \norm{\wi^*}_2
		                                                                  & \leq \sum_{\bi \in \calB} \norm{\wi'}_2 \\
		\iff 2 - 2 \gamma + 5 \gamma^2 - 2 - \gamma^2 + 2 \sqrt{2} \gamma & \geq 0                                  \\
		\iff 2 (\sqrt{2} - 1)\gamma + 4 \gamma^2                          & \geq 0.
	\end{align*}
	However, it is easy to check that this fails for \( \gamma \in ((1 - \sqrt{2})/2, 0) \).
	So the minimum-group-norm interpolating solution is not in \( \Row(\Xas) \).

    Now let's show that the minimum-group-norm solution is unique.
    This holds if the group norm as a function of \( \gamma \),
    \[
    g(\gamma) = \sqrt{(\frac{1}{3} - \frac{2 \gamma}{3})^2 + (\frac{1}{3} + \frac{\gamma}{3})^2}
		+ \abs{\frac{1 + \gamma}{3}},
    \]
    is strictly convex. 
    The function \( h_2(\gamma) = \abs{\frac{1 + \gamma}{3}} \) is convex,
    so we need only show that 
    \[
        h_1(\gamma) = \sqrt{(\frac{1}{3} - \frac{2 \gamma}{3})^2 + (\frac{1}{3} + \frac{\gamma}{3})^2}
        = \frac{1}{3} \sqrt{2 - 2\gamma + 5 \gamma^2},
    \]
    is strictly convex.
    Taking the second derivative, we find
    \[
        h_1''(\gamma) = \frac{3}{\rbr{2 - 2\gamma + 5 \gamma^2}^{3/2}} > 0
    \]
    for all \( \gamma \). 
    This completes the proof.
\end{proof}

\begin{lemma}\label{lemma:lambda-convergence}
	Let \( W_{g} \) be the set of least squares solutions with minimum group norm.
	That is,
	\[
		W_{g} = \argmin_{w} \cbr{\sum_{\bi \in \calB} \norm{\wi}_2 : X^\top X w = X^\top y }.
	\]
	Then every limit point of the min-norm group lasso solution
	as \( \lambda \rightarrow 0 \) lies in \( W_g \).
\end{lemma}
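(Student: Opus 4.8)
The plan is to run the standard regularization-path argument: along any sequence $\lambda_k \into 0$ the min-norm solutions $\wmin(\lambda_k)$ have limit points, and any such limit point both solves the normal equations and minimizes the group penalty $h(w) := \sum_{\bi \in \calB} \norm{\wi}_2$ among all solutions of the normal equations. By \cref{lemma:bounded-solutions-cgl} the family $\{\wmin(\lambda)\}_{\lambda > 0}$ is bounded by a constant independent of $\lambda$, so Bolzano--Weierstrass guarantees that limit points exist; fix a sequence $\lambda_k \into 0$ with $\wmin(\lambda_k) \into \bar w$. It then suffices to prove $X^\top X \bar w = X^\top y$ and $h(\bar w) \le h(w_g)$ for some (hence every) $w_g \in W_g$.

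First I would show that $\bar w$ solves the normal equations. Fix any $w_g \in W_g$; since $w_g$ attains the least-squares optimum and is feasible, optimality of $\wmin(\lambda_k)$ for $F_{\lambda_k}$ gives
\[
\half \norm{X \wmin(\lambda_k) - y}_2^2 \le F_{\lambda_k}(\wmin(\lambda_k)) \le F_{\lambda_k}(w_g) = \half \norm{X w_g - y}_2^2 + \lambda_k h(w_g),
\]
where the first inequality merely drops the nonnegative penalty term. Letting $k \into \infty$ and using continuity of $w \mapsto \half\norm{Xw - y}_2^2$ together with $\lambda_k \into 0$, I obtain $\half\norm{X\bar w - y}_2^2 \le \half\norm{Xw_g - y}_2^2$. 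Since $w_g$ already achieves the minimal least-squares value, $\bar w$ must achieve it too, i.e. $X \bar w = Xw_g$ and hence $X^\top X \bar w = X^\top y$.

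Second I would control the group norm. Returning to the full optimality inequality $F_{\lambda_k}(\wmin(\lambda_k)) \le F_{\lambda_k}(w_g)$ in the display and using that $\norm{X\wmin(\lambda_k) - y}_2^2 \ge \norm{Xw_g - y}_2^2$ (the least-squares value is minimal), the quadratic terms cancel with the correct sign and division by $\lambda_k > 0$ yields $h(\wmin(\lambda_k)) \le h(w_g)$ for every $k$. Because $h$ is continuous, passing to the limit gives $h(\bar w) \le h(w_g)$. Combined with the first step, $\bar w$ solves the normal equations with group norm no larger than the minimum over such solutions, so $\bar w \in W_g$, as claimed.

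The step I expect to need the most care is ensuring the comparison point $w_g$ is feasible, so that the inequality $F_{\lambda_k}(\wmin(\lambda_k)) \le F_{\lambda_k}(w_g)$ is legitimate. This is automatic when $K = 0$ (the group-lasso setting, in which $W_g$ is exactly the minimum-group-norm subset of the solution set of $X^\top X w = X^\top y$); in the constrained case it requires that the unconstrained least-squares fit be attainable within the cones $\Ki^\top \wi \le 0$. The remaining ingredients --- exchanging the limit with $h$ and with the squared-error fit, and the very existence of $\bar w$ --- rest only on continuity of these maps and on the uniform boundedness from \cref{lemma:bounded-solutions-cgl}; the fit convergence could alternatively be routed through \cref{prop:value-continuity-cgl}.
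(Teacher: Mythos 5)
Your proof is correct, and its overall skeleton (boundedness from \cref{lemma:bounded-solutions-cgl} $\Rightarrow$ existence of limit points, then showing each limit point is a least-squares solution of minimal group norm) matches the paper's. The one step you handle differently is showing that the limit point solves the normal equations: the paper reads this off the first-order optimality condition, writing $X^\top X\, \wmin(\lambda_k) = X^\top y - c(\lambda_k)$ and using the equicorrelation bound $\norm{\ci(\lambda_k)}_2 \leq \lambda_k \into 0$, whereas you use a purely variational sandwich $\half\norm{X\wmin(\lambda_k)-y}_2^2 \leq F_{\lambda_k}(\wmin(\lambda_k)) \leq F_{\lambda_k}(w_g)$ and pass to the limit. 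Your route avoids invoking the KKT system entirely and folds the group-norm bound $h(\wmin(\lambda_k)) \leq h(w_g)$ into the same two-sided comparison (in effect re-deriving \cref{lemma:bounded-solutions-cgl} rather than citing it); the paper's route is shorter given its surrounding machinery but, like yours, implicitly works in the unconstrained ($K=0$) setting, since $\norm{\ci}_2 \leq \lambda$ is the unconstrained stationarity bound. Your closing remark correctly identifies the only real delicacy --- feasibility of the comparison point $w_g$ when $K \neq 0$ --- which the paper does not flag but which is harmless here because the lemma is only applied with $\Ki = 0$.
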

\begin{proof}
	Let \( \lambda_k \rightarrow 0 \)
	and observe that \( \wmin(\lambda_k) \) has at least one limit point
	since it is bounded (\cref{lemma:bounded-solutions-cgl}).
	Since \( \norm{\ci(\lambda_k)}_2 \leq \lambda_k \),
	we see that \( \lim_{k} \norm{\ci(\lambda_k)}_2 = 0 \)
	and thus \( \lim_k \ci(\lambda_k) = 0 \).

	FO optimality conditions imply
	\begin{equation}
		(X^\top X) \wmin(\lambda) = X^\top y - c(\lambda),
	\end{equation}
	which, taking limits on both sides,
	gives
	\begin{equation}
		\lim_k (X^\top X) \wmin(\lambda_k) = X^\top y.
	\end{equation}
	That is, every limit point \( \bar w \) of \( \wmin(\lambda_k) \) is
	a least squares solution satisfying \( h(\bar w) \leq h(w_g) \).
	We conclude that \( \bar w \in W_g \).
\end{proof}

\rowCE*
\begin{proof}
	Consider the setting of \cref{prop:min-norm-interp},
	with
	\[
		X =
		\left[
			\begin{array}{cc|c}
				1 & 2 & 0 \\
				1 & 0 & 2 \\
			\end{array}
			\right],
		\quad
		y = \begin{bmatrix}
			1 \\
			1
		\end{bmatrix},
	\]
	where the vertical line indicates the block structure, i.e.,
	\( b_1 = \cbr{1, 2} \) and \( b_2 = \cbr{3} \).
	We have shown that the min group norm interpolant is unique,
	is supported on \( b_1 \) and \( b_2 \), and does not lie
	in \( Row(X) \).
	Let \( w_g \) be this solution.

	Let \( \lambda_k \downarrow 0 \).
	By \cref{lemma:lambda-convergence}, every limit point of
	\( \wmin(\lambda_k) = w_g \).
	Thus, \( \lim_k \wmin(\lambda_k) \) exists and is exactly
	\( w_g \).
	Moreover, \( \wmin(\lambda_k) \) must be supported
	on \( b_1 \) and \( b_2 \) for all \( k \) sufficiently large.

	Decomposing \( w_g = a + b \) and \( \wmin(\lambda_k) = r_k + n_k \)
	where \( a, r_k \in \Row(X) \) and \( b, n_k \in \Null(X) \), we see
	that
	\[
		\norm{w_g - \wmin(\lambda_k)}_2^2
		= \norm{a - r_k}_2^2
		+ \norm{b - n_k}_2^2 \rightarrow 0,
	\]
	implying that \( n_k \neq 0 \) for sufficiently large \( k \).
	In other words, the min-norm solution to the group lasso problem
	fails to fall in \( \Row(X) \) for some \( \lambda > 0 \).
\end{proof}

\minNormProgram*
\begin{proof}
	Let \( w \in \solfn(\lambda) \).
	By \cref{prop:sol-fn-cgl}, \( \wi = \alpha_\bi \vi \)
	where \( \alpha_\bi \geq 0 \).
	Moreover, \( \alpha_\bi = 0 \) for every
	\( \bi \in \calB \setminus \calS_\lambda \).
	As a result,
	\begin{align*}
		\norm{\w}^2_2
		 & = \norm{\we}^2_2                                         \\
		 & = \sum_{\bi \in \calS_\lambda} \norm{\alpha_\bi \vi}^2_2 \\
		 & = \lambda \norm{\alpha}_2^2,
	\end{align*}
	where the last equality follows from \( \bi \in \calS_\lambda \implies \bi \in \equi \),
	which enforces \( \norm{\vi}_2 = \lambda \).

	Now suppose \( \alpha^* \) is optimal for the cone program
	and let \( w \in \R^d \) such that \( \wi = \alpha_\bi^* \vi \).
	By construction, \( \alpha^*_\bi = 0 \)
	for all \( \bi \in \calB \setminus \calS_\lambda \)
	(or it could not be optimal)
	so that \( \wi = 0 \) for all \( \bi \in \calB \setminus \calS_\lambda \).
	Moreover,
	\[
		X w = \sum_{\bi \in \calS_\lambda} \alpha_\bi^* \Xbi \vi = \hat y,
	\]
	Thus, \( w \) solves
	\[
		\begin{aligned}
			\argmin_{w} \norm{w}_2^2
			\, \, \text{ s.t.} \, \,
			 & \forall \, \bi \in \calS_\lambda, \, \wi = \alpha_\bi \vi, \alpha_\bi \geq 0, \, \\
			 & \forall \, b_j \in \calB \setminus \calS_\lambda, \w_{b_j} = 0, \,
			X w = \hat y.
		\end{aligned}
	\]
	Invoking \cref{prop:sol-fn-cgl} now proves that \( w \) is the min-norm solution.
\end{proof}

\begin{restatable}{lemma}{l2Reformulation}\label{lemma:l2-reformulation}
	The \( \ell_2 \)-penalized group lasso problem in \cref{eq:l2-penalized-cgl}
	is equivalent to the following CGL problem:
	\begin{equation}\label{eq:l2-reformulation}
		\begin{aligned}
			\min_{w} \,
			 & \half \norm{\tilde X w - \tilde y}_2^2 + \lambda \sum_{\bi \in \calB} \norm{\wi}_2 \\
			 & \quad \text{s.t.} \quad \Ki^\top \wi \leq 0  \text{ for all } \bi \in \calB.
		\end{aligned}
	\end{equation}
	where we have defined the extended data matrix and targets
	\[
		\tilde X =
		\begin{bmatrix}
			X \\
			\sqrt{\delta} I
		\end{bmatrix}
		\quad \quad
		\tilde y =
		\begin{bmatrix}
			y \\
			0
		\end{bmatrix}.
	\]
    Moreover, \( \tilde X \) is full column-rank and thus the CGL solution is
    unique.
\end{restatable}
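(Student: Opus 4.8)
The plan is to show that the two programs have identical objective functions and identical feasible sets, so that equality of their solution sets follows immediately. First I would exploit the block structure of the stacked data matrix: writing out
\[
	\half \norm{\tilde X w - \tilde y}_2^2
	= \half \norm{X w - y}_2^2 + \half \norm{\sqrt{\delta}\, w}_2^2
	= \half \norm{X w - y}_2^2 + \frac{\delta}{2} \norm{w}_2^2,
\]
which holds because stacking the rows makes the squared Euclidean norm split as a sum over the two row-blocks. Adding the group penalty $\lambda \sum_{\bi \in \calB} \norm{\wi}_2$ to both sides shows that the objective of \cref{eq:l2-reformulation} coincides \emph{exactly} (not merely up to an additive constant) with the objective of \cref{eq:l2-penalized-cgl}. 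Since the constraints $\Ki^\top \wi \leq 0$ are verbatim the same in both problems, the feasible sets agree, and therefore the two programs share the same set of minimizers.

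Next I would establish that $\tilde X$ has full column rank, which is cleanest through the Gram matrix:
\[
	\tilde X^\top \tilde X = X^\top X + \delta I \succeq \delta I \succ 0,
\]
using $X^\top X \succeq 0$ and $\delta > 0$. A positive definite Gram matrix forces the columns of $\tilde X$ to be linearly independent. Uniqueness then requires no separate argument: applying \cref{cor:uniqueness-cond-cgl} to the reformulated problem (with $\tilde X$ in place of $X$) gives that the CGL solution is unique for every $\lambda \geq 0$.

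There is essentially no hard step here --- the statement is a completion-of-squares identity combined with a standard Gram-matrix rank argument. The only point requiring minor care is verifying that the reformulation is an exact equality rather than an equivalence up to an additive constant, so that matching the objectives and constraints transfers the entire solution set and not merely the optimal value; the block decomposition of $\norm{\tilde X w - \tilde y}_2^2$ makes this transparent.
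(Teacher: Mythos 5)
Your proposal is correct and follows essentially the same route as the paper: a direct block decomposition of \( \norm{\tilde X w - \tilde y}_2^2 \) to match the objectives exactly, followed by full column rank of \( \tilde X \) implying uniqueness. The only cosmetic difference is that you justify full column rank via the Gram matrix \( \tilde X^\top \tilde X = X^\top X + \delta I \succ 0 \), where the paper simply asserts it by inspection; your version is slightly more explicit but the content is identical.
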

\begin{proof}
	It is straightforward to show the equivalence by direct calculation.
	For any \( w \in \R^d \),
	\begin{align*}
		\half \norm{\tilde X \w - \tilde y}_2^2
		 & = \half \norm{X w - y}_2^2 + \half \norm{\sqrt{\delta} I w - 0}_2^2 \\
		 & = \half \norm{X w - y}_2^2 + \frac{\delta}{2} \norm{w}_2^2,
	\end{align*}
	Substituting this identity into \cref{eq:l2-reformulation} establishes the
	equivalence.

	It is clear by inspection that \( \tilde X \) is full column rank.
	Then \( \Null(X_\calC) = \emptyset \) for all \( \calC \subset \calB \)
	and the solution is unique by \cref{prop:sol-fn-cgl}.
\end{proof}

\penConvergence*
\begin{proof}
	First we show that \( \norm{w^\delta(\lambda)}_2 \leq \norm{w^*(\lambda)}_2 \).
	Suppose by way of contradiction that
	\( \norm{w^\delta(\lambda)}_2 > \norm{w^*(\lambda)}_2 \)
	for some \( \delta > 0 \).
	Since
	\begin{align*}
		\half \norm{Xw^*(\lambda) - y}_2^2
		+ \lambda \sum_{\bi \in \calB} \norm{\wi^*(\lambda)}_2
		 & =
		\min_{w : \Ki \wi \leq 0} \half \norm{Xw - y}_2^2
		+ \lambda \sum_{\bi \in \calB} \norm{\wi}_2 \\
		 & \leq
		\half \norm{Xw^\delta(\lambda) - y}_2^2
		+ \lambda \sum_{\bi \in \calB} \norm{\wi^\delta(\lambda)}_2,
	\end{align*}
	we deduce
	\[
		\half \norm{Xw^*(\lambda) - y}_2^2
		+ \lambda \sum_{\bi \in \calB} \norm{\wi^*(\lambda)}_2
		+ \frac{\delta}{2} \norm{w^*(\lambda)}_2^2
		<
		\half \norm{Xw^\delta(\lambda) - y}_2^2
		+ \lambda \sum_{\bi \in \calB} \norm{[w^\delta(\lambda)]_\bi}_2
		+ \frac{\delta}{2} \norm{w^\delta(\lambda)}_2^2,
	\]
	which contradicts optimality of \( w^\delta(\lambda) \).
	So \( \norm{w^\delta(\lambda)}_2 \leq \norm{w^*(\lambda)}_2 \)
	for all \( \delta > 0 \).
	As a result, the sequence
	\( \cbr{w^{\delta_k}(\lambda)}_{\delta_k} \), where \( \delta_k \downarrow 0 \),
	is bounded and admits at least one convergent subsequence.
	Let \( \bar w(\lambda) \) be the limit point associated with one
	such subsequence;
	clearly \( \norm{\bar w(\lambda)}_2 \leq \norm{w(\lambda)^*}_2 \).

	Let's show that \( \bar w(\lambda) \) is a solution to the group lasso
	problem by checking the KKT conditions.
	Suppose \( \lambda > 0 \).
	Stationarity of the Lagrangian is
	\[
		X^\top (Xw^{\delta_k}(\lambda) - y) + K \rho^{\delta_k}(\lambda) + s^{\delta_k}(\lambda) + \delta_k w^{\delta_k}(\lambda) = 0,
	\]
	where \( s_\bi^{\delta_k}(\lambda) \in \partial \lambda \norm{\wi^{\delta_k}}_2 \).
	Since \( \norm{s_\bi^{\delta_k}(\lambda)}_2 \leq \lambda \) and \( w^{\delta_k}(\lambda) \)
	is bounded, clearly \( K \rho^{\delta_k}(\lambda) \) is also bounded.

	Dropping to a subsequence if necessary, let \( \lim_{k} w^{\delta_k}(\lambda) = \bar w \)
	and \( \lim_k \Ki \ri^{\delta_k} = \bar{z}_{\bi} \).
	Define
	\[
		R_{1/n} = \cbr{\ri : \norm{\Ki \ri - \bar{z}_{\bi}}_\infty \leq \frac{1}{n}, \ri \geq 0}.
	\]
	The sequence of sets \( R_{1/n} \) is polyhedral and thus retractive.
	Moreover, for each \( n \in \bbN \), there exists \( k \) such that
	\[ \norm{\Ki \ri^{\delta_k} - \bar{z}_{\bi}}_\infty \leq 1 / n, \]
	since \( \Ki \ri^{\delta_k} \rightarrow \bar{z}_{\bi} \).
	Recalling \( \ri^{\delta_k} \geq 0 \) shows that \( \R_{1/n} \) is non-empty.
	The limit of a sequence of nested, non-empty, retractive sets is also
	non-empty \citep[Proposition 1.4.10]{bertsekas2009convex}.
	Moreover, since the limit is exactly
	\[
		\bar R = \cbr{\ri : \Ki \ri = \bar{z}_{\bi}, \ri \geq 0},
	\]
	we deduce that there exists \( \bar \rho \geq 0 \) such that
	\( \Ki \bar \ri = \bar{z}_{\bi} \).

	Taking limits on either side of the stationarity condition, we find
	\begin{align*}
		\Xbi^\top (y - X\bar w(\lambda)) - \Ki \bar \rho(\lambda) = \bar s_\bi,
	\end{align*}
	where the limit point \( \bar s_\bi \) satisfies \( \norm{\bar s_\bi} \leq \lambda \).
	If \( \bi \in \calB \setminus \act(\bar w(\lambda)) \),
	then \( \bwi \) satisfies stationarity.

	Let \( \bi \in \act(\bar w(\lambda)) \).
	Since \( \w^{\delta_k}(\lambda) \rightarrow \bar w(\lambda) \),
	\( \norm{\wi^{\delta_k} - \bar \wi}_2 \rightarrow 0 \)
	and it must happen that
	\( \wi^{\delta_k} > 0 \) for all \( k \) sufficiently large.
	That is,
	\( \calA(w^{\delta_k}(\lambda)) \supseteq \calA(\bar w) \)
	for all \( k \geq k' \).
	Using \( \bi \in \calA(w^{\delta_k}(\lambda)) \)
	provides a closed-form expression for \( s_\bi^{\delta_k} \):
	\begin{align*}
		\lim_{k} s^{\delta_k}(\lambda)
		 & = \lambda \lim_{k} \frac{\wi^{\delta_k}(\lambda)}{\norm{\wi^{\delta_k}(\lambda)}_2} \\
		 & = \lambda \frac{\bwi(\lambda)}{\norm{\bar \wi(\lambda)}_2},
	\end{align*}
	which shows that \( \bar s_\bi \) is a subgradient of \( \lambda \norm{\wi}_2 \).
	We conclude that the Lagrangian is stationary in \( \bwi \) as well.

	Let us check the remainder of the KKT conditions.
	For feasibility, it is straightforward to observe that
	\[
		\Ki^\top \wi^{\delta_k}(\lambda) \leq 0 \quad \forall \, k \implies \Ki^\top \bwi(\lambda) \leq 0.
	\]
	Similarly,
	\[
		\abr{\ri^{\delta_k}, \Ki^\top \wi^{\delta_k}} = 0 \quad \forall \, k \implies \abr{\bar \ri, \Ki^\top \bwi} = 0,
	\]
	which, combined with \( \bar \rho \geq 0 \), is sufficient to establish
	complementary slackness.
	We have shown the subsequential limits \( (\bar w, \bar \rho) \)
	satisfies the KKT conditions and thus \( \bar w \)
	is a solution to the constrained group lasso problem.

	Since the min-norm solution is unique and
	\( \norm{\bar w(\lambda)}_2 \leq \norm{w^*(\lambda)}_2 \),
	it must be that \( \bar w(\lambda) = w^*(\lambda) \).
	Noting that this holds for every limit point implies
	\( \lim_{\delta \downarrow 0} w^{\delta}(\lambda) \) exists and is \( w^*(\lambda) \).
	This completes the proof for \( \lambda > 0 \).

	If \( \lambda = 0 \), then the proposition follows similarly with
	the additional observation that \( s^{\delta_k}(0) = 0 \)
	for all \( k \).
\end{proof}

\subsection{Sensitivity}


\constraintReduced*
\begin{proof}
	Let \( w \) be as in the theorem statement.
	We starting by showing that \( w \) obtains the optimal objective value
	for the reduced problem:
	\begin{align*}
		\half \norm{\Xa \w_\act - y}_2^2 + \lambda \sum_{\bi \in \act} \norm{\wi}_2
		 & =
		\min_{w : \Ki \wi \leq 0} \,
		\half \norm{X w - y}_2^2 + \lambda \sum_{\bi \in \calB} \norm{\wi}_2                                   \\
		 & \leq
		\min_{\wa : \Ka \wa \leq 0} \, \half \norm{\Xa \wa - y}_2^2 + \lambda \sum_{\bi \in \act} \norm{\wi}_2 \\
		 & \leq \half \norm{\Xa \w_\act - y}_2^2 + \lambda \sum_{\bi \in \act} \norm{\wi}_2,
	\end{align*}
	where the last inequality makes explicit use of feasibility of \( \w_\act \).
	Since \( \w_\act \) is feasible for the reduced problem and attains the
	minimum objective value, it must be optimal.
	Note that it is straightforward to check that the active blocks of the min-norm
	dual parameter \( \rmin_\act \) are dual optimal for the reduced problem.

	Now, let \( \wa' \) be an optimal solution to the reduced problem.
	Since \( \rmin_\act \) is dual optimal for the reduced problem,
	\cref{prop:sol-fn-cgl} implies
	\[
		\wi' = \alpha_\bi' \vi,
	\]
	for every \( \bi \in \act \), with \( \alpha_\bi' \geq 0 \).
	Since \( \Xa \wa' = \Xa \wa \), we deduce
	\[
		\sum_{\bi \in \act} (\alpha_\bi - \alpha_\bi')\Xbi \bi = 0,
	\]
	which contradicts minimality of \( \w \) unless \( \alpha_\bi = \alpha_\bi' \).
	That is, \( \wa' = \wa \).
	We conclude that the reduced problem provides the unique minimal solution
	with support \( \act \).
\end{proof}

\begin{lemma}\label{lemma:second-order-stationary}
	Let \( w \in \solfn(\lambda) \) be minimal.
	Then \( w \) is a second-order stationary point of the reduced problem
	(\cref{eq:constraint-reduced}).
\end{lemma}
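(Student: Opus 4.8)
The plan is to exploit the fact that, once we restrict to the active blocks, the reduced problem in \cref{eq:constraint-reduced} becomes a \emph{smooth} convex program near $w$, so that its global minimizer is automatically second-order stationary. First I would record the key structural observation: by the definition of the active set, $\wi \neq 0$ for every $\bi \in \act$. Since the Euclidean norm $\norm{\cdot}_2$ is $C^2$ away from the origin, the penalty $\lambda \sum_{\bi \in \act} \norm{\wi}_2$ is twice continuously differentiable in a neighborhood of $\wa$; combined with the smooth quadratic data-fitting term, this shows that the reduced objective is $C^2$ at $w$.

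Next I would invoke \cref{prop:constraint-reduced}, which guarantees that $\wa$ is the unique global minimizer of the reduced problem. Because the inequality constraints $\Ka^\top \wa \leq 0$ are affine, the linear-constraint qualification holds at every feasible point, so KKT multipliers $\ra \geq 0$ exist at $\wa$ — these are exactly the active blocks of the min-norm dual parameter already exhibited in the proof of \cref{prop:constraint-reduced}. This supplies first-order stationarity.

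It then remains to verify the second-order necessary condition: that the Hessian of the reduced Lagrangian in $\wa$ is positive semidefinite on the critical cone. Since the constraints are affine they contribute nothing to the Hessian, which therefore coincides with the Hessian of the objective, $\Xa^\top \Xa + M(\bar w)$, where $M(\bar w)$ — the same block-diagonal matrix appearing in \cref{prop:local-sol-fn} — has $\bi$-th block equal to the Hessian $\frac{\lambda}{\norm{\wi}_2}\rbr{I - \frac{\wi \wi^\top}{\norm{\wi}_2^2}}$ of $\lambda \norm{\wi}_2$ at $\wi \neq 0$. Each such block is a nonnegative multiple of an orthogonal projection and is hence PSD, and $\Xa^\top \Xa \succeq 0$, so the full Hessian is PSD on all of $\R^{|\act|}$, in particular on the critical cone. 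Together with first-order stationarity, this certifies that $w$ is a second-order stationary point of the reduced problem.

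I expect the only point requiring genuine care to be the smoothness claim, since the entire argument hinges on the penalty being differentiable at $w$; this is precisely why one passes to the reduced problem, where the zero blocks (at which $\norm{\cdot}_2$ is non-smooth) have been discarded. Once smoothness is secured, convexity of the reduced objective makes the second-order condition immediate, so no real obstacle remains and the work reduces to the bookkeeping of the Hessian blocks sketched above.
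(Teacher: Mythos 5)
There is a genuine gap: you prove only that the Hessian of the reduced Lagrangian is positive \emph{semi}-definite, but the content of this lemma --- and what the paper actually proves --- is that it is positive \emph{definite}. Your observation that $\Xa^\top\Xa + \lambda M(\wa)$ is a sum of PSD matrices is correct but vacuous here: it follows from convexity alone, holds at \emph{every} smooth feasible point, and makes no use of the minimality hypothesis. The lemma is invoked in the proof of \cref{prop:local-sol-fn} to feed Fiacco's sensitivity theorem, which requires the second-order \emph{sufficient} conditions (a positive definite Hessian on the critical cone) in order to conclude that the matrix $D$ is invertible and that a differentiable local solution function exists. A merely PSD Hessian does not deliver this, so your argument would not support the downstream result.

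The missing step is exactly where minimality enters. Suppose $\bar w \neq 0$ satisfies $\bar w^\top (\Xa^\top\Xa + \lambda M(\wa)) \bar w = 0$. Since each block of $M(\wa)$ is (a positive multiple of) the orthogonal projector onto $\wi^\perp$, the condition $\bar w^\top M(\wa)\bar w = 0$ forces $\bar w_\bi = \beta_\bi \wi$ for scalars $\beta_\bi$, and then $\norm{\Xa\bar w}_2^2 = 0$ gives $\sum_{\bi \in \act}\beta_\bi \Xbi\wi = 0$ with $\beta \neq 0$, i.e.\ a linear dependence among $\cbr{\Xbi\wi}_{\act}$. By \cref{prop:minimal-solutions} this contradicts minimality of $w$. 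Without this argument the Hessian can genuinely be singular (precisely when $w$ is not minimal), so your claim that ``convexity of the reduced objective makes the second-order condition immediate'' gives away the real point of the lemma.
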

\begin{proof}
	Define \( M(w) \) to be the block-diagonal projection matrix given by
	\begin{equation}\label{eq:projection-matrix-block}
		M(w)_\bi
		= \frac{1}{\norm{\wi}_2} \rbr{I - \frac{\wi}{\norm{\wi}_2}\frac{\wi^\top}{\norm{\wi}_2}}.
	\end{equation}
	The Hessian of the Lagrangian of the reduced problem with respect to
	\( w \) is exactly
	\[
		\nabla^2_w \calL(\wa, \ra) =
		\Xa^\top \Xa + \lambda M_\act(w).
	\]
	A sufficient condition for \( w \) to be second-order stationary is
	that this Hessian is positive-definite.
	We now shows this fact holds.

	Clearly \( \nabla^2_w \calL(\wa, \ra) \) is positive semi-definite
	as it is the sum of a PSD projection matrix
	and a Gram matrix, which is always PSD.
	Let \( \bar w \in \R^{|\act|} \) such that \( \bar w \neq 0 \).
	Suppose that
	\begin{align*}
		0
		 & = \bar w^\top \nabla^2_w \calL(\wa, \ra) \bar w                    \\
		 & = \bar w^\top \Xa^\top \Xa \bar w + \bar w^\top\lambda M(w) \bar w \\
		 & = \norm{\Xa w}_2^2 + \lambda w^\top M(w) w.
	\end{align*}
	Since \( M(\bar w) \) is PSD,
	it must hold that
	\[
		\bar w^\top M(w) \bar w = 0,
	\]
	which is true if and only if \( \bwi = \beta_\bi \wi \),
	\( \beta_\bi \in \R \), for each \( \bi \in \act \).
	As a result, we find that
	\[
		\Xa w - \Xa \bar w = \sum_{\bi \in \act} (1 - \beta_\bi) \alpha_{\bi} \Xbi \vi = 0,
	\]
	which contradicts minimality of \( \bar w \) unless \( \beta_{\bi} = 1 \)
    for each \( \bi \in \act \).
	But then \( \norm{\Xa w}_2^2 > 0 \) and
	we conclude that the Hessian is positive-definite as desired.
\end{proof}

\localSolFn*
\begin{proof}
	Recall from \cref{prop:constraint-reduced} that \( \wa \) is the unique
	solution to the reduced group lasso problem.
	In fact, as we show in \cref{lemma:second-order-stationary}, \( \wa \) is
	a second order stationary point for the reduced problem.
	Now, combining this fact with LICQ and SCS
	and using standard results on differential sensitivity from optimization
	theory (see, e.g. \citet[Theorem 5.1]{fiacco1990sensitivity} and
	the references therein)
	we obtain the following:

	For \( (\lambda, y) \) in a neighborhood of \( \bar \lambda, \bar y \),
	there exists a unique once continuously differentiable function
	\[
		\tilde l(\lambda, y) =
		\begin{bmatrix}
			\tilde h(\lambda, y) \\
			\tilde g(\lambda, y),
		\end{bmatrix}
	\]
	such that \( \tilde h(\bar \lambda, \bar y) = \wa \),
	\( \tilde g(\bar \lambda, \bar y) = \rmin_\act \),
	and \( \tilde l(\lambda, y) \) is a primal-dual solution to the reduced
	problem.

	Now we show that \( \tilde l \) can be extended from the reduced problem
	to obtain a local solution function for the constrained group lasso.
	Define \( h(\lambda, y) \) such that
	\( h_\act(\lambda, y) = \tilde h(\lambda, y) \)
	and \( h_{\calB \setminus \act}(\lambda, y) = 0 \).
	We shall show how to extend \( g \) shortly.
	For \( \bi \in \act \), the pair
	\( h_{\bi}(\lambda, y), g_\bi(\lambda, y) \)
	verifies the KKT conditions (which are separable over block)
	since it verifies them for the reduced problem.
	So, we need only consider \( \bi \in \calB \setminus \act \).

	First, consider \( \bi \in \calB \setminus \equi \).
	In this case, we have
	\[
		\norm{\Xbi^\top (\bar y - X h(\bar \lambda, \bar y)) + \Ki \ri(\bar \lambda, \bar y)}_2
		< \bar \lambda,
	\]
	Since this inequality is strict and
	\[
		z(\bar\lambda, \bar y) = \Xbi^\top (\bar y - X h(\bar \lambda, \bar y)),
	\]
	is continuous in \( \bar \lambda, \bar y \),
	there exists a neighborhood of \( \bar \lambda, \bar y \)
	on which
	\[
		\norm{z(\lambda, y) + \Ki \ri(\bar \lambda, \bar y)}_2
		\leq \lambda.
	\]
	Since \( \ri(\bar \lambda, \bar y) \geq 0 \) and \( \wi = 0 \),
	dual feasibility and complementary slackness hold.
	We conclude that the extension
	\( g_\bi(\lambda, y) = \ri(\bar \lambda, \bar y) \)
	satisfies KKT conditions on this neighbourhood.

	Now suppose \( \bi \in \equi(\bar \lambda, \bar y) \setminus \act \).
	If
	\[
		\norm{\Xbi^\top (y - X h(\lambda, y))}_2 = \lambda,
	\]
	then taking \( g_\bi(y, \lambda) = 0 \) satisfies KKT conditions.
	Otherwise, observe that
	\[
		\norm{\Xbi^\top (\bar y - X h(\bar \lambda, \bar y)) + \Ki \ri(\bar \lambda, \bar y)}_2
		= \bar \lambda,
	\]
	must hold for some dual parameter \( \ri(\bar \lambda, \bar y) \)
	by KKT conditions.
	Moreover, SCS implies that we can choose the dual parameter to satisfy,
	\[ \ri(\bar \lambda, \bar y) > 0, \]
	since \( \Ki^\top \wi(\bar \lambda, \bar y) = 0 \).
	Finally, because
	\[ \Xbi^\top (y - X h(\lambda, y)) \]
	is a continuous function of \( (y, \lambda) \), taking
	\( \lambda, y \) sufficiently close to \( \bar \lambda, \bar y \)
	implies there exists \( \ri(\lambda, y) \geq 0 \) such that
	\[
		\norm{\Xbi^\top (y - X h(\lambda, y)) + \Ki \ri(\lambda, y)}_2
		\leq \lambda.
	\]
	Now we choose our extension to be
	\( g_\bi(\lambda, y) = \ri(\lambda, y) \)
	so that \( (h_\bi, g_\bi) \) satisfies stationarity of the Lagrangian
	as well.
	Since \( g_\bi \) is feasible and \( h_\bi \)  is the zero function,
	primal feasibility, dual feasibility, and complementary slackness also hold.

	Since \( l = (g, h) \) satisfies the KKT conditions in a local neighborhood
	of \( \bar \lambda, \bar y \), it is exactly a local solution function.
	Moreover, since \( g_{\calB \setminus \calA}(\lambda, y) = 0 \)
	over this neighborhood, it is easy to see that the gradient
	for parameter blocks in \( \calB \setminus \calA \)
	is \( 0 \).
	For \( g_\act \), \citet[Theorem 5.1]{fiacco1990sensitivity} implies that the
	gradients are given as follows:

	Recall from \cref{lemma:second-order-stationary},
	that \( M_\act \) is a block-diagonal projection matrix
	with blocks given by
	\[
		M(\bar w)_\bi = \frac{1}{\norm{\bwi}_2} \rbr{I - \frac{\bwi}{\norm{\bwi}_2}\frac{\bwi^\top}{\norm{\bwi}_2}}.
	\]
	Then, the Jacobian of \( \nabla_w \calL(\bar \wa, \bar \ra) \)
	for the reduced problem with respect to the primal-dual parameters is
	given by
	\[
		D =
		\begin{bmatrix}
			\Xa^\top \Xa + M(\bar w) & \Ka                            \\
			\bar \ra \odot \Ka       & \text{diag}(\Ka^\top \bar \wa)
		\end{bmatrix}.
	\]
	It also holds that \( D \) is invertible.
	Finally, let \( u_i = \frac{w_i}{\norm{\wi}_2} \)
	and \( u \) the concatenation of these vectors.
	We are now able to write the
	Jacobians of \( w(y, \lambda) \)
	with respect to \( y \) and \( \lambda \)
	as follows:
	\[
		\nabla_\lambda w(\bar \lambda, \bar y)
		= - [D^{-1}]_\act u_\act
		\quad \quad
		\nabla_y w(\bar \lambda, \bar y)
		= [D^{-1}]_\act \Xa^\top,
	\]
	where \( [D_\act^{-1}]_\act \) is the \( |\act| \times |\act| \)
	dimensional leading principle submatrix of \( D \).
\end{proof}

\section{Specialization: Proofs}\label{app:specializations}


\begin{lemma}\label{lemma:scaling-unique}
	Let \( \rbr{W_{1}, w_{2}} \) and \( \rbr{W_{1}', w_{2}'} \)
	be two solutions to the non-convex ReLU training problem.
	If for every \( i \in [m]  \),
	it holds that
	\[
		W_{1i} w_{2i} = W_{1i}' w_{2i}',
	\]
	and \( \text{sign}(w_{2i}) = \text{sign}(w_{2i}') \),
	then \( W_1 = W_1' \) and \( w_2 = w_2' \).
	That is, the solutions are the same.
\end{lemma}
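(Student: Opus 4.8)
The plan is to first establish a \emph{norm-balance} property satisfied by every optimal network --- namely that $\norm{W_{1i}}_2 = \abs{w_{2i}}$ for each neuron $i$ --- and then combine this with the two hypotheses to pin down each pair $(W_{1i}, w_{2i})$ uniquely. Balance holds because the ReLU activation is positively homogeneous: for any $c > 0$ the reparameterization $W_{1i} \mapsto c W_{1i}$, $w_{2i} \mapsto w_{2i}/c$ leaves the prediction $f_{W_1, w_2}(Z)$ (hence the loss $L$) unchanged, while the regularizer contribution from neuron $i$ becomes $\tfrac{\lambda}{2}\rbr{c^2 \norm{W_{1i}}_2^2 + w_{2i}^2 / c^2}$. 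I would minimize this strictly convex function of $c > 0$: the unique minimizer is $c^2 = \abs{w_{2i}} / \norm{W_{1i}}_2$, at which the two terms coincide. Optimality of the given solution forces the minimizer to be $c = 1$, i.e.\ $\norm{W_{1i}}_2 = \abs{w_{2i}}$; otherwise some $c \neq 1$ strictly decreases the objective, a contradiction. The degenerate cases are handled directly: if exactly one of $W_{1i}, w_{2i}$ is nonzero the neuron contributes nothing to the output but a strictly positive amount to the regularizer, so zeroing it would strictly improve the objective --- impossible at an optimum --- and if both vanish balance is trivial.

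With balance in hand for both solutions, I would take Euclidean norms in the hypothesis $W_{1i} w_{2i} = W_{1i}' w_{2i}'$. Using $\norm{W_{1i} w_{2i}}_2 = \abs{w_{2i}} \norm{W_{1i}}_2 = \abs{w_{2i}}^2$ and the analogous identity for the primed solution gives $\abs{w_{2i}} = \abs{w_{2i}'}$; the sign hypothesis $\text{sign}(w_{2i}) = \text{sign}(w_{2i}')$ then upgrades this to $w_{2i} = w_{2i}'$. To recover the first-layer weights, I note that when $w_{2i} \neq 0$ one may divide the identity $W_{1i} w_{2i} = W_{1i}' w_{2i}$ by the common scalar to conclude $W_{1i} = W_{1i}'$, while when $w_{2i} = 0$ balance forces $\norm{W_{1i}}_2 = \norm{W_{1i}'}_2 = 0$, so both first-layer weights vanish; either way $W_{1i} = W_{1i}'$. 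Since this applies to every $i \in [m]$, it follows that $W_1 = W_1'$ and $w_2 = w_2'$.

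The hard part will be the balance property; everything afterward is elementary algebra. In particular, care is needed in the rescaling argument to cover the boundary cases where a neuron's incoming or outgoing weight vanishes, and the argument must invoke $\lambda > 0$ so that a strictly positive regularizer penalty can actually be removed --- indeed the claim fails at $\lambda = 0$, where two unbalanced rescalings of the same neuron satisfy both hypotheses yet differ. Conceptually, the positive homogeneity of the ReLU is exactly the scaling symmetry that makes the non-convex parameterization non-unique in general, and the balance condition is precisely what eliminates it, leaving the effective product $W_{1i} w_{2i}$ together with $\text{sign}(w_{2i})$ as a complete invariant.
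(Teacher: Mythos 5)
Your proposal is correct and follows essentially the same route as the paper: both derive the norm-balance identity $\norm{W_{1i}}_2 = \abs{w_{2i}}$ from the positive-homogeneity rescaling symmetry plus optimality of the regularized objective, then take Euclidean norms of the product identity $W_{1i} w_{2i} = W_{1i}' w_{2i}'$ and use the sign hypothesis to conclude equality neuron by neuron. Your treatment is if anything slightly more careful than the paper's, since you explicitly handle the degenerate neurons and note the implicit requirement that $\lambda > 0$.
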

\begin{proof}
	The ReLU prediction function \( f_{W_1, w_2} \) is invariant to
	scalings of the form
	\[
		\bar W_{1i} = \alpha W_{1i} \quad \bar w_{2i} = w_{2i} / \alpha,
	\]
	where \( \alpha > 0 \).
	Using this, we deduce that both solutions must satisfy the following
	equations:
	\begin{align*}
		1 & = \argmin_{\alpha} \alpha^2 \norm{W_{1i}}_2^2 + \norm{w_{2i}}_2^2 / \alpha^2    \\
		1 & = \argmin_{\alpha} \alpha^2 \norm{W_{1i}'}_2^2 + \norm{w_{2i}'}_2^2 / \alpha^2,
	\end{align*}
	which in turn implies that
	\[
		\norm{W_{1i}}_2 = \norm{w_{2i}}_2, \quad \quad \norm{W_{1i}'}_2 = \norm{w_{2i}'}_2
	\]
	We deduce
	\begin{align*}
		\norm{W_{1i}}^2_2
		 & = \norm{W_{1i}}_2 \norm{w_{2i}}_2    \\
		 & = \norm{W_{1i} w_{2i}}_2             \\
		 & = \norm{W_{1i}' w_{2i}'}_2           \\
		 & = \norm{W_{1i}' }_2 \norm{w_{2i}'}_2 \\
		 & = \norm{W_{1i}'}^2_2,
	\end{align*}
	where we have used the fact that \( \norm{w_{2i}}_2 = |w_{2i}| \).
	This implies \( W_{1i} = W_{1i}' \) since \( W_{1i} \) and \( W_{1i}' \)
	are collinear, meaning \( w_{2i} = w_{2i}' \) must also hold.
\end{proof}

\begin{lemma}\label{lemma:mapping-equality}
	Let \( \rbr{W_{1}, w_{2}} \) and \( \rbr{\tilde W_{1}, \tilde w_{2}} \)
	be two solutions to the non-convex ReLU training problem.
	If \( \rbr{W_{1}, w_{2}} \) and \( \rbr{\tilde W_{1}, \tilde w_{2}'} \) map
	to the same solution in the convex reformulation, then they are equal up to
	permutations or splits/merges of of neurons with collinear weights.
\end{lemma}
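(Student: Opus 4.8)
The plan is to compare the weight-decay regularization $R$ of each ReLU solution against the group-lasso penalty of the shared convex point, and to exploit the tightness forced by optimality. First I would make the ReLU-to-convex map explicit: writing $D(u) = \diag(\mathbbm 1(Zu \ge 0))$ for the activation pattern of a weight vector $u$, the map sends $(W_1, w_2)$ to the convex variables $v_k = \sum_{i:\, D(W_{1i}) = D_k,\, w_{2i} > 0} W_{1i} w_{2i}$ and $u_k = -\sum_{i:\, D(W_{1i}) = D_k,\, w_{2i} < 0} W_{1i} w_{2i}$, one pair per pattern $D_k \in \calD_Z$ (neurons with $w_{2i} = 0$ are discarded). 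By hypothesis both $(W_1, w_2)$ and $(\tilde W_1, \tilde w_2)$ produce the same convex point $(v^*, u^*)$.

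Next I would establish the norm identity $R(W_1, w_2) = 2\sum_k (\norm{v_k^*}_2 + \norm{u_k^*}_2)$. Both networks are optimal ReLU solutions, so both attain the unique optimal fit $\hat y$ (\cref{lemma:unique-fit-cgl}, through the convex equivalence) and the optimal objective $p^*$; hence both share the same value of $R$. The canonical network obtained from $(v^*, u^*)$ via \eqref{eq:convex-to-relu} is also an optimal ReLU solution, and a direct computation gives $\norm{W_{1i}}_2^2 = |w_{2i}|^2 = \norm{v_k^*}_2$ per neuron, so its regularization is exactly $2\sum_k(\norm{v_k^*}_2 + \norm{u_k^*}_2)$; this is therefore the common value of $R$.

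The crux is then a two-step inequality. Grouping neurons by pattern and sign, AM-GM gives $\norm{W_{1i}}_2^2 + |w_{2i}|^2 \ge 2\norm{W_{1i} w_{2i}}_2$, and the triangle inequality gives $\sum_{i \in P_k}\norm{W_{1i}w_{2i}}_2 \ge \norm{v_k^*}_2$ (similarly for the negative groups and $u_k^*$). Chaining these yields $R(W_1, w_2) \ge 2\sum_k(\norm{v_k^*}_2 + \norm{u_k^*}_2)$, which by the identity above must hold with equality. Equality in AM-GM forces the balance $\norm{W_{1i}}_2 = |w_{2i}|$, while equality in the triangle inequality forces every $W_{1i} w_{2i}$ in a fixed (pattern, sign) class to be a nonnegative multiple of $v_k^*$ (resp. $u_k^*$); since the sign of $w_{2i}$ is fixed within a class, the weights $\{W_{1i}\}$ are collinear, all lying on the ray spanned by $v_k^*$. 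The main obstacle is precisely this equality analysis: making the grouping rigorous (each neuron realizes a unique pattern, and the merged $v_k^*$ lies in the cone $\calK_k$), and handling the degenerate case $v_k^* = 0$, where triangle equality forces every summand, hence every neuron in the class, to vanish.

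Finally I would conclude. Collinearity within each class means each solution collapses, via merges of collinear neurons, into the canonical one-neuron-per-pattern network; and \cref{lemma:scaling-unique} identifies that canonical network uniquely from $(v^*, u^*)$, since balance together with the fixed product $W_{1i} w_{2i} = v_k^*$ pins down the individual weights. Thus both $(W_1, w_2)$ and $(\tilde W_1, \tilde w_2)$ reduce to the \emph{same} canonical network, so one is obtained from the other by merging collinear neurons to canonical form and then splitting, composed with a permutation matching up neuron indices. This is exactly equality up to permutations and splits/merges of collinear weights, as claimed.
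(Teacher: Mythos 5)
Your proposal is correct and shares the paper's overall architecture --- group neurons by activation pattern and sign of the second-layer weight, show that the neurons within each class are collinear, merge them into a canonical one-neuron-per-pattern network, and invoke \cref{lemma:scaling-unique} to identify that canonical network from the convex point --- but it differs genuinely in how the central collinearity step is obtained. The paper simply cites \citet{mishkin2022convex} for the fact that optimal neurons sharing a pattern and a sign must be collinear (and for the validity of the merge operation), whereas you derive it from scratch: you equate \( R(W_1,w_2) \) with \( 2\sum_k(\norm{v_k^*}_2+\norm{u_k^*}_2) \) via the shared optimal fit and optimal value, then force equality in the chained AM--GM and triangle inequalities, which simultaneously yields the balance condition \( \norm{W_{1i}}_2=|w_{2i}| \), collinearity of each class along the ray of \( v_k^* \) (resp.\ \( u_k^* \)), and vanishing of degenerate neurons. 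This buys a self-contained proof that also re-derives the mechanism behind the convex equivalence, at the cost of some bookkeeping you correctly flag: the pattern assignment must be made canonical for neurons on cone boundaries (the paper fixes the sparsest consistent pattern, you fix \( \mathbbm{1}(Zu\ge 0) \); either convention works, but one must be fixed consistently with the network-to-convex map referenced in the hypothesis), and the \( v_k^*=0 \) case must be handled by noting that triangle-inequality equality then kills every summand. Neither issue is a gap; both are resolvable exactly as you indicate, so the argument is sound.
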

\begin{proof}
	Let \( (W_1, w_2) \) be a solution to the non-convex ReLU training
	problem.
	For each neuron, associate \( W_{1i} \) with the sparsest
	activation pattern \( D_i \) to which \( W_{1i} \) conforms.
	That is, given \( W_{1i} \), we associate it with the following pattern:
	\[
		D_i = \argmin \cbr{ \text{nnz}(D_j) : D_j \in \calD_Z, 2 (D_j - I) Z W_{1i} \geq 0 }.
	\]
	If \( W_{1i} \) and \( W_{1j} \) are associated with the same pattern
	\( D_i \) and \( \text{sign}(w_{2i}) = \text{sign}(w_{2j}) \),
	then \citet{mishkin2022convex} prove \( W_{1i} \) and \( W_{1j} \)
	are collinear, i.e. \( W_{1i} = \alpha W_{1j} \) for some \( \alpha > 0 \).
	They also prove that the model obtained by merging these collinear neurons
	as
	\[
		W'_{1i} = \frac{W_{1i}|w_{2i}| + W_{1j}|w_{2j}|}{\norm{W_{1i}|w_{2i}| + W_{1j}|w_{2j}|}_2^{1/2}},
		\quad \quad
		w'_{2i} = \text{sign}(w_{2i})
		\norm{W_{1i}|w_{2i}| + W_{1j}|w_{2j}|}_2^{1/2},
	\]
	where \( W'_{1j} = 0 \), and \( w'_{2j} = 0 \),
	is also an optimal solution to the non-convex ReLU problem.
	We term this the merge/split symmetry.

	By recursively merging all collinear neurons assigned to the same
	activation pattern for which \( \text{sign}(w_{2i}) =
	\text{sign}(w_{2j}) \) holds, we eventually obtain an optimal
	\emph{reduced} model \( (W_1', w_2') \) such that
	no two non-zero neurons share an activation pattern \( D_i \) and have
	second-layer weights with the same sign.
	We may then place an ordering on the neurons by ordering the \( D_i \)
	matrices from \( 1 \) to \( p \).
	This removes all permutation symmetries.

	Since every signed neuron in the non-convex model now
	corresponds to a unique pattern \( D_i \), the mapping to the convex
	program is given by
	\[
		\begin{aligned}
			v_{i} & =
			\begin{cases}
				W'_{1i} w'_{2i} & \mbox{if \( w'_{2i} \geq 0 \)} \\
				0               & \mbox{otherwise}
			\end{cases} \\
			u_{i} & =
			\begin{cases}
				-W'_{1i} w'_{2i} & \mbox{if \( w'_{2i} < 0 \)} \\
				0                & \mbox{otherwise},
			\end{cases}
		\end{aligned}
	\]
	Applying this procedure to \( \rbr{W_{1}, w_{2}} \) and \( \rbr{\tilde W_{1},
		\tilde w_{2}} \) and recalling that they map to the same solution implies
	\[
		W'_{1i} w'_{2i} = \tilde W'_{1i} \tilde w'_{2i},
	\]
	and \( \text{sign}(w'_{2i}) = \text{sign}(\tilde w'_{2i}) \)
	for every \( i \).
	Invoking \cref{lemma:scaling-unique} is sufficient to prove \( \rbr{W'_{1},
	w'_{2}} \) and \( \rbr{\tilde W_{1}', \tilde w_{2}'} \) are identical.

	To obtain this result, we (i) merged all collinear neurons with the same
	sign in the second layer and then (ii) sorted the neurons according to
	their activation patterns.
	Thus, it must be that \( \rbr{W_1, w_2} \) and \( \rbr{\tilde W_1, \tilde w_2} \)
	differ only by (i) splitting of neurons into collinear neurons and (ii)
	the ordering (i.e. permutation) of the neurons.
	This completes the proof.
\end{proof}

\reluSolFn*
\begin{proof}
	Let the proposed optimal set be the completion of
	\[
		\begin{aligned}
			\calU_\lambda =
			 & \big\{
			(W_1, w_2) :
			\, f_{W_1, w_2}(Z) =  \hat y,
			W_{1i} = (\sfrac{\alpha_{i}}{\lambda})^{\sfrac{1}{2}} v_i,        \\
			 & \hspace{1em} w_{2i} = \xi_i (\alpha_i \lambda)^{\sfrac{1}{2}},
			\alpha_i \geq 0, \, i \in [2p] \setminus \calS_\lambda \Rightarrow \alpha_i = 0
			\big\},
		\end{aligned}
	\]
	over all permutations of the neurons and splits/merges of collinear
	neurons.
	Let \( (W_1, w_2) \in \calU_\lambda \).
	Inverting \cref{eq:convex-to-relu}, we compute a candidate solution \( u \)
	to the convex reformulation as
	\[
		u_i = W_{1i} w_{2i} = \alpha_{i} v_i,
	\]
	for some \( \alpha_i \geq 0 \), where we assumed without loss of generality
	that \( w_{2i} \geq 0 \) (if \( \xi_i = -1 \), then we simply map to a
	``negative'' neuron in the convex reformulation).
	Moreover, \( \alpha_i = 0 \) if \( i \in [2p] \setminus \calS_\lambda \).
	Since the solution mapping preserves the prediction of the neural network,
	we also have \( X u = \hat y \), which is enough to guarantee \( u \)
	is a solution to the convex reformulation using \cref{prop:sol-fn-cgl}.
	Since \( u \) is a solution, we conclude that \( (W_1, w_2) \) must
	be optimal for the non-convex ReLU problem.

	For the reverse inclusion, let \( (W_1, w_2) \in \calO_\lambda \) and
	suppose no permutation or merge/split symmetry of \( (W_1, w_2) \)
	is in \( \calU_\lambda \).
	While \( (W_1, w_2) \not \in \calU_\lambda \), it still maps to
	some solution to the convex reformulation, call it \( u \).
	As \( \calU_\lambda \) is obtained by mapping every solution to the convex
	reformulation (i.e. every \( u \in \solfn_\lambda \) ) to the non-convex
	parameterization, it must be that \( u \) also maps to some
	\( (\tilde W_1, \tilde w_2) \) in \( \calU_\lambda \).
	\cref{lemma:mapping-equality} now shows that \( (W_1, w_2) \) and
	\( (\tilde W_1, \tilde w_2) \) are identical up to permutations and
	splits/merges of collinear neurons
	But, this implies \( (W_1, w_2) \in \calU_\lambda \) up to permutation
	and merge/split symmetries, which is a contradiction.
	We conclude that the optimal set is the completion of \( \calU_\lambda \) as
	claimed.
\end{proof}

\stationaryPoints*
\begin{proof}
	The proof is almost immediate.

	Given any sub-sampled set of activation patterns \( \tilde \calD \subset \calD_Z \),
	\citet[Theorem 3]{wang2021hidden} prove that the solutions to the sub-sampled
	convex program are Clarke stationary points \citep{clarke1990optimization} of
	the non-convex ReLU optimization problem in \cref{eq:non-convex-relu-mlp},
	and vice-versa.
	Using the expression for the CGL solution set in \cref{prop:sol-fn-cgl},
	which applies to sub-sampled convex reformulations as well as the full
	program, we obtain a version of \cref{cor:relu-sol-fn} for stationary
	points.
	That is, every model \( (W_1, w_2) \) in
	\begin{equation*}
		\begin{aligned}
			\calC_\lambda(\tilde \calD) = \big\{
			 & (W_1, \! w_2) :
			\, f_{W_1, w_2}(Z) \! = \! \hat y_{\tilde \calD},
			W_{1i} = (\sfrac{\alpha_{i}}{\lambda})^{\sfrac{1}{2}} v_i(\tilde \calD), \,
			w_{2i} = \xi_i (\alpha_i \lambda)^{\sfrac{1}{2}},                                           \\
			 & \hspace{4em} \alpha_i \geq 0, \, i \in [m] \setminus \calS_\lambda \implies \alpha_i = 0
			\big\},
		\end{aligned}
	\end{equation*}
	is a stationary point of the non-convex ReLU program.
	Taking the union over all sub-sampled sets of activation patterns
	gives \( \calC_\lambda \), which is guaranteed to contain every stationary
	point of the non-convex objective.
	This completes the proof.
\end{proof}

\pUnique*
\begin{proof}
	Since there is only one solution to the convex reformulation, all solutions to the
	non-convex training problem must map to that solution.
	\cref{lemma:mapping-equality} now implies that the solution map for the
	non-convex problem is p-unique, i.e. unique up to permutations and
	splits/merges of collinear neurons.
\end{proof}

\reluUnique*
\begin{proof}
	We assume without loss of generality that only indices from \( 1 \)
	to \( p \) are in \( \equi \).
	By \cref{lemma:unique-cgl}, the constrained group lasso
	admits a unique solution
	if and only if
	\[
		\bigcup_{i \in \equi} \cbr{D_i Z \vi},
	\]
	are linearly independent.
	We now show that this fact holds under the proposed sufficient condition
	by proving the stronger fact that
	\( \bigcup_{i \in \equi} \cbr{[D_i Z]_j : j \in [d]} \)
	are linearly independent with probability one, where
	\( [D_i Z]_j \) is the \( j^{\text{th}} \)
	column of \( D_i Z \).
	Since \( \nnz(D_i) \geq d * p \) and \( Z \) has a continuous
	probability distribution, it holds that
	\( [D_i Z]_j \) has at least
	\( d * p \) non-zero entries with probability 1.
	Let
	\[
		\calS_{ij}
		= \text{Span}\rbr{\bigcup_{i \in \equi} \cbr{[D_i Z]_j : j \in [d]}
			\setminus [D_i Z]_j},
	\]
	and observe that \( \text{dim}(\calS_{ij}) \leq d * p - 1 \).
	As a result, the conditional probability \( [D_i Z]_j \) falls
	in this subspace satisfies
	\[
		\Pr([D_i Z]_j \in \calS_{ij} | \bigcup_{i \in \equi} \cbr{[D_i Z]_j : j \in [d]}
		\setminus [D_i Z]_j) = 0.
	\]
	Taking expectations over the remaining vectors in \( Z \) implies
	\[
		\Pr([D_i Z]_j \in \calS_{ij}) = 0.
	\]
	Finally, using a union bound over \( i, j \) implies that
	\( \bigcup_{i \in \equi} \cbr{[D_i Z]_j : j \in [d]} \) are linearly independent
	almost surely.
\end{proof}

\minimalComplexity*
\begin{proof}
	The proof follows directly from existing results.

	Recall from \citet{pilanci2020convexnn} that there are at most
	\[
		p \in O\rbr{r \frac{n}{r}^{3r}},
	\]
	activation patterns in the convex reformulation and that the complexity
	of computing an optimal ReLU model using a standard interior-point solver
	is \( O(d^3 r^3 (n / r)^{3r}) \).

	We know from \cref{prop:pruning-correctness} that the complexity of
	pruning an optimal neural network with at most \( 2p \) neuron
	is \( O(n^3 p + nd) \).
	Combining these complexities, we find that the cost of optimization dominates
	and overall complexity of computing an optimal and minimal neural network
	grows as \( O(d^3 r^3 (n / r)^{3r}) \).

	Finally, the bound on the number of active neurons follows from the
	fact that \( \dim \Span(\cbr{(X W^*_{1i})_+}_i) \leq n \).
	This completes the proof.
\end{proof}

\minNormNN*
\begin{proof}
	Let \( (u, v) \) be an optimal solution to the convex reformulation.
	\citet{pilanci2020convexnn} show that an optimal solution to the original
	two-layer ReLU optimization problem is given by setting
	\[
		W_{1i} = \frac{u_i}{\sqrt{\norm{u_i}_2}}, w_{2i} = \sqrt{\norm{u_i}_2},
	\]
	and
	\[
		W_{1j} = \frac{v_i}{\sqrt{\norm{v_i}_2}}, w_{2j} = -\sqrt{\norm{v_i}_2},
	\]
	where we define \( \frac{0}{0} = 0 \).
	Then, the \( r \)-value of any such solution can be calculated as
	\begin{align*}
		r(W_1, w_2)
		 & = \sum_{i = 1}^m \norm{W_{1i}}_2^4 + \norm{w_{2i}}_2^4                                                  \\
		 & = \sum_{u_i \neq 0} \norm{\frac{u_i}{\sqrt{\norm{u_i}_2}}}_2^4 + \norm{\sqrt{\norm{u_i}_2}}_2^4
		+ \sum_{v_i \neq 0} \norm{\frac{v_i}{\sqrt{\norm{v_i}_2}}}_2^4 + \norm{\sqrt{\norm{v_i}_2}}_2^4            \\
		 & = \sum_{u_i \neq 0} \norm{u_i}_2^2 + \norm{u_i}_2^2 + \sum_{v_i \neq 0} \norm{v_i}_2^2 + \norm{v_i}^2_2 \\
		 & = 2 \norm{u}_2^2 + 2 \norm{v}_2^2.
	\end{align*}
	That is, \( r(W_{1}, w_2) \) is a monotone transformation of the Euclidean
	norm of \( \rbr{u, v} \).
	Moreover, all optimal ReLU networks up to permutation and neuron-split
	symmetries can be obtained by applying this mapping to the solution to a convex reformulation.
	We conclude that the minimum \( r \)-valued optimal ReLU network before any
	such symmetries  are applied is given by the minimum \( \ell_2 \)-norm
	solution to the convex reformulation.

	Now, clearly permutation symmetries have no affect on the \( r \) value.
	However, splitting neurons into multiple collinear neurons can, in fact,
	reduce \( r \).
	For example, consider splitting a neuron \( (W_{1i}, w_{2i}) \) into
	\( (W_{1i}', w_{2i}') \) and \( (W_{1j}', w_{2j}') \) such that
	\( W_{1i}' \) and \( W_{1j}' \) are collinear, \( \text{sign}(w_{2i}') =
	\text{sign}(w_{2j}') \), and
	\[
		W_{1i} = \frac{W_{1i}' w_{2i}' + W_{1j}' w_{2j}'}{\norm{W_{1i}' w_{2i}' + W_{1j}' w_{2j}'}_2^{1/2}},
		\quad \quad
		w_{2i} = \text{sign}(w_{2i})\norm{W_{1i}' w_{2i}' + W_{1j}' w_{2j}'}_2^{1/2},
	\]
	and \( \norm{W_{1i}'}_2 = \norm{w_{2i}'}_2 \),
	\( \norm{W_{1j}'}_2 = \norm{w_{2j}'}_2 \).
	This operation is known to preserve optimality \citep{mishkin2022convex}.
	However, the conditions on the split neurons also imply
	\begin{align*}
		\norm{W_{1i}}_2^4 + \norm{w_{2i}}_2^4
		 & = 2 \norm{W_{1i}' w_{2i}' + W_{1j}' w_{2j}'}_2^{2}                                   \\
		 & = 2 \norm{W_{1i}' w_{2i}'}_2^2 + 2 \norm{W_{1j}' w_{2j}'}_2^2
		+ 4 \abr{W_{1i}' w_{2i}', W_{1j}' w_{2j}'}                                              \\
		 & > 2 \norm{W_{1i}' w_{2i}'}_2^2 + 2 \norm{W_{1j}' w_{2j}'}_2^2                        \\
		 & = \norm{W_{1i}'}_2^4 + \norm{w_{2i}'}_2^4 + \norm{W_{1j}'}_2^4 + \norm{w_{2j}'}_2^4,
	\end{align*}
	where have used positive collinearity of \( W_{1i}' \) and \( W_{2i}' \).
	As a result, the \( r \)-value of a solution can be decreased by splitting
	neurons.
	Thus, the minimum-norm solution to the convex reformulation is only the
	minimum \( r \)-value optimal ReLU network out of all
	networks which admit no neuron-merge symmetries.
\end{proof}

\oneNeuron*
\begin{proof}
	\citet{mishkin2022convex} show that \autoref{eq:non-convex-relu-mlp} has the same global optimal values as
	\begin{equation}\label{eq:mip}
		\min_{v, \gamma \in \cbr{-1, 1}} \half \norm{\sum_{i=1}^p (X v_i)_+ \gamma - y} + \lambda \sum_{i = 1}^p \norm{w_i}_2.
	\end{equation}
	Moreover, the objective-preserving mapping
	\( W_{1i} = v_i / \sqrt{\norm{v_i}_2} \),
	\( w_{2i} = \gamma_i \sqrt{\norm{v_i}_2} \)
	can be used to obtain an optimal ReLU network from a solution to \cref{eq:mip}.
	We proceed by analyzing this equivalent problem and then use the
	mapping to return to the original non-convex formulation.

	Consider the case \( p = 1 \).
	Let \( X, y \) consist of two training points, \( (x_1, y_1) = (-100, 1) \) and \( (x_2, y_2) = (1, 10) \).
	In what follows, we drop the subscript for \( v \) and \( \gamma \) since \( p = 1 \) and we consider a one-dimensional.
	The optimization problem of interest is
	\begin{equation*}
		\min_{v, \gamma} \half \rbr{(x_1 v)_+ \gamma - y_1}^2 + \rbr{(x_2 v)_+ \gamma - y_2}^2 + \lambda \abs{v}.
	\end{equation*}
	Since \( x_1 < 0 \) and \( x_2 > 0 \), we can re-write this optimization problem as
	\[
		\min_{v, \gamma} \half \mathbbm{1}_{v \leq 0} \rbr{\rbr{x_1 v \gamma - y_1}^2 + y_2^2} + \mathbbm{1}_{v > 0}\rbr{\rbr{x_2 v \gamma - y_2}^2 + y_1^2} + \lambda \abs{v}.
	\]
	By inspection, we see that \( \gamma = +1 \) is optimal in both cases, leading to the following simplified expression:
	\[
		\min_{v} \half \mathbbm{1}_{v \leq 0} \rbr{\rbr{x_1 v - y_1}^2 + y_2^2} + \mathbbm{1}_{v > 0}\rbr{\rbr{x_2 v - y_2}^2 + y_1^2} + \lambda \abs{v}.
	\]
	This is a piece-wise continuous (but non-smooth) quadratic with a breakpoint at \( v^* = 0 \).
	We determine the solution to this minimization problem via a case analysis.

	\textbf{Case 1}: \( v^* = 0 \).
	Then, the optimal objective is trivially \( f(v^*) = y_1^2 + y_2^2 = 101 \).

	\textbf{Case 2}: \( v^* < 0 \).
	First order optimality conditions are
	\[
		x_1 \rbr{x_1 v_{-}^* - y_1} - \lambda = 0 \implies v_{-}^* = \frac{y_1 \cdot x_1 + \lambda}{x_1^2},
	\]
	which is valid only if \( \lambda < |y_1 \cdot x_1| = 100 \).
	The minimum objective value is then
	\begin{align*}
		\half \rbr{\rbr{x_1 v_{-}^* - y_1}^2 + y_2^2} - \lambda v_{-}^*
		 & = \rbr{\rbr{x_1 \frac{y_1 \cdot x_1 + \lambda}{x_1^2} - y_1}^2 + y_2^2} - \lambda \rbr{\frac{y_1 \cdot x_1 + \lambda}{x_1^2}} \\
		 & = \frac{\lambda^2}{x_1^2} + y_2^2 - \lambda \rbr{\frac{y_1 \cdot x_1 + \lambda}{x_1^2}}                                       \\
		 & = -\frac{\lambda y_1}{x_1} + y_2^2                                                                                            \\
		 & = \frac{\lambda}{100} + 100.
	\end{align*}

	\textbf{Case 3}: \( v_{+}^* > 0 \).
	Similarly to the previous case, we obtain
	\[
		x_2 \rbr{x_2 v_{+}^* - y_2} + \lambda = 0 \implies v_{+}^* = \frac{y_2 \cdot x_2 - \lambda}{x_2^2},
	\]
	which is valid only if \( \lambda < |y_2 \cdot x_2| = 10 \).
	In this case, the minimum objective is
	\begin{align*}
		\half \rbr{\rbr{x_2 v_{+}^* - y_2}^2 + y_1^2} + \lambda v_{+}^*
		 & = \rbr{\rbr{x_2 \frac{y_2 \cdot x_2 + \lambda}{x_2^2} - y_2}^2 + y_1^2} - \lambda \rbr{\frac{y_2 \cdot x_2 + \lambda}{x_2^2}} \\
		 & = \frac{\lambda^2}{x_2^2} + y_2^2 + \lambda \rbr{\frac{y_2 \cdot x_2 - \lambda}{x_2^2}}                                       \\
		 & = \frac{\lambda y_2}{x_2} + y_1^2                                                                                             \\
		 & = 10 \lambda + 1.
	\end{align*}

	To combine the cases, observe that
	\begin{align*}
		f(v_+^*) - f(v_-^*)
		 & = \frac{\lambda y_2}{x_2} + y_1^2 - \sbr{-\frac{\lambda y_1}{x_1} + y_2^2} \\
		 & = \lambda \rbr{\frac{x_1 y_2 + x_2 y_1}{x_1 x_2}} + y_1^2 - y_2^2          \\
		 & = \lambda \rbr{\frac{-100 (10) + 1 (1)}{-100 (1)}} + 1 - 100               \\
		 & = 9.99 \lambda - 99.
	\end{align*}
	We deduce that \( f(v_+^*) - f(v_-^*) > 0 \) (and thus \( v^* \leq 0 \))
	whenever \( \lambda > \frac{99}{9.99} \approx 10 \) and
	\( f(v_+^*) - f(v_-^*) > 0 < 0 \) otherwise.
	In this latter case, \( v^* \geq 0 \).

	Taking \( \lambda = 10 \), we find
	\[
		f(v_-^*) = 100 - 0.1 = 99.99 < 101 = y_1^2 + y_2^2 = f(0),
	\]
	so that \( v_-^* \) is optimal and \( v_-^* < 0 \).
	Moreover, \( v_-^* \) is strictly increasing as a function of \( \lambda \),
	for all \( \lambda > \frac{99}{9.99} \)
	so that \( v_-^* \) is optimal and strictly negative on the interval
	\( [\frac{99}{9.99}, 10] \).

	Now, consider \( \lambda = \frac{99}{9.99} - \epsilon \) to see that
	\[
		f(v_+^*) = \frac{990}{9.99} + 1 - 10 \epsilon < 101 = y_1^2 + y_2^2 = f(0),
	\]
	for all \( \epsilon > 0 \).
	We deduce that \( v_+^* \) is optimal for all \( \epsilon > 0 \) and
	thus \( v_+^* \) is optimal and strictly positive on the interval
	\( [0, \frac{99}{9.99}] \).

	To summarize, the solution function for this problem is as follows:
	\[
		\solfn(\lambda) =
		\begin{cases}
			\frac{\lambda}{100^2} - 0.01
			 & \mbox{if \( \lambda > \frac{99}{9.99} \)}  \\
			\cbr{\frac{\lambda}{100^2} - 0.01, 0.1 - \frac{\lambda}{100}}
			 & \mbox{if \( \lambda = \frac{99}{9.99} \)}  \\
			0.1 - \frac{\lambda}{100}
			 & \mbox{if \( \lambda < \frac{99}{9.99} \).}
		\end{cases}
	\]
	This point-to-set map is clearly not open:
	for every sequence \( \lambda_k \uparrow \frac{99}{9.99} \),
	\( v_k \in \solfn(\lambda_k) \) implies
	\( \lim_k v_k \neq \frac{\lambda}{100^2} - 0.01 \).
	Moreover, a similar result holds for limits from above.
	Finally, it is clear by inspection that the optimal model fit is not
	unique at \( \lambda = \frac{99}{9.99} \), cannot be continuous
	in the functional sense, and, since it is not open, also fails to be
	continuous in the sense of point-to-set maps.
\end{proof}

\section{Extension to General Losses}\label{app:general-losses}


In this section, we briefly discuss how to extend our results to general loss
functions.
Although we use the least-squares error throughout our derivations, this can be
generalized to a smooth and strictly convex loss function \( L : \R^n \times \R^n \into \R \)
without difficulty.
To do so, consider the more general problem,
\begin{equation}\label{eq:general-gl}
	\begin{aligned}
		p^*(\lambda) \!=\! \min_{w} \,
		 & F_\lambda(w) \!:=\! \half L(X w, y) + \lambda \sum_{\bi \in \calB} \norm{\wi}_2 \\
		 & \quad \text{s.t.} \quad \Ki^\top \wi \leq 0  \text{ for all } \bi \in \calB.
	\end{aligned}
\end{equation}
If \( L \) is strictly convex, then uniqueness of the
optimal model fit \( \hat y(\lambda) = X w \) follows from straightforward
adaption of \cref{lemma:unique-fit-cgl}.
Indeed, the only property of the squared-error used in this lemma is strict
convexity.

Since the model fit is constant in \( \solfn \) and \( L \) is both smooth and strictly convex,
the gradient \( \nabla_w L( X w, y) \), which is given by
\[
	X^\top \nabla_{\hat y} L( \hat y, y),
\]
must also be constant over \( \solfn \).
Thus, it is straightforward to replace the correlation vector \( \ci = \Xbi^\top (y - X w) \) with \( \Xbi^\top \nabla_{\hat y} L(\hat y, y) \)
throughout our derivations.

We form a Lagrange dual problem for CGL for one continuity-type result.
\cref{prop:model-fit-continuity} uses the Lagrange dual to
show that the correlation vector \( \ci \) is the unique solution to a convex
optimization program and applies standard sensitivity results to obtain continuity
of \( \hat y \).
In this same fashion, \( \Xbi^\top \nabla_{\hat y} L(\hat y, y) \) is the
unique solution to a Lagrange dual problem where the dual objective uses
the convex conjugate \( L^* \), rather than the dual of the quadratic penalty.
If \( \nabla_{\hat y} L(\hat y, y) \) is continuous in \( \hat y \), then
this is sufficient to deduce continuity of the model fit using the same
argument.

\section{Additional Experiments}\label{app:additional-experiments}


\begin{figure}[t]
	\centering
	\includegraphics[width=0.7\textwidth]{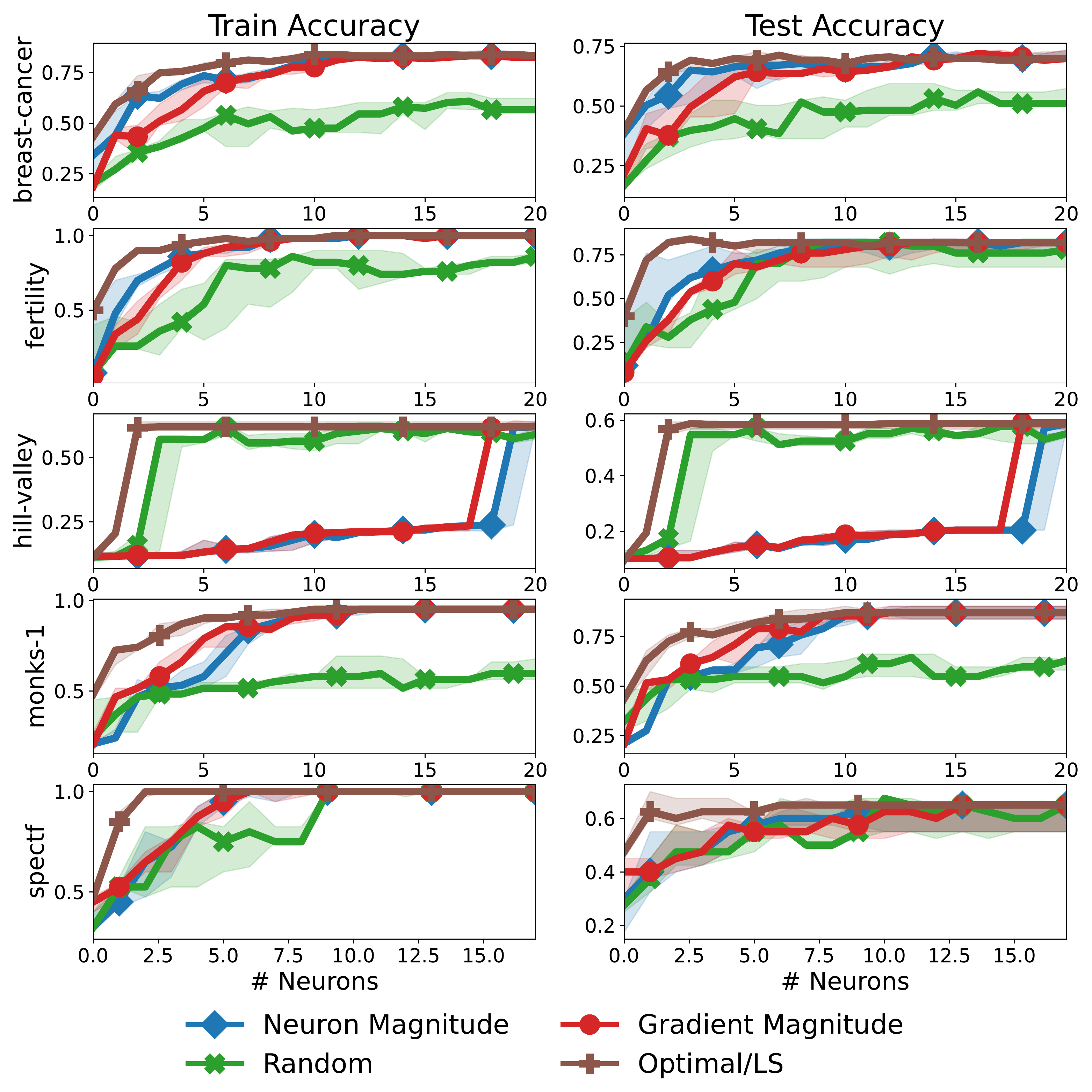}
	\caption{Pruning neurons on five datasets from the UCI repository.
		This figure extends \cref{fig:uci-pruning-acc} with training
		accuracy in addition to the test accuracies shown in the main paper.
	}
	\label{fig:uci-pruning-full}
\end{figure}

\begin{figure}[t]
	\centering
	\includegraphics[width=0.8\textwidth]{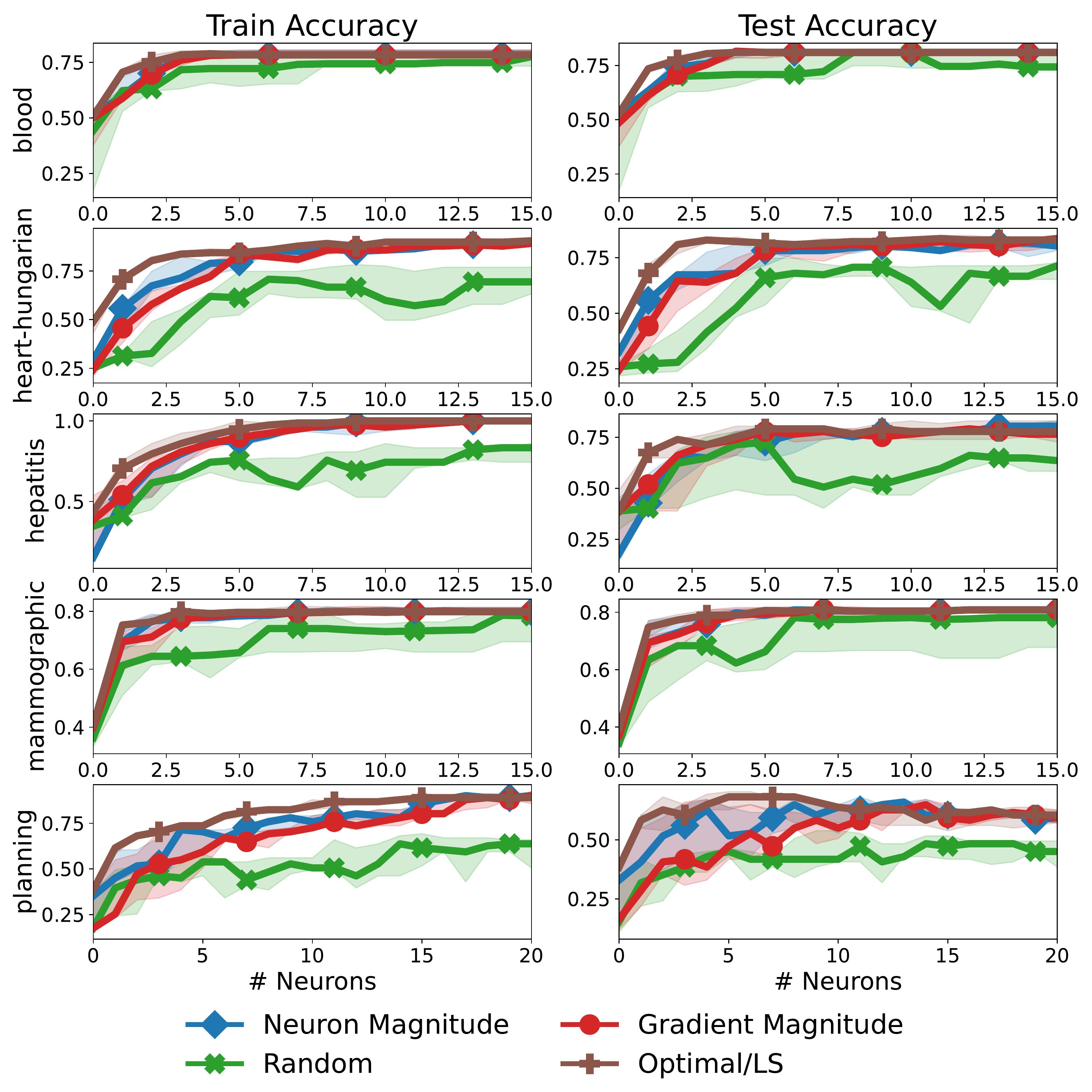}
	\caption{Pruning neurons on five additional datasets from the UCI repository.
		See \cref{fig:uci-pruning-acc} for details.
		Our method (\textbf{Optimal/LS}) preservers test accuracy for longer
		than the baseline methods, leading to compact models with better
		generalization.
	}
	\label{fig:uci-pruning-appendix}
\end{figure}

In this section we provide additional experimental results as well as
the necessary details to replicate our pruning experiments.
Code to replicate all of our experiments is provided at
\url{https://github.com/pilancilab/relu_optimal_sets}.

\subsection{Additional Results}

\begin{figure}[p]
	\centering
	\includegraphics[width=0.8\textwidth]{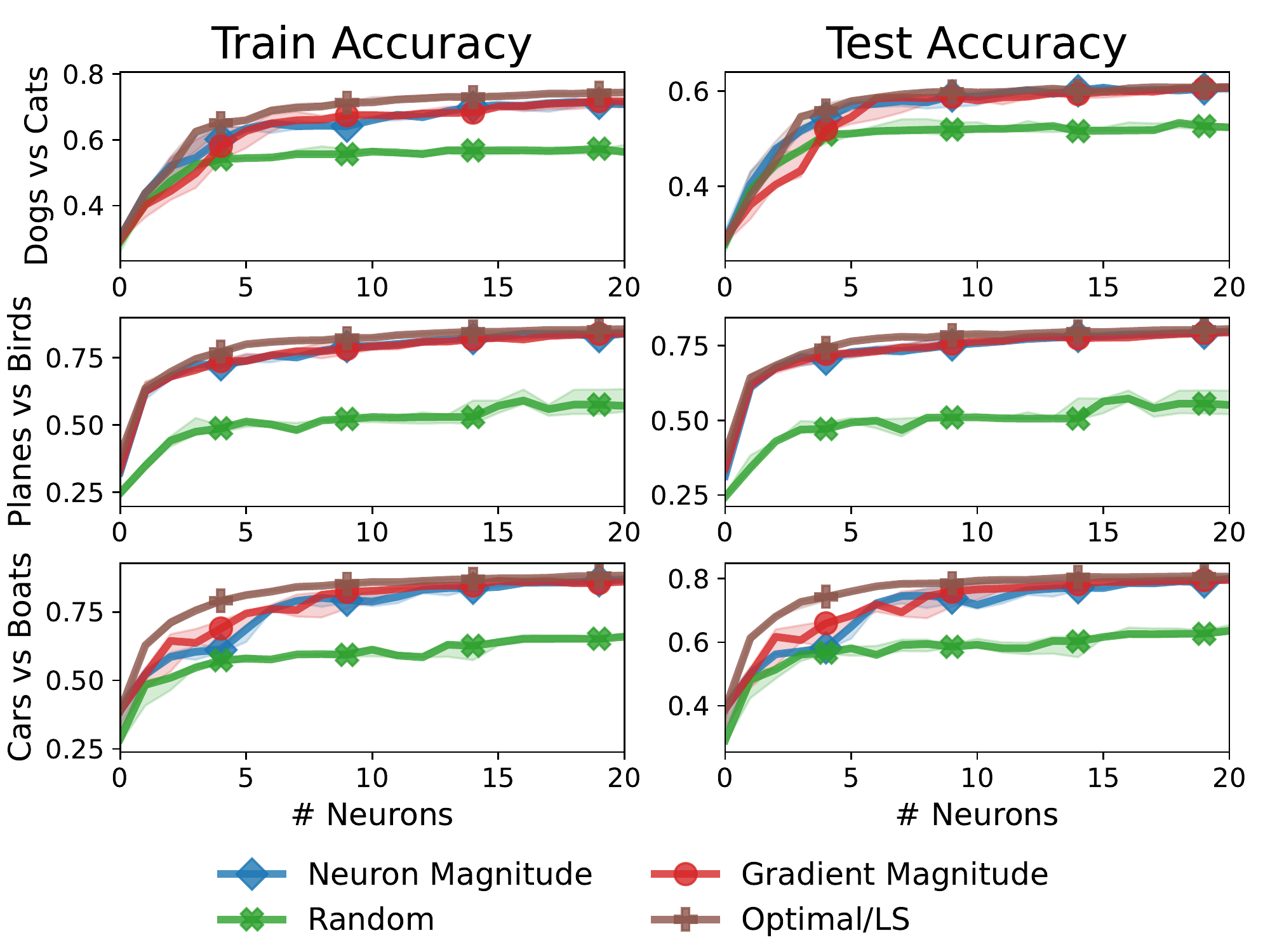}
	\caption{Pruning experiments on binary classification tasks
		from the CIFAR-10 dataset.
		This figure reproduces results from \cref{fig:cifar-pruning-acc}
		with training accuracies added and also includes results for
		an additional task,
		\texttt{cats vs dogs}, not presented in the main paper.
	}
	\label{fig:cifar-pruning-full}
\end{figure}

\begin{figure}[p]
	\centering
	\includegraphics[width=0.8\textwidth]{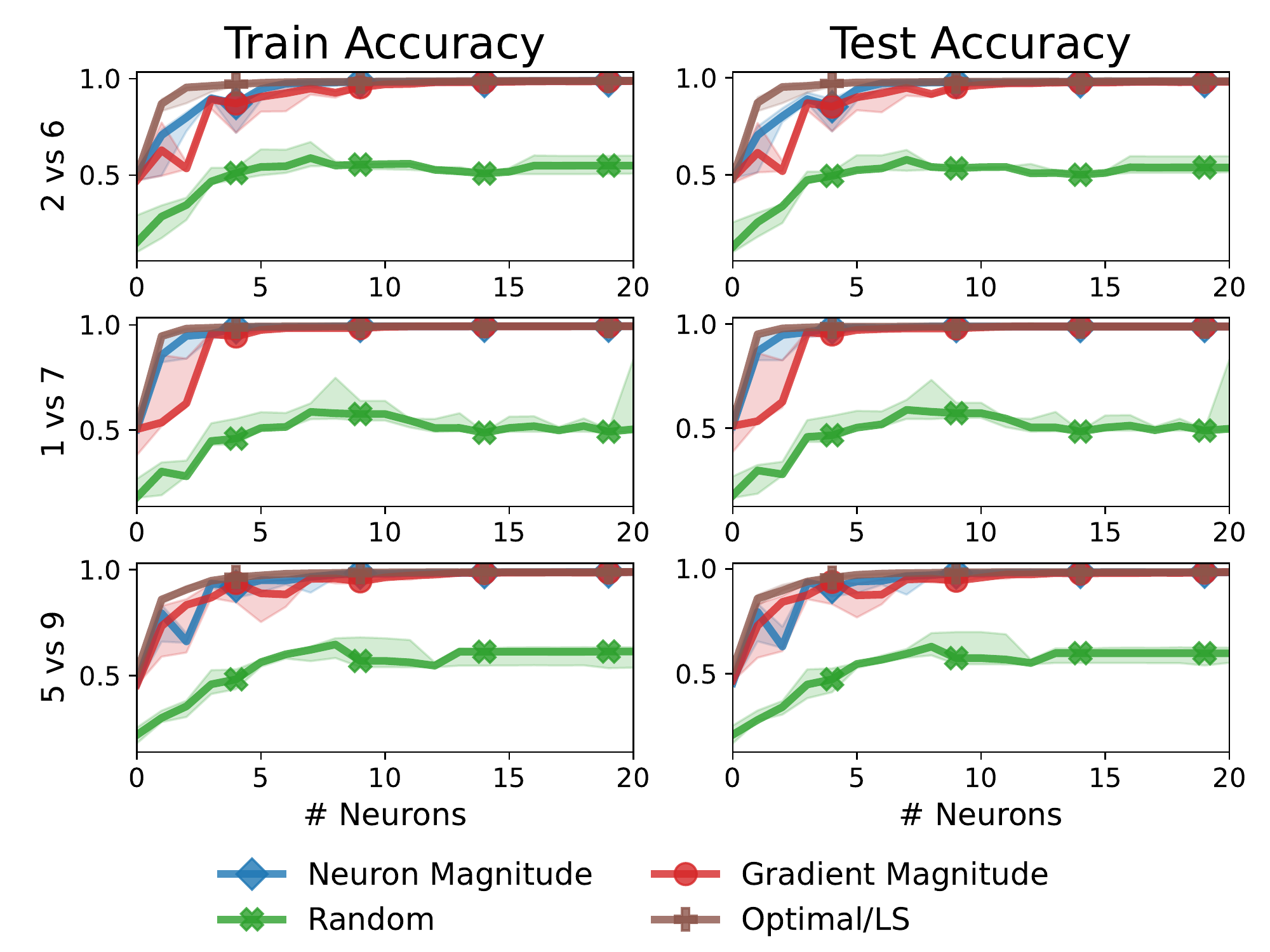}
	\caption{Pruning experiments on three binary classification tasks
		taken from MNIST.
	}
	\label{fig:mnist-pruning-full}
\end{figure}

\begin{table}[t]
	\centering
	\caption{Tuning neural networks by searching over the optimal set.
		We fit two-layer ReLU networks on the training set and
		compute the minimum \( \ell_2 \) norm solution (Min L$_2$).
		Then we tune by finding an extreme point approximating
		the maximum \( \ell_2 \)-norm solution (EP), minimizing validation
		MSE over the optimal set (V-MSE), and minimizing test MSE over
		the optimal set (T-MSE).
		Max Diff. reports the difference between the best and worse models
		found.
		For each method we give the median and interquartile range as
		\texttt{median (lower/upper)}.
	}
	\label{table:tuning-full}
	\vspace{0.1in}
	\begin{tabular}{lccccc} \toprule
		\textbf{Dataset}  & \textbf{Min L$_2$} & \textbf{EP}      & \textbf{V-MSE}   & \textbf{T-MSE}   & \textbf{Max Diff.} \\
		\midrule
		blood             & 0.72 (0.72/0.74)   & 0.72 (0.72/0.74) & 0.62 (0.61/0.62) & 0.7 (0.68/0.71)  & 0.1 (0.11/0.12)    \\
		breast-cancer     & 0.64 (0.61/0.65)   & 0.64 (0.61/0.65) & 0.61 (0.6/0.64)  & 0.71 (0.66/0.71) & 0.1 (0.06/0.08)    \\
		fertility         & 0.66 (0.62/0.7)    & 0.69 (0.62/0.69) & 0.65 (0.64/0.7)  & 0.64 (0.57/0.64) & 0.05 (0.06/0.06)   \\
		heart-hungarian   & 0.75 (0.7/0.77)    & 0.75 (0.7/0.77)  & 0.71 (0.56/0.72) & 0.85 (0.82/0.86) & 0.14 (0.26/0.14)   \\
		hepatitis         & 0.75 (0.74/0.78)   & 0.75 (0.74/0.78) & 0.73 (0.69/0.75) & 0.77 (0.77/0.9)  & 0.05 (0.08/0.15)   \\
		hill-valley       & 0.64 (0.64/0.65)   & 0.65 (0.64/0.65) & 0.64 (0.64/0.67) & 0.64 (0.64/0.65) & 0.0 (0.0/0.01)     \\
		mammographic      & 0.77 (0.77/0.77)   & 0.77 (0.77/0.77) & 0.57 (0.56/0.62) & 0.78 (0.78/0.8)  & 0.21 (0.22/0.18)   \\
		monks-1           & 0.67 (0.64/0.71)   & 0.66 (0.64/0.71) & 0.49 (0.48/0.51) & 0.57 (0.51/0.61) & 0.17 (0.15/0.2)    \\
		planning          & 0.53 (0.51/0.61)   & 0.52 (0.51/0.61) & 0.53 (0.52/0.53) & 0.7 (0.68/0.74)  & 0.17 (0.17/0.21)   \\
		spectf            & 0.64 (0.62/0.7)    & 0.64 (0.62/0.7)  & 0.56 (0.53/0.56) & 0.58 (0.56/0.66) & 0.08 (0.09/0.14)   \\
		horse-colic       & 0.75 (0.75/0.76)   & 0.59 (0.57/0.61) & 0.74 (0.73/0.75) & 0.85 (0.85/0.85) & 0.26 (0.27/0.24)   \\
		ilpd-indian-liver & 0.59 (0.57/0.6)    & 0.59 (0.57/0.6)  & 0.53 (0.53/0.57) & 0.72 (0.7/0.73)  & 0.19 (0.17/0.17)   \\
		parkinsons        & 0.74 (0.72/0.74)   & 0.74 (0.72/0.74) & 0.65 (0.65/0.74) & 0.88 (0.86/0.9)  & 0.23 (0.21/0.16)   \\
		pima              & 0.68 (0.66/0.68)   & 0.68 (0.66/0.68) & 0.68 (0.64/0.7)  & 0.87 (0.86/0.88) & 0.2 (0.22/0.19)    \\
		tic-tac-toe       & 0.98 (0.98/0.98)   & 0.76 (0.69/0.8)  & 0.98 (0.98/0.99) & 1.0 (1.0/1.0)    & 0.24 (0.31/0.2)    \\
		statlog-heart     & 0.71 (0.7/0.73)    & 0.71 (0.7/0.73)  & 0.7 (0.67/0.73)  & 0.84 (0.83/0.86) & 0.14 (0.17/0.13)   \\
		ionosphere        & 0.85 (0.83/0.86)   & 0.76 (0.73/0.76) & 0.84 (0.84/0.84) & 0.88 (0.88/0.89) & 0.12 (0.15/0.12)   \\
		\bottomrule
	\end{tabular}
\end{table}

\textbf{Tuning} \cref{table:tuning-full} shows results for our tuning
task on an additional 7 datasets, as well as the 10 given in \cref{table:tuning-small}.
We report the interquartile range as well as median test accuracies for each
method.
We observe similar results as presented in the main text.
Only one dataset (\texttt{tic-tac-toe}) shows no variation in test
accuracy as we explore the optimal set.

\textbf{Pruning}: \cref{fig:uci-pruning-full} shows train and test
accuracy for our
optimal/least-squares pruning method as well as magnitude/gradient-based pruning
and random pruning on the same five datasets from the UCI repository as
presented in \cref{fig:uci-pruning-acc}.
Our approach shows significantly less decay in train accuracy as neurons are
pruned; this matches the intuition of the least-squares heuristic for pruning,
which selects the coefficients \( \beta \) to best preserve the model fit.

We observe that both our pruning method and pruning by neuron/gradient norm
show very similar training accuracy until most of the neurons have been pruned.
While this behavior is expected from our theory-based approach, it is somewhat
surprising that pruning by neuron/gradient-norm also maintains train accuracy
nearly as well.
This behavior suggests that there are many neurons with very small norm which
can be eliminated without significantly affecting the model prediction.

\cref{fig:uci-pruning-appendix} presents results for neuron pruning on
five additional datasets from the UCI repository, while
\cref{fig:mnist-pruning-full} shows results for three binary classification
tasks taken from the MNIST dataset.
The trends are generally the same as in \cref{fig:uci-pruning-full},
with our approach (\textbf{Theory/LS}) outperforming the baselines.
Finally \cref{fig:cifar-pruning-full} extends the results on CIFAR-10
given in \cref{fig:cifar-pruning-acc} with one additional task
and with training accuracies.

\subsection{Experimental Details}

Now we give the details necessary to reproduce our experiments.
Our experiments use the pre-processed versions of UCI datasets provided by
\citet{delgado2014do}, but do we do not use their evaluation procedure as
it is known to have data leakage.

\subsubsection{Tuning}

We select 17 binary classification datasets from the UCI
repository.
For each dataset we use a random 60/20/20 split of the data
into train, validation, and test sets.
We use the commercial interior point method MOSEK \citep{aps2022mosek}
through the interface provided by CVXPY \citep{diamond2016cvxpy}
to compute the initial model which is then tuned.
We modify the tolerances of this method to use \( \tau = 10^{-8} \) for measuring
both primal convergence and violation of the constraints.
For each dataset, we use fixed \( \lambda = 0.001 \) and a maximum of 1000
neurons.
To compute the min \( \ell_2 \)-norm optimal model,
we use the MOSEK and the optimization problem given in \cref{prop:min-norm-program}.

To approximate the maximum \( \ell_2 \)-norm model, we solve the
following program:
\begin{equation*}
	\begin{aligned}
		\alpha^* = \argmax_{\alpha \geq 0} \sum_{\bi \in \calS_\lambda} \alpha_\bi
		\, \, \text{ s.t.} \, \,
		\sum_{\bi \in \calS_\lambda} \alpha_{\bi} \Xbi \vi = \hat y.
	\end{aligned}
\end{equation*}
This is a linear program which is straightforward to solve using
interior point methods.
Moreover, we have
\[
	\norm{w^*}_2^2 = \lambda^2 \sum_{\bi} (\alpha_\bi^*)^2,
\]
so that \( \sum_{\bi} \alpha_\bi \) acts as an approximation, where we recall
\( \alpha_\bi \geq 0 \) necessarily.

To tune each model with respect to the validation/test MSE, we solve the following optimization problem:
\[
	\min_{w} \cbr{ \half \norm{\tilde X w - \tilde y}_2^2 : w \in \solfn(\lambda) },
\]
with respect to the parameters of the convex formulation.
Here, \( (\tilde X, \tilde y) \) is either the validation or test
set.
We repeat each experiment five times with different random splits of the
data and random resamplings of \( 500 \) activation patterns \( D_i \).
This guarantees that each non-convex network has at most \( 1000 \) neurons
after optimization, although it may have less due to the sparsity inducing
penalty.

\subsubsection{Pruning}

\textbf{Methods}:
We use the augmented Lagrangian method of \citet{mishkin2022convex} to compute
the starting model which is then pruned.
We modify the tolerances of this method to use \( \tau = 10^{-8} \) for measuring
both primal convergence and violation of the constraints.

Pruning by neuron magnitude is straightforward: we sort the neurons
by \( \norm{W_{1i} w_{2i}}_2 \), which measures the total magnitude of the
neuron, and then drop the smallest one.
For pruning by gradient norm, we compute
\( G_{1i} = \nabla_{W_{1i}} L(f_{W_1, w_2}(Z), y) \),
\( g_{2i} = \nabla_{W_{2i}} L(f_{W_1, w_2}(Z), y) \) and then score each
neuron as
\[
	s_i = \norm{W_{1i} \cdot G_{1i} w_{2i} g_{2i}}_2,
\]
where \( \cdot \) indicates the element-wise product.
The neuron with smallest score is zeroed.
This is consistent the existing implementations of pruning by
gradient norm \citep{bialock2020pruning} and attempts to measure the
variation of a linearization of the loss in neuron \( i \).
For Random, we simply select a neuron from a uniform random distribution.

For Optimal/LS, we start by using \cref{alg:pruning-solutions} to prune
until no linear dependence exists amongst the neuron fits \( D_i Z W_{1i} \).
At this point, we choose \( \beta \) to minimize the squared-error
in the training fit.
We choose the neuron to prune by selecting the index that minimizing the residual
in the least-squares fit,
\[
	i_k = \argmin_{j} \min_{\beta}
	\half \norm{\sum_{i \neq j} \beta_i D_i Z w_i  - D_j Z w_j}.
\]
This produced the best result in all of our experiments, although you can
also select \( i_k \) using neuron magnitude or any other rule in the literature
on structured pruning.

\textbf{UCI Datasets}:
We select 10 moderately-sized binary classification datasets from the UCI repository.
For each dataset we use a random 50/50 split of the data
into train and test sets,
fixed \( \lambda = 0.01 \), and sample 25
activation patterns \( D_i \).
This results in a maximum of 50 neurons in each final model;
since the datasets are low dimensional, randomly sampling activation patterns
typically results in fewer than 50 neurons.
All results are repeated for five different random splits and we
plot the median and interquartile ranges of the results.

\textbf{MNIST and CIFAR-10}:
We select three binary classification tasks from each dataset such that
no task shares a target with another task.
For each dataset we use a random 50/50 split of the data
into train and test sets.
For MNIST, we use \( \lambda - 0.01 \), while we used \( \lambda = 0.05 \)
for CIFAR-10.
We sample 50 activation patterns \( D_i \) for each tasking,
which produces a maximum of 100 neurons in each final model.
All results are repeated for five different random splits and we
plot the median and interquartile ranges of the results.

\end{document}